\newcommand{\scll}{0.4}
\newcommand{\ct}{\citet}
\newcommand{\cp}{\citep}
\newcommand{\ca}{\citeauthor}
\newcommand{\cy}{\citeyear}
\newcommand{\ctp}[1]{\ca{#1}'s~\citeyearpar{#1}}
\newcommand{\cwp}[1]{\ca{#1}, \cy{#1}}
\newcommand{\kera}{\ensuremath{\kappa^{a}_{\tau}}}
\newcommand{\kerb}{\ensuremath{\bar{\kappa}_{\bar{\tau}}}}
\newcommand{\Sc}{\ensuremath{\mathbb{S}}}
\newcommand{\Sb}{\ensuremath{\bar{S}}}
\newcommand{\Sca}{\ensuremath{S^{a}}}
\newcommand{\Scb}{\ensuremath{S^{b}}}
\newcommand{\E}[1]{\ensuremath{E\left[#1\right]}}
\newcommand{\xia}{\ensuremath{{s}^{a}_{i}}}
\newcommand{\xja}{\ensuremath{{s}^{a}_{j}}}
\newcommand{\xka}{\ensuremath{{s}^{a}_{k}}}
\newcommand{\xta}{\ensuremath{{s}^{a}_{t}}}
\newcommand{\xla}{\ensuremath{{s}^{a}_{l}}}
\newcommand{\xib}{\ensuremath{{s}^{b}_{i}}}
\newcommand{\yia}{\ensuremath{\hat{{s}}^{a}_{i}}}
\newcommand{\yib}{\ensuremath{\hat{{s}}^{b}_{i}}}
\newcommand{\yka}{\ensuremath{\hat{{s}}^{a}_{k}}}
\newcommand{\yta}{\ensuremath{\hat{{s}}^{a}_{t}}}
\newcommand{\ria}{\ensuremath{{r}^{a}_{i}}}
\newcommand{\rja}{\ensuremath{{r}^{a}_{j}}}
\newcommand{\rka}{\ensuremath{{r}^{a}_{k}}}
\newcommand{\mk}{\ensuremath{\phi}}
\newcommand{\mkb}{\ensuremath{\bar{\phi}}}
\newcommand{\gauss}{\ensuremath{\mathrm{k}}}
\newcommand{\gaussa}{\ensuremath{\gauss_{\tau}}}
\newcommand{\gaussb}{\ensuremath{\bar{\gauss}_{\bar{\tau}}}}
\newcommand{\rs}{\ensuremath{\bar{s}}}
\newcommand{\Pban}{\matii{\bar{P}}{_{S_1}}{}}
\newcommand{\Pbann}{\matii{\bar{P}}{_{S_1 \cup S_2}}{}}
\newcommand{\Pta}{\matii{\tilde{P}}{a}{}}
\newcommand{\Pxat}{\matii{\check{P}}{a}{t}}
\newcommand{\R}{\ensuremath{\mathbb{R}}}
\newcommand{\mat}[1]{\ensuremath{\mathbf{#1}}}
\newcommand{\matii}[3]{\ensuremath{\mathbf{#1}^{#2}_{#3}}}
\renewcommand{\P}{\mat{P}}
\newcommand{\D}{\mat{D}}
\newcommand{\K}{\mat{K}}
\newcommand{\Dt}{\matii{D}{}{t}}
\renewcommand{\E}{\mat{E}}
\newcommand{\Pa}{\matii{P}{a}{}}
\newcommand{\Da}{\matii{D}{a}{}}
\newcommand{\Ka}{\matii{K}{a}{}}
\newcommand{\Kat}{\matii{K}{a}{t}}
\newcommand{\Dda}{\matii{\dot{D}}{a}{}}
\newcommand{\Kda}{\matii{\dot{K}}{a}{}}
\newcommand{\ra}{\matii{r}{a}{}}
\newcommand{\rba}{\matii{\bar{r}}{a}{}}
\newcommand{\rbat}{\matii{\bar{r}}{a}{t}}
\newcommand{\rta}{\matii{\tilde{r}}{a}{}}
\newcommand{\rxat}{\matii{\check{r}}{a}{t}}
\newcommand{\rban}{\matii{\bar{r}}{_{S_1}}{}}
\newcommand{\rbann}{\matii{\bar{r}}{_{S_1 \cup S_2}}{}}
\newcommand{\rca}{\matii{\hat{r}}{a}{}}
\newcommand{\rcat}{\matii{\hat{r}}{a}{t}}
\newcommand{\rxa}{\matii{\check{r}}{a}{}}
\newcommand{\Pb}{\mat{\bar{P}}}
\newcommand{\Pba}{\matii{\bar{P}}{a}{}}
\newcommand{\Pbat}{\matii{\bar{P}}{a}{t}}
\newcommand{\Pca}{\matii{\hat{P}}{a}{}}
\newcommand{\Pcat}{\matii{\hat{P}}{a}{t}}
\newcommand{\Pxa}{\matii{\check{P}}{a}{}}
\newcommand{\Pcb}{\matii{\hat{P}}{b}{}}
\renewcommand{\u}{\mat{u}}
\newcommand{\vo}{\matii{v}{*}{}}
\newcommand{\vbo}{\matii{\bar{v}}{*}{}}
\newcommand{\vto}{\matii{\tilde{v}}{*}{}}
\newcommand{\vho}{\matii{\check{v}}{*}{}}
\newcommand{\vco}{\matii{\hat{v}}{*}{}}
\newcommand{\vxo}{\matii{\check{v}}{*}{}}
\newcommand{\Qco}{\matii{\hat{Q}}{*}{}}
\newcommand{\Qxo}{\matii{\check{Q}}{*}{}}
\newcommand{\Q}{\mat{Q}}
\newcommand{\Qo}{\matii{Q}{*}{}}
\newcommand{\Rd}{\ensuremath{\Gamma}}
\renewcommand{\v}{\mat{v}}
\newcommand{\Ex}{\ensuremath{\Delta}}
\newcommand{\qa}{\matii{q}{a}{}}
\newcommand{\hT}{\ensuremath{\check{T}}}
\newcommand{\hEx}{\ensuremath{\check{\Delta}}}
\newcommand{\bT}{\ensuremath{\bar{T}}}
\newcommand{\La}{\matii{L}{a}{}}
\newcommand{\cE}{\ensuremath{\mathcal{E}}}
\renewcommand{\H}{\mat{H}}
\renewcommand{\t}{\mathrm{\intercal}}
\newcommand{\rdda}{\ensuremath{\bar{\bar{\mathbf{r}}}^{a}}}
\newcommand{\Pdda}{\ensuremath{\bar{\bar{\mathbf{P}}}^{a}}}
\newcommand{\vddo}{\ensuremath{\bar{\bar{\mathbf{v}}}^{*}}}
\newcommand{\ddM}{\ensuremath{\bar{\bar{M}}}}
\newcommand{\cM}{\ensuremath{\hat{M}}}
\newcommand{\xM}{\ensuremath{\check{M}}}
\newcommand{\cS}{\ensuremath{\hat{S}}}
\newcommand{\cQ}{\ensuremath{\hat{Q}}}
\newcommand{\cV}{\ensuremath{\hat{V}}}
\newcommand{\cT}{\ensuremath{\hat{T}}}
\newcommand{\tQ}{\ensuremath{\tilde{Q}}}
\newcommand{\tV}{\ensuremath{\tilde{V}}}
\newcommand{\ddT}{\ensuremath{\bar{\bar{{T}}}}}
\newcommand{\ddEx}{\ensuremath{\bar{\bar{\Delta}}}}
\newcommand{\vt}{\mat{\tilde{v}}}
\newcommand{\Qb}{\mat{\bar{Q}}}
\newcommand{\Qbo}{\matii{\bar{Q}}{*}{}}
\newcommand{\Qto}{\matii{\tilde{Q}}{*}{}}
\newcommand{\Qddo}{\bar{\bar{\mathbf{Q}}}^{*}}
\newcommand{\qbao}{\matii{\bar{q}}{a}{*}}
\newcommand{\qddao}{\bar{\bar{\mathbf{q}}}^{a}_{*}}
\newcommand{\wa}{\matii{w}{a}{}}
\renewcommand{\th}{\ensuremath{^{\mathrm{th}}}}
\newcommand{\ith}{\ensuremath{i^{\mathrm{th}}}}
\newcommand{\jth}{\ensuremath{j^{\mathrm{th}}}}
\newcommand{\ath}{\ensuremath{a^{\mathrm{th}}}}
\newcommand{\tth}{\ensuremath{t^{\mathrm{th}}}}
\newcommand{\mne}{\ensuremath{\eta_{\min}}}
\newcommand{\hM}{\ensuremath{\check{M}}}
\newcommand{\bM}{\ensuremath{\bar{M}}}
\newcommand{\norm}[1]{\ensuremath{\| #1 \|}}
\newcommand{\infnorm}[1]{\ensuremath{\left\| #1\right\|_{\infty}}}
\newcommand{\infnorms}[1]{\ensuremath{\| #1\|_{\infty}}}
\newcommand{\mmin}[1]{\underset{#1}{\operatorname{min}}\;}
\newcommand{\mmax}[1]{\underset{#1}{\operatorname{max}}\;}
\newcommand{\nnz}{\ensuremath{\times}}
\newcommand{\la}{\ensuremath{\leftarrow}}
\newcommand{\argmax}[1]{\ensuremath{\mathop{\mathrm{argmax}}_{#1}}}
\newcommand{\dista}{\ensuremath{\psi}}
\newcommand{\ikbsf}{$i$KBSF}
\newcommand{\Vpi}{\mbox{$V^{\pi}$}}
\newcommand{\vpi}{\matii{v}{\pi}{}}
\newcommand{\T}{\ensuremath{T}}
\newcommand{\Qpi}{\mbox{$Q^{\pi}$}}
\newcommand{\dif}{\ensuremath{\mathrm{dif}}}
\newcommand{\st}{\ensuremath{s_{(t)}}}
\newcommand{\stp}{\ensuremath{s_{(t+1)}}}
\newcommand{\rtp}{\ensuremath{r_{(t+1)}}}
\newcommand{\taub}{\ensuremath{\bar{\tau}}}
\newcommand{\aij}{\ensuremath{z}}
\newcommand{\mkp}[1]{\mkb\left(#1\right)}
\newcommand{\maxv}{\ensuremath{\bar{\varsigma}^{\aij}_{\max}}}
\newcommand{\minv}{\ensuremath{{\varsigma}^{\aij}_{\min}}}
\newcommand{\rr}{\mat{{r}}}
\newcommand{\difajik}{\ensuremath{\varsigma^{\aij}_{k}}}
\newcommand{\nk}{\ensuremath{\mu}}
\newcommand{\nkb}{\ensuremath{\bar{\mu}}}
\newcommand{\ts}{\hspace{0.3mm}}
\newcommand{\kerk}{\ensuremath{\bar{\kappa}_{0}}}
\newcommand{\gaussk}{\ensuremath{\bar{\gauss}_{0}}}
\newcommand{\reptheo}[2]{\noindent{\bf #1}{\it #2}\vspace{2mm}}
\newtheorem{definition}{Definition}
\newtheorem{proposition}{Proposition}
\newtheorem{lemma}{Lemma}
    \renewcommand*{\@fnsymbol}[1]{\ensuremath{\ifcase#1\or \dagger\or
\ddagger\or
       \mathsection\or \mathparagraph\or \|\or **\or \dagger\dagger
       \or \ddagger\ddagger \else\@ctrerr\fi}}
\long\def\symbolfootnote[#1]#2{\begingroup%
\def\thefootnote{\fnsymbol{footnote}}\thanks[#1]{#2}\endgroup}
\newcommand{\dims}{\ensuremath{d_{\Sc}}}
\newcommand{\kfunca}{\ensuremath{\hat{\mathcal{P}}_{\Sca}}}
\newcommand{\kfuncb}{\ensuremath{\bar{\mathcal{P}}_{\Sca}}}
\newcommand{\maxinf}[2]{\ensuremath{\mmax{a}{\infnorms{#1 - #2}}}}
\newcommand{\levelD}{\ensuremath{\sigma(\D)}}
\newcommand{\levelDc}{\ensuremath{\sigma(\Dt)}}
\newcommand{\boundV}{\ensuremath{\xi_{v}}}
\newcommand{\boundVV}{\ensuremath{\xi_{v}^{\prime}}}
\newcommand{\A}{\mat{A}}
\newcommand{\sprime}{\ensuremath{s^{_{\prime}}}}
\newcommand{\neib}{\ensuremath{{rs}}}
\newcommand{\ddb}{\ensuremath{{{dist}}}}
\begin{document}

\title{\vspace{-15mm}{\bf Practical Kernel-Based Reinforcement 
Learning}\thanks{
Parts of the material presented in this technical report have 
appeared before in two papers published in the {\sl Neural Information
Processing Systems}
conference~(NIPS,
\ca{barreto2011reinforcement}, \cy{barreto2011reinforcement}, 
\cy{barreto2012online}).
The current manuscript is a substantial extension of
the aforementioned works.}
}

\author{Andr\'{e} M. S. Barreto}
\affil{\vspace{-3mm} Laborat\'{o}rio Nacional de Computa\c{c}\~{a}o Cient\'{i}fica \\
Petr\'{o}polis, Brazil \vspace{-6mm}}

\author{Doina Precup}
\author{Joelle Pineau }
\affil{\vspace{-3mm}  McGill University \\ Montreal, Canada \vspace{-5mm}}


\date{}
\maketitle

\thispagestyle{empty}

\begin{abstract}
\emph{Kernel-based reinforcement learning} (KBRL) stands out
among approximate reinforcement learning algorithms for its strong theoretical
guarantees. By casting the learning problem as a local kernel approximation,
KBRL provides a way of computing a decision policy which
is statistically consistent and converges to a unique solution.
Unfortunately, the model constructed by KBRL grows with the number of sample
transitions, resulting in a computational cost that precludes 
its application to large-scale or on-line domains.
In this paper we introduce an algorithm that turns KBRL into a practical
reinforcement learning tool.
\emph{Kernel-based stochastic factorization} (KBSF) builds on a simple idea:
when a transition probability matrix is represented as the product of two
stochastic matrices, one can swap the factors of the multiplication to obtain
another transition matrix, potentially much smaller than the original, which
retains some fundamental properties of its precursor.
KBSF exploits such an insight to compress the information contained in KBRL's
model into an approximator of fixed size. This makes it possible to
build an approximation that takes into account both the difficulty of the
problem and the associated computational cost.
KBSF's computational complexity is linear in the number of sample
transitions, which is the best one can do without discarding data.
Moreover, the algorithm's simple mechanics allow for a fully incremental
implementation that makes the amount of memory used 
independent of the number of sample transitions.
The result is a kernel-based reinforcement learning algorithm that can be
applied to large-scale problems in both off-line and on-line regimes.
We derive upper bounds for the distance between the value
functions computed by KBRL and KBSF using the same data. We also prove that it
is possible to control the magnitude
of the variables appearing in our bounds, which means that, 
given enough computational resources, we can make KBSF's value function
as close as desired to the value function that would be
computed by KBRL using the same set of sample transitions.
The potential of our algorithm is demonstrated in
an extensive empirical study in which KBSF is applied to difficult tasks 
based on real-world data. 
Not only does KBSF solve problems that had never been solved before, 
it also significantly outperforms other 
state-of-the-art reinforcement learning algorithms on the tasks studied.
\end{abstract}

\section{Introduction}
\label{sec:introduction}

Reinforcement learning provides a conceptual framework with the potential to
materialize a long-sought goal in artificial intelligence: the construction of
situated agents that learn how to behave from direct interaction with the 
environment~\cp{sutton98reinforcement}.
But such an endeavor does not come without its challenges; among them, 
extrapolating the field's basic machinery to large-scale domains has been a
particularly persistent obstacle.

It has long been recognized that virtually any real-world application of
reinforcement learning must involve some form of approximation.
Given the mature stage of the supervised-learning theory, and 
considering the multitude of approximation techniques available today, 
this realization may not come across as a particularly worrisome issue at
first glance. However, it is well known that the sequential nature of the
reinforcement learning problem renders the incorporation of function
approximators non-trivial~\cp{bertsekas96neuro-dynamic}. 

Despite the difficulties,  in the last two decades
the collective effort of the reinforcement learning community has given rise to
many reliable approximate algorithms~\cp{szepesvari2010algorithms}.
Among them, \ctp{ormoneit2002kernelbased} kernel-based reinforcement learning
(KBRL) stands out for two reasons. First, unlike
other approximation schemes, KBRL always converges to a unique solution. Second,
KBRL is consistent in the statistical sense, meaning that adding more data
always improves the quality of the
resulting policy and eventually leads to optimal performance.

Unfortunately, the good theoretical properties of KBRL come at
a price: since the model constructed by this algorithm grows with the number of
sample transitions, the cost of computing a decision policy 
quickly becomes prohibitive as more data become available.
Such a computational burden severely limits the applicability of KBRL. 
This may help explain why, in spite of its nice theoretical guarantees,
kernel-based learning has not been widely adopted as a practical reinforcement
learning tool.

This paper presents an algorithm that can potentially change this situation.
\emph{Kernel-based stochastic factorization} (KBSF) builds on a simple idea:
when a transition probability matrix is represented as the product of two
stochastic matrices, one can swap the factors of the multiplication to obtain
another transition matrix, potentially much smaller than the original, which
retains some fundamental properties of its precursor~\cp{barreto2011computing}.
KBSF exploits this insight to compress the information contained in KBRL's
model into an approximator of fixed size. 
In other words, KBSF builds a model, whose size is independent of
the number of sample transitions, which serves as an approximation of 
the model that would be constructed by KBRL. Since 
the size of the model becomes a parameter of the algorithm, KBSF essentially
detaches the structure of KBRL's
approximator from its configuration. This extra flexibility makes it possible to
build an approximation that takes into account \emph{both} the difficulty of the
problem \emph{and} the computational cost of 
finding a policy using the constructed model.

KBSF's computational complexity is linear in the number of sample
transitions, which is the best one can do without throwing data away.
Moreover, we show in the paper that the amount of memory used by  our
algorithm is independent of the number of sample transitions. 
Put together, these two properties make it
possible to apply KBSF to large-scale problems in both off-line and on-line
regimes. To illustrate this possibility in practice, we present an
extensive empirical study in which KBSF is applied to difficult control tasks 
based on real-world data, some of which had never been solved 
before. KBSF outperforms least-squares policy iteration and fitted
$Q$-iteration on several off-line problems and SARSA on a difficult
on-line task.

We also show that KBSF is a sound algorithm from a theoretical point of view. 
Specifically, we derive results bounding the distance between the value
function computed by our algorithm and the one computed by KBRL
using the same data. We also prove that it is possible to control the magnitude
of the variables appearing in our bounds, which means that we can make the
difference between KBSF's and KBRL's solutions arbitrarily small.

We start the paper presenting some background material in
Section~\ref{sec:back}. Then, in Section~\ref{sec:sf}, we introduce the
stochastic-factorization trick, the insight underlying the development of our
algorithm. KBSF itself is presented in Section~\ref{sec:kbsf}. This section is
divided in two parts, one theoretical and one practical.
In Section~\ref{sec:theory_batch} we present theoretical results showing 
not only that the difference between KBSF's and KBRL's value functions
is bounded, but also that such a difference can be controlled.
Section~\ref{sec:empirical_batch} brings
experiments with KBSF on four reinforcement-learning problems: single and
double pole-balancing, HIV drug schedule domain, and epilepsy suppression task.
In Section~\ref{sec:ikbsf} we introduce the incremental version of our
algorithm, which can be applied to on-line problems.
This section follows the same structure of Section~\ref{sec:kbsf}, with
theoretical results followed by experiments. Specifically, in
Section~\ref{sec:theory_inc} we extend the results of
Section~\ref{sec:theory_batch} to the on-line scenario, and in
Section~\ref{sec:empirical_inc} we present experiments on the triple
pole-balancing and helicopter tasks. In Section~\ref{sec:discussion} we discuss
the impact of deviating from theoretical assumptions over KBSF's performance,
and also present a practical guide on how to configure our algorithm to solve a
reinforcement learning problem. In Section~\ref{sec:previous} we summarize
related works and situate KBSF in the context of kernel-based learning.
Finally, in Section~\ref{sec:conclusion} we present the main conclusions
regarding the current research and discuss some possibilities of
future work.

\section{Background}
\label{sec:back}

We consider the standard framework of reinforcement learning, in which an agent
interacts with an environment and tries to maximize the amount of
reward collected in the long run~\cp{sutton98reinforcement}. 
The interaction between agent and environment happens at
discrete time steps: at each instant $t$ the agent occupies a state 
$\st \in S$
and must choose an action $a$ from a finite set $A$. The sets $S$ and $A$ are
called the state and action spaces, respectively. 
The execution of action $a$ in state $\st$ moves the agent to a new
state $\stp$, where a new action must be selected, and so on. 
Each transition  has a certain probability of
occurrence and is associated with a reward 
$r \in \R$.  The goal of the agent is to find a policy $\pi:S \mapsto A$,
that is, a mapping from states to actions, that maximizes the expected
return. Here we define the return from time $t$ as:
\begin{equation}
\label{eq:sum_rd}
R_{(t)} = r_{(t+1)} + \gamma r_{(t+2)} + \gamma^{2} r_{(t+3)} +
... = \sum_{i=1}^{\infty} \gamma^{i-1} r_{(t+i)},
\end{equation}
where \rtp\ is the reward received at the transition from state $\st$ to state
$\stp$.
The parameter $\gamma \in [0,1)$ is the
discount factor, which determines the relative importance of individual
rewards depending on how far in the future they are received. 

\subsection{Markov Decision Processes}
\label{sec:mdp}

As usual, we assume that the interaction between agent and environment 
can be modeled as a \emph{Markov decision process} (MDP,
\cwp{puterman94markov}). 
An MDP is a tuple $M \equiv (S,A,P^{a},R^{a},\gamma)$, where $P^{a}$ and
$R^{a}$ describe the dynamics of the task at hand. For each action $a \in
A$, $P^{a}(\cdot|s)$ defines the next-state distribution upon taking action $a$
in state $s$. The reward received at transition $s \xrightarrow{a}
\sprime$ is given by $R^{a}(s,\sprime)$, with
$\left|R^{a}(s,\sprime)\right| \le R_{\max} < \infty$. Usually, one is
interested in the expected reward resulting from the execution of action
$a$ in state $s$, that is, 
$r^{a}(s) = E_{\sprime \sim P^{a}(\cdot|s)}\{R^{a}(s, \sprime)\}$.

Once the interaction between agent and environment has been modeled
as an MDP, a natural way of searching for an optimal policy is to
resort to \emph{dynamic programming}~\cp{bellman57dynamic}. 
Central to the theory of dynamic-programming is the concept of a 
\emph{value function}. The value of state
$s$ under a policy $\pi$, denoted by $\Vpi(s)$, is the expected return
the agent will receive from $s$ when following $\pi$, that is, 
$\Vpi(s) = E_{\pi}\{R_{(t)} | \st = s\}$
(here the expectation is over all possible sequences of rewards
in~(\ref{eq:sum_rd}) when the agent follows $\pi$).
Similarly, the value of the state-action pair $(s,a)$ under policy
$\pi$ is defined as 
$\Qpi(s,a) = E_{\sprime \sim P^{a}(\cdot|s)}\{R^{a}(s,\sprime) + \gamma \Vpi(\sprime)\}
= 
r^{a}(s) + \gamma E_{\sprime \sim P^{a}(\cdot|s)}\{\Vpi(\sprime)\}$.

The notion of value function makes it possible to impose a
partial ordering over decision policies. In particular, a policy $\pi'$ is
considered to be at least as good as another policy $\pi$ if
$V^{\pi'}(s) \ge V^{\pi}(s)$ for all $s \in S$.
The goal of dynamic programming is to find an optimal policy
$\pi^{*}$  that performs no worse than any other. It is well known that
there always exists at least one such policy for a given
MDP~\cp{puterman94markov}.
When there is more than one optimal policy, they all share the same
value function~$V^{*}$.

When both the state and action spaces are finite, an MDP can be
represented in matrix
form: each function $P^{a}$ becomes a matrix $\Pa \in \R^{|S| \times |S|}$, with
$p_{ij}^{a} = P^{a}(s_{j}|s_{i})$, and each function $r^{a}$ becomes a vector
$\ra \in \R^{|S|}$, where $r^{a}_{i}=r^{a}(s_{i})$. Similarly, $V^{\pi}$ can be
represented as a vector $\vpi \in \R^{|S|}$ and $Q^{\pi}$ can be seen as a
matrix $\matii{Q}{\pi}{} \in \R^{|S| \times |A|}$.
Throughout the paper we will use the conventional and matrix notations
interchangeably, depending on the context. When using the latter, 
vectors will be denoted by small boldface letters and matrices will be denoted
by capital boldface letters.

When the MDP is finite, dynamic
programming can be used to find an optimal decision-policy $\pi^{*} \in A^{|S|}$
in time polynomial in the number of states $|S|$ and actions 
$|A|$~\cp{ye2011simplex}. 
Let $\v \in \R^{|S|}$ and let $\Q \in \R^{|S| \times |A|}$. Define the
operator $\Rd: \R^{|S| \times |A|} \mapsto \R^{|S|}$ such that 
$\Rd\Q = \v$ if and only if 
$v_{i} = \max_{j} q_{ij}$ for all $i$. Also, 
given an MDP $M$, define 
$\Ex:  \R^{|S|} \mapsto \R^{|S| \times |A|}$ such that  
$\Ex\v = \Q$ if and only if  $q_{ia} = r^{a}_{i} + \gamma \sum_{j=1}^{|S|}
p^{a}_{ij} v_{j} \text{ for all } i \text{ and all } a$. 
The \emph{Bellman operator} of the MDP $M$ is given by $\T \equiv \Rd \Ex$.
A fundamental result in dynamic programming states that, starting from 
$\v^{(0)} = \mat{0}$, the expression 
$\v^{(t)} = \T \v^{(t-1)} = \Rd \Q^{(t)}$
gives the
optimal $t$-step value function, and as $t \rightarrow \infty$ 
the vector $\v^{(t)}$ approaches \vo. 
At any point, the optimal $t$-step policy can be obtained by selecting 
$\pi^{(t)}_{i} \in \argmax{j}{q^{(t)}_{ij}}$~\cp{puterman94markov}.

In contrast with dynamic programming, in reinforcement learning it is assumed
that the MDP is unknown,
and the agent must learn a policy based on transitions sampled from
the environment. 
If the process of learning a decision policy is based on a fixed set of
sample transitions, we call it \emph{batch} reinforcement learning. 
On the other hand, in
\emph{on-line} reinforcement learning the computation of a decision
policy takes place concomitantly with the collection of
data~\cp{sutton98reinforcement}.

\subsection{Kernel-Based Reinforcement Learning}
\label{sec:kbrl}

Kernel-based reinforcement learning (KBRL) 
is a batch algorithm that uses a finite model approximation
to solve a continuous MDP  $M \equiv (\Sc, A, P^{a}, R^{a}, \gamma)$, where $\Sc
\subseteq [0,1]^{\dims}$~\cp{ormoneit2002kernelbased}. 
Let $\Sca \equiv \{(\xka,\rka,\yka)| k = 1, 2, ..., n_{a}\}$
be sample transitions associated with action $a \in A$, where $\xka, \yka \in
\Sc$ and $\rka \in \R$.
Let $\mk: \R^{+} \mapsto \R^{+}$ be a Lipschitz continuous function
satisfying $\int_{0}^{1} \mk(x) dx = 1$.
Let $\gaussa(s,\sprime)$ be a kernel function defined as
\begin{equation}
\label{eq:gaussa}
\gaussa(s,\sprime) = 
{\mk \left(\frac{\norm{s - \sprime}}{ \tau} \right)},
\end{equation}
where $\tau \in \R$ and  
$\norm{\cdot}$ is a norm in $\R^{\dims}$ (for concreteness, the reader may
think of $\gaussa(s,\sprime)$ as the Gaussian
kernel, although the definition also encompasses other functions). 
Finally, define the normalized kernel function associated with action $a$ as
\begin{equation}
\label{eq:kera}
\kera(s,s_{i}^{a}) = 
\frac{\gaussa(s, s^{a}_{i})}{\sum_{j=1}^{n_{a}}
\gaussa(s, s^{a}_{j})} .
\end{equation}
KBRL uses~(\ref{eq:kera}) to build a finite MDP whose state space $\cS$ is 
composed solely of the $n = \sum_{a} n_{a}$ states
\yia\ (if a given state $s \in \Sc$ occurs more than
once in the set of sample transitions, each occurrence will be treated as a
distinct state in the finite MDP). The transition functions of KBRL's model, 
$\hat{P}^{a}:\cS \times \cS \mapsto [0,1]$,
are given by:
\begin{equation}
\label{eq:mdp_kbrl_P}
\hat{P}^{a}\left(\yib|s\right) = 
\left\{\begin{array}{l}
       \kera(s, \xib), \text{ if } a = b,\\
		 0, \mbox{ otherwise,}
      \end{array}
\right.
\end{equation}
where $a,b \in A$. Similarly, the reward functions of the MDP constructed
by KBRL, $\hat{R}^{a}: \cS \times \cS \mapsto \R$, are
\begin{equation}
\label{eq:mdp_kbrl_R}
\hat{R}^{a}(s, \yib) = 
\left\{\begin{array}{l}
       \ria, \mbox{ if } a = b,\\
		 0, \mbox{ otherwise. }
      \end{array}
\right.
\end{equation}
Based on~(\ref{eq:mdp_kbrl_P}) and~(\ref{eq:mdp_kbrl_R}) we can define the 
transition matrices and expected-reward vectors of KBRL's MDP.
The matrices \Pca\ are derived directly from the definition of 
$\hat{P}^{a}(\yib|s)$. The vectors of expected rewards \rca\ are computed as
follows.
Let $\rr \equiv [(\matii{{r}}{1}{})^{\t},
\matii{({r}}{2}{})^{\t}, ..., (\matii{{r}}{|A|}{})^{\t}]^{\t} \in \R^{n}$,
where $\ra \in \R^{n_{a}}$ are the vectors composed of the sampled rewards
$r^{a}_{i}$.
Since $R^{a}(s,\yib)$ does not depend on the start state $s$,
we can write
\begin{equation}
\label{eq:rca}
\rca = \Pca \rr.
\end{equation}
KBRL's MDP is thus given by $\cM \equiv (\hat{S}, A, \Pca, \rca, \gamma)$.

Once \cM\ has been defined, one can use dynamic programming to compute its
optimal value function $\hat{V}^{*}$.
Then, the value of any state-action pair of the continuous MDP can be determined
as: 
\begin{equation}
\label{eq:kbrl_q}
\cQ(s,a) = \sum_{i=1}^{n_{a}} \kera(s, \xia) \left[\ria +
\gamma \hat{V}^{*}(\yia) \right],
\end{equation}
where $s \in \Sc$ and $a \in A$.
\ct{ormoneit2002kernelbased} have shown that, if $n_{a} \rightarrow
\infty$ for all $a \in A$ and the kernel's width $\tau$ shrink at an
``admissible'' rate, the probability of choosing a suboptimal action based
on $\cQ(s,a)$ converges to zero (see their Theorem~4).

As discussed, using dynamic programming one can compute the optimal value
function of \cM\ in time polynomial in the number of sample transitions $n$
(which is also the number of states in \cM).
However, since each application of the Bellman operator $\cT$ is $O(n^2|A|)$,
the computational cost of such a procedure can easily become
prohibitive in practice. 
Thus, the use of KBRL leads to a dilemma: on the one hand one wants as much data
as possible to
describe the dynamics of the task, but on the other hand
the number of transitions should be small enough to allow for the numerical
solution of the resulting model. In the following sections we describe 
a practical approach to weight the relative
importance of these two conflicting objectives.

\section{Stochastic Factorization}
\label{sec:sf}

A \emph{stochastic matrix} has only nonnegative elements and each of its rows
sums to $1$. That said, we can introduce the concept that will serve as a
cornerstone for the rest of the paper:

\begin{definition}
 Given a stochastic matrix $\P \in \R^{n \times p}$, the relation $\P=\D\K$
is called a \emph{stochastic factorization} of \P\ if $\D \in \R^{n \times m}$
and $\K \in \R^{m \times p}$ are also stochastic matrices. The integer $m > 0$
is the \emph{order} of the factorization.
\end{definition}

This mathematical construct has been explored before.
For example,~\ct{cohen91nonnegative} briefly discuss it as a special case of
nonnegative matrix factorization,
while~\ct{cutler94archetypal} focus on slightly modified
versions of the stochastic factorization for statistical data analysis. However,
in this paper we will focus on a useful property of this type of factorization
that has only recently 
been noted~\cp{barreto2011computing}.

\subsection{Stochastic-Factorization Trick}

Let $\P \in \R^{n \times n}$ be a transition matrix, that is, a
square stochastic matrix, and let $\P=\D\K$ be an order $m$ 
stochastic factorization. In this case, one can see the elements of \D\ and \K\
as probabilities of transitions between the states $s_{i}$ and  a set of $m$
artificial states $\bar{s}_{h}$. Specifically, the elements in
each row of \D\ can be interpreted as probabilities of transitions from the 
original states to the artificial states, while the rows of \K\
can be seen as probabilities of transitions in the opposite
direction.
Under this interpretation, each element 
$p_{ij}=\sum_{h=1}^{m} d_{ih} k_{hj}$ is the sum of the probabilities associated
with $m$ two-step transitions: from state $s_i$ to each artificial state 
$\bar{s}_h$ and
from these back to state $s_{j}$. In other words, $p_{ij}$ is the accumulated
probability of all possible paths from $s_{i}$ to $s_{j}$ with a stopover in one
of the artificial states $\bar{s}_{h}$. Following similar reasoning, it is not
difficult to see that by \emph{swapping} the factors of a stochastic
factorization, that is, by
switching from $\D\K$ to $\K\D$, one obtains the transition probabilities
between the artificial states $\bar{s}_{h}$, $\mat{\bar{P}}=\K\D$. If $m < n$, 
$\Pb \in \R^{m \times m}$ will be a compact version of \P. 
Figure~\ref{fig:mat_trans} illustrates this idea for the
case in which $n=3$ and $m=2$.

\begin{figure*}
    \centering
    \begin{tabular}{cccc}
	 \includegraphics[scale=0.35]{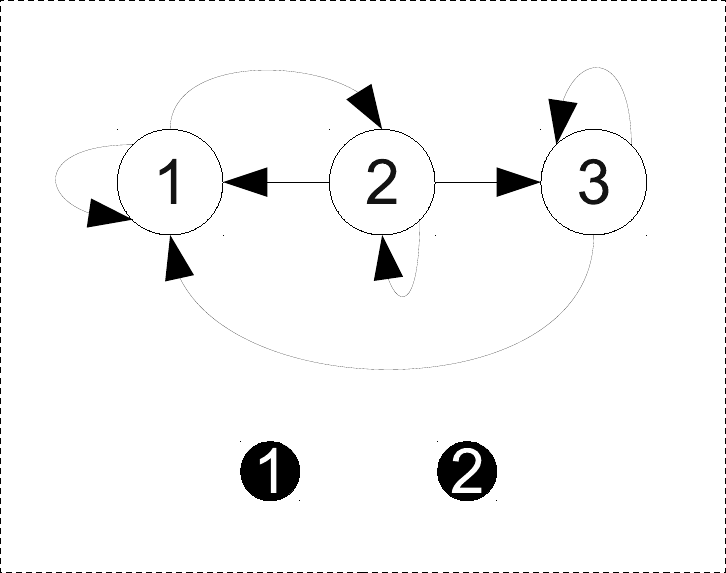} &
	 \includegraphics[scale=0.35]{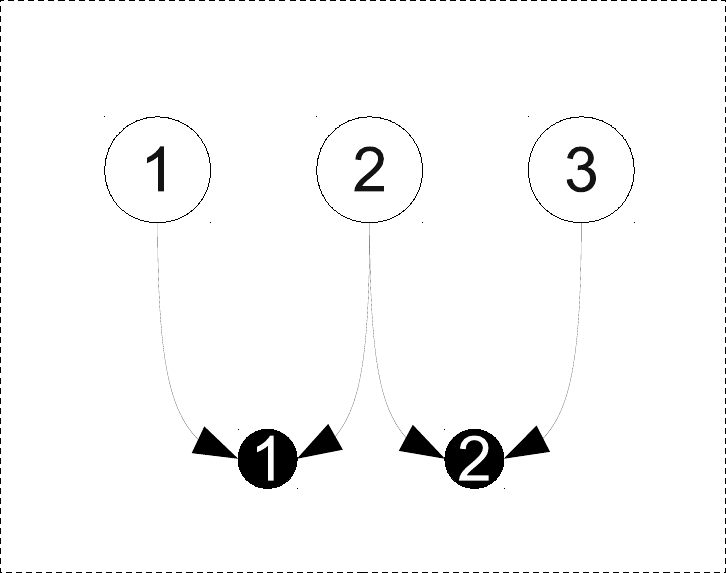} &
	 \includegraphics[scale=0.35]{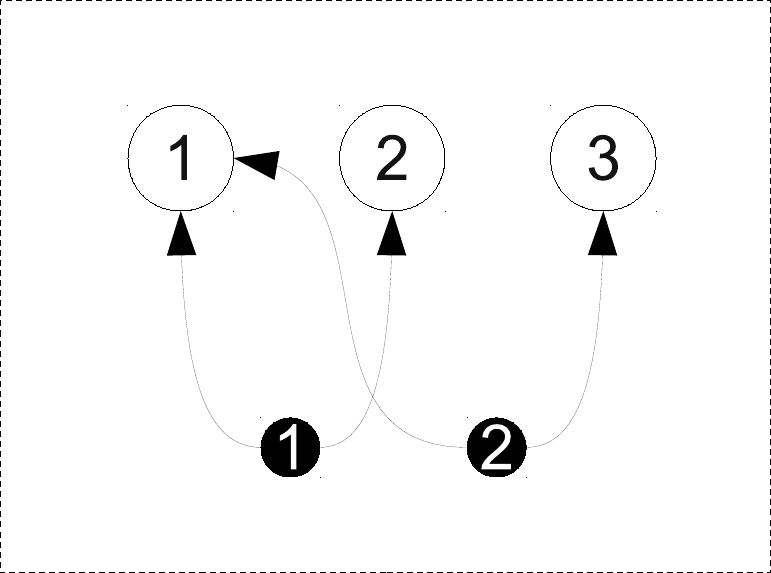} &
     	 \includegraphics[scale=0.35]{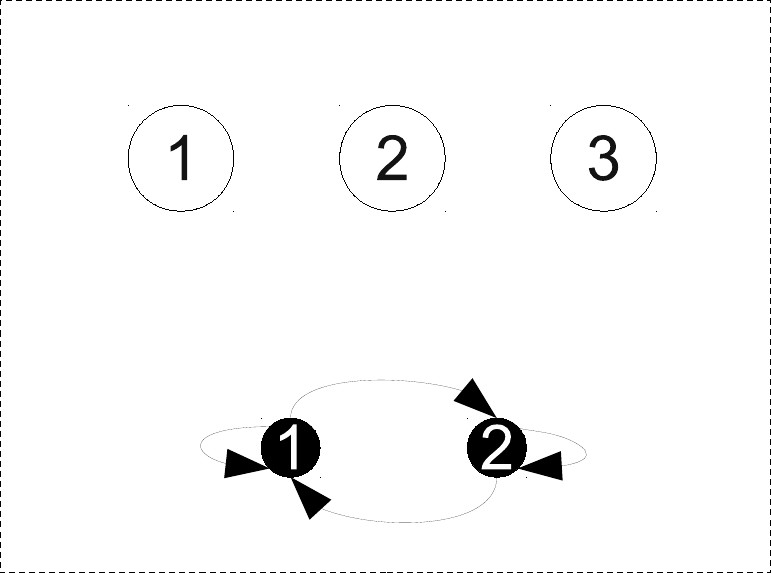} \\ \\
	  $ \P = \left[\begin{array}{ccc}
	             \nnz & \nnz & 0 \\
	             \nnz & \nnz & \nnz \\
	             \nnz & 0 & \nnz \\
	      \end{array}\right]$ &
	   $ \D = \left[\begin{array}{cc}
	             \nnz & 0 \\
	             \nnz & \nnz  \\
	              0 &  \nnz \\
	      \end{array}\right]$ &
	  $ \K = \left[\begin{array}{ccc}
	             \nnz & \nnz & 0 \\
	             \nnz & 0 & \nnz \\
	      \end{array}\right]$ &
	  $ \Pb = \left[\begin{array}{cc}
	             \nnz & \nnz  \\
	             \nnz & \nnz \\
	      \end{array}\right]$ \\
    \end{tabular}
\caption{Reducing the dimension of a transition model from $n=3$ states to $m=2$
artificial states. The original states $s_{i}$ are represented as big white
circles; small black circles depict artificial states $\rs_{h}$. The symbol
`$\nnz$' is used to represent nonzero elements. \label{fig:mat_trans}}
\end{figure*}

The stochasticity of \Pb\ follows immediately from the same property of \D\ and
\K. What is perhaps more surprising is the fact that this matrix 
shares some fundamental characteristics with the original matrix \P.
Specifically, it is possible to show that: $(i)$ for each
recurrent class in \P\ there is a corresponding class in \Pb\ with the same
period and, given some simple assumptions about the factorization, $(ii)$ 
\P\ is irreducible if and only if \Pb\ is irreducible and $(iii)$ 
\P\ is regular if and only if \Pb\ is regular (for details, see
the article by~\cwp{barreto2011computing}).
We will refer to this insight as the
``\emph{stochastic-factorization trick}'':

\begin{center}
\fbox{\begin{minipage}{\columnwidth}{\sl
Given a stochastic factorization of a transition matrix, $\P=\D\K$, 
\emph{swapping} the factors of the factorization yields another transition
matrix $\Pb = \K\D$, potentially much smaller than the original, which retains 
the basic topology and properties of \P.
}\end{minipage}}\vspace{1mm} 
\end{center}

Given the strong connection between $\P \in \R^{n \times n}$ and 
$\Pb \in \R^{m \times m}$, the idea of replacing the former by the
latter comes almost inevitably. The motivation for this would
be, of course, to save computational resources when $m < n$. 
For example, \ct{barreto2011computing} have shown that it is possible to
recover the stationary distribution of \P\ through a linear transformation of
the corresponding distribution of \Pb. In this paper we will use the
stochastic-factorization trick to reduce the computational
cost of KBRL. The strategy will be to summarize the information contained in
KBRL's MDP in a model of fixed size.

\subsection{Reducing a Markov Decision Process}
\label{sec:red_mdp}

The idea of using stochastic factorization to reduce 
dynamic programming's computational requirements is straightforward:
given factorizations of the transition matrices \Pa, we can apply our
trick to obtain a reduced MDP that will be solved 
in place of the original one. In the most general scenario, we would have one 
independent factorization $\Pa = \Da\Ka$ for each action $a \in A$.
However, in the current work we will focus on the particular case 
in which there is a single matrix \D, which will
prove to be convenient both mathematically and computationally.

Obviously, in order to apply the stochastic-factorization trick to an MDP, we
have to first \emph{compute} the matrices involved in the factorization.
Unfortunately, such a procedure can be computationally demanding, exceeding the
number of operations necessary to calculate
\vo~\cp{vavasis2009complexity,barreto2012policy}. 
Thus, in practice we may have to replace the exact 
factorizations $\Pa = \D\Ka$ with
approximations $\Pa \approx \D\Ka$. The following proposition bounds the error
in the value-function approximation resulting from the application of
our trick to approximate stochastic 
factorizations:

\begin{proposition}
\label{teo:bound_sf}
Let $M \equiv(S,A,\Pa,\ra,\gamma)$ be
a finite MDP with $|S| = n$ and 
$0 \le \gamma < 1$. Let $\D \in \R^{n \times m}$ be a stochastic matrix
and, for each $a \in A$, let $\Ka \in  \R^{m \times n}$ be stochastic 
and let $\rba$ be a vector in $\R^{m}$. Define the MDP $\bM
\equiv(\Sb,A,\Pba,\rba,\gamma)$, with 
$|\Sb| = m$ and $\Pba = \Ka \D$. Then,
{\small
 \begin{equation}
\label{eq:bound_sf}
\infnorm{\vo - \Rd\D\Qbo} \le
\boundV\equiv 
\frac{1}{1-\gamma} 
\maxinf{\ra}{\D\rba} +
\frac{\bar{R}_{\dif}}{(1-\gamma)^{2}} 
\left(
\frac{\gamma}{2}
\maxinf{\Pa}{\D\Ka}
+
\levelD   
\right),
\end{equation}
}
where
$\infnorm{\cdot}$ is the maximum norm, 
$\bar{R}_{\dif} = \mmax{a,i}{\bar{r}^{a}_{i}} - \mmin{a,i}{\bar{r}^{a}_{i}}$,
and $\levelD = \mmax{i}{(1 - \mmax{j}{d_{ij})}}$.\footnote{We recall that
$\infnorm{\cdot}$ induces the following norm over the space of matrices:
$\infnorm{\mat{A}} = \mathop{\max}_{i} \sum_{j} |a_{ij}|$.}
\end{proposition}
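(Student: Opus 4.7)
The plan is to introduce an auxiliary ``lifted'' MDP $\tilde{M} \equiv (S, A, \Pta, \rta, \gamma)$ on the original $n$-state space with $\Pta \equiv \D\Ka$ and $\rta \equiv \D\rba$, and then decompose the error by the triangle inequality:
\[
\infnorm{\vo - \Rd\D\Qbo} \;\le\; \infnorm{\vo - \vto} \;+\; \infnorm{\vto - \Rd\D\Qbo},
\]
where $\vto$ denotes the optimal value function of $\tilde M$. The first summand measures how well the lifted factorisation approximates $M$, and the second captures the defect caused by ``swapping'' $\D$ with $\Rd$ when a value function on $S$ is extracted from the reduced model $\bM$.

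For $\infnorm{\vo - \vto}$, I would invoke a Whitt/M\"uller-style MDP perturbation bound in its span-refined form,
\[
\infnorm{\vo - \vto} \;\le\; \frac{\max_a \infnorms{\ra - \rta}}{1-\gamma} \;+\; \frac{\gamma\, \mathrm{span}(\vto)}{2(1-\gamma)}\, \max_a \infnorms{\Pa - \Pta},
\]
together with the estimate $\mathrm{span}(\vto) \le \bar{R}_{\dif}/(1-\gamma)$. The latter holds because each entry of $\rta = \D\rba$ is a convex combination of entries of $\rba$, so its span is at most $\bar{R}_{\dif}$, which the usual reward-span argument propagates to $\vto$. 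Substituting $\rta = \D\rba$ and $\Pta = \D\Ka$ then recovers the first two summands of $\boundV$ with the correct $\gamma/2$ and $\bar{R}_{\dif}/(1-\gamma)^2$ constants.

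For $\infnorm{\vto - \Rd\D\Qbo}$, the key algebraic observation is that the reduced Bellman equation gives $\qbao = \rba + \gamma\, \Ka \D\, \vbo$ for each $a$, hence $\D\qbao = \D\rba + \gamma\, \D\Ka \D\, \vbo$. But this right-hand side is precisely the Bellman backup of $\tilde M$ applied to $\D\vbo$, so stacking over actions and taking $\Rd$ yields $\Rd\D\Qbo = \tilde T(\D\vbo)$, where $\tilde T$ is $\tilde M$'s Bellman operator. Because $\tilde T$ is a $\gamma$-contraction in $\infnorm{\cdot}$ with fixed point $\vto$,
\[
\infnorm{\vto - \Rd\D\Qbo} = \infnorm{\tilde T \vto - \tilde T(\D\vbo)} \;\le\; \gamma\, \infnorm{\vto - \D\vbo} \;\le\; \frac{\gamma}{1-\gamma}\, \infnorm{\Rd\D\Qbo - \D\vbo}.
\]

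The main obstacle is bounding the residual $\infnorm{\D\vbo - \Rd\D\Qbo}$: since $\vbo = \Rd\Qbo$, this is the Jensen gap between averaging through $\D$ and maximising over $a$, and is precisely what the $\levelD$ factor has to absorb. I would prove row-wise that, writing $h_i^{*} \in \argmax{h}\, d_{ih}$,
\[
\sum_h d_{ih}\, \max_a \Qbo_{ha} \;-\; \max_a \sum_h d_{ih}\, \Qbo_{ha} \;\le\; (1-\max_h d_{ih})\, \mathrm{span}(\Qbo),
\]
by isolating the dominant mass $d_{i h_i^{*}}$ on both sides of the difference and bounding the residual mass, of size at most $\levelD$, against $\mathrm{span}(\Qbo)$. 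Combined with the standard bound $\mathrm{span}(\Qbo) \le \bar{R}_{\dif}/(1-\gamma)$ (the reward-span argument applied in $\bM$), this yields $\infnorm{\D\vbo - \Rd\D\Qbo} \le \levelD\, \bar{R}_{\dif}/(1-\gamma)$, and plugging back produces the $\levelD$ summand of $\boundV$ (the extra $\gamma$ from the contraction being simply discarded to match the stated bound). The delicate parts are lining up the constants in the span-refined perturbation bound and checking the Jensen-gap inequality carefully row by row.
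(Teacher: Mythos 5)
Your proof is correct. The first half---introducing the lifted MDP on $S$ with transitions $\D\Ka$ and rewards $\D\rba$, splitting the error by the triangle inequality, and bounding $\infnorm{\vo-\vto}$ via the span-refined Whitt perturbation bound together with $\mathrm{span}(\rta)\le\bar{R}_{\dif}$---is exactly the paper's argument (the paper invokes Whitt's Theorem~3.1 and Corollary~(b) of his Theorem~6.1 for this step). Your second half, however, takes a genuinely different route. The paper bounds $\infnorm{\vto-\Rd\D\Qbo}$ by checking that $\D$ is a \emph{soft homomorphism} between the lifted MDP and \bM\ and importing Sorg and Singh's Theorem~1, then relaxing their quantity $\bar{\delta}_i^{(t)}$ to $\bar{R}_{\dif}/(1-\gamma)$. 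You instead argue from first principles: the identity $\Rd\D\Qbo=\tilde{T}(\D\vbo)$ (the same operator-swapping fact the paper only exploits later, in Appendix~\ref{seca:averagers}, via Gordon's averagers), the $\gamma$-contraction of $\tilde{T}$ to convert the Bellman residual of $\D\vbo$ into $\infnorm{\vto-\Rd\D\Qbo}\le\tfrac{\gamma}{1-\gamma}\infnorm{\D\vbo-\Rd\D\Qbo}$, and a row-wise Jensen-gap estimate $\infnorm{\D\vbo-\Rd\D\Qbo}\le\levelD\,\mathrm{span}(\Qbo)\le\levelD\bar{R}_{\dif}/(1-\gamma)$, which indeed follows by splitting off the dominant mass $d_{ih_i^*}$ and comparing against the suboptimal action $a^*$ maximizing $\bar{Q}^*_{h_i^* a}$. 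Your route is self-contained (no soft-homomorphism machinery) and yields the slightly sharper constant $\gamma\levelD\bar{R}_{\dif}/(1-\gamma)^2$, of which the stated bound is the $\gamma\le 1$ relaxation, as you note; what the paper's route buys in exchange is the tighter but non-computable intermediate bound in terms of $\sup_{i,t}(1-\max_j d_{ij})\,\bar{\delta}_i^{(t)}$, which the authors mention as a possible refinement.
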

\begin{proof}
Let $\check{M} \equiv (S, A, \Pxa, \rxa, \gamma)$,
with $\Pxa = \D\Ka$ and $\rxa = \D\rba$. From the triangle inequality, we know
that
\begin{equation}
\label{eq:triangle}
\infnorm{\vo - \Rd\D\Qbo} \le \infnorm{\vo - \vxo} + \infnorm{\vxo -
\Rd\D\Qbo},
\end{equation}
where \vxo\ is the optimal value function of \xM.
Our strategy will be to bound $\infnorm{\vo - \vxo}$ and 
$ \infnorm{\vxo - \Rd\D\Qbo}$.
In order to find an upper bound for \infnorm{\vo - \vxo}, we 
apply~\ctp{whitt78approximations}
Theorem~3.1 and Corollary~(b) of his Theorem~6.1,
with all mappings between $M$ and
$\check{M}$ taken to be identities, to obtain
\begin{equation}
\label{eq:whitt}
\infnorm{\vo - \vxo} \le \dfrac{1}{1-\gamma} 
\left(\maxinf{\ra}{\D\rba} + \dfrac{\gamma \bar{R}_{\dif}}{2(1-\gamma)} 
\maxinf{\Pa}{\D\Ka}
\right),
\end{equation}
where we used the fact that 
$\max_{a,i} \check{r}^{a}_{i} - \min_{a,i} \check{r}^{a}_{i} \le
\bar{R}_{\dif}$.
It remains to bound $\infnorm{\vxo - \Rd\D\Qbo}$.
Since $\rxa = \D\rba$ and $\D\Pba = \D\Ka\D = \Pxa \D$
for all $a \in A$, the stochastic matrix \D\ satisfies
\ctp{sorg2009transfer} definition of a \emph{soft homomorphism} between 
$\check{M}$ and \bM\ (see equations (25)--(28) in their paper).
Applying Theorem~1 by the same authors, we know that
\begin{equation}
\label{eq:sing_tighter}
\infnorm{\Rd(\Qxo - \D\Qbo)} \le 
(1 - \gamma)^{-1} \sup_{i,t} (1 - \max_{j}
d_{ij}) \; \bar{\delta}_{i}^{(t)}, 
\end{equation}
where 
$
\bar{\delta}_{i}^{(t)} = \max_{j: d_{ij} > 0, k}{\bar{q}^{(t)}_{jk}
- \min_{j: d_{ij} > 0, k}{\bar{q}^{(t)}_{jk}}}
$
and $\bar{q}^{(t)}_{jk}$ are elements of $\Qb^{(t)}$, the optimal $t$-step
action-value function of \bM. Since
$\infnorms{\Rd\Qxo - \Rd\D\Qbo} \le \infnorms{\Rd(\Qxo - \D\Qbo)}$
and, for all $t > 0$, $\bar{\delta}_{i}^{(t)} \le (1 - \gamma)^{-1}
(\max_{a,k} \bar{r}^{a}_{k} - \min_{a,k} \bar{r}^{a}_{k} )$,
we can write
\begin{equation}
\label{eq:sing}
\infnorm{\vxo - \Rd\D\Qbo} \le 
\frac{ \bar{R}_{\dif}}{(1-\gamma)^{2}}\mmax{i}{(1- \mmax{j}{d_{ij})}}
= \frac{ \bar{R}_{\dif}}{(1-\gamma)^{2}} \levelD.
\end{equation}
Substituting~(\ref{eq:whitt})
and~(\ref{eq:sing}) back into~(\ref{eq:triangle}), we
obtain~(\ref{eq:bound_sf}).
\end{proof}

We note that our bound can be made tighter if we replace the right-hand side
of~(\ref{eq:sing}) with the right-hand side of~(\ref{eq:sing_tighter}). 
However, such a replacement would result in a less intelligible bound that 
cannot be computed in practice. Needless to say, all 
subsequent developments that depend on Proposition~\ref{teo:bound_sf} 
(and on \boundV\ in particular) are also valid for
the tighter version of the bound. 
In Appendix~\ref{seca:averagers} we derive another bound 
for the distance between $\vo$ and $\Rd\D\Qbo$ which is
valid for any norm.

Our bound depends on two factors: the quality of the MDP's factorization,
given by \maxinf{\Pa}{\D\Ka} and \maxinf{\ra}{\D\rba}, and 
the ``level of stochasticity'' of \D, measured by \levelD.
When the MDP factorization is exact, we recover~(\ref{eq:sing}), which is a
computable version of~\ctp{sorg2009transfer} bound for soft
homomorphisms. On the other hand, when \D\ is deterministic---that is, when all
its nonzero elements are $1$---expression~(\ref{eq:bound_sf}) reduces to
\ctp{whitt78approximations}
classical result regarding state aggregation in dynamic
programming. 
Finally, if we have exact deterministic
factorizations, the right-hand side of~(\ref{eq:bound_sf}) reduces to zero. This
also makes sense, since in this case the stochastic-factorization trick
gives rise to an exact homomorphism~\cp{ravindran2004algebraic}.

Proposition~\ref{teo:bound_sf} elucidates the basic mechanism through which one
can use the stochastic-factorization trick to reduce the number of states
in an MDP (and hence the computational cost of 
finding a policy using dynamic programming). One
possible way to exploit this result is to see the computation of 
\D, \Ka, and \rba\ as an optimization problem in which the objective is to
minimize some function of 
\maxinf{\Pa}{\D\Ka}, \maxinf{\ra}{\D\rba}, and
possibly also \levelD~\cp{barreto2012policy}.
However, in this paper we adopt a different approach: as will be shown, we apply
our trick in the context of reinforcement learning to \emph{avoid} the
construction of \Pa\ and \ra.

\section{Kernel-Based Stochastic Factorization}
\label{sec:kbsf}

In Section~\ref{sec:back} we presented KBRL, an approximation framework for
reinforcement learning whose main drawback is
its high computational complexity. In Section~\ref{sec:sf} we discussed how the
stochastic-factorization trick can in principle be useful to reduce an MDP, as
long as one circumvents the computational burden imposed by the calculation of
the matrices involved in the process.  We now show how to leverage these two
components to produce an algorithm called \emph{kernel-based stochastic
factorization} (KBSF) that overcomes these computational limitations.

KBSF emerges from the application of the
stochastic-factorization trick to KBRL's MDP \cM~\cp{barreto2011reinforcement}.
Similarly to \ct{ormoneit2002kernelbased}, we start by defining a ``mother
kernel'' $\mkb(x) : \R^{+} \mapsto \R^{+}$.
In Appendix~\ref{seca:assumptions} we list our assumptions regarding 
\mkb. Here, it suffices to note that, since our
assumptions and \ctp{ormoneit2002kernelbased} are not mutually exclusive, we can
have $\mk \equiv \mkb$ (by using the Gaussian function in both cases, for
example). Let $\Sb \equiv \{\rs_{1}, \rs_{2},..., \rs_{m}\}$ be a set of 
\emph{representative states}.
Analogously to~(\ref{eq:gaussa})
and~(\ref{eq:kera}), we define the kernel
$
\label{eq:gaussb}
\gaussb(s,\sprime) = 
{\mkb \left({\norm{s - \sprime}}/{ \taub} \right)}
$
and its normalized version 
$
\label{eq:kerb}
\kerb(s, \rs_{i}) = 
{\gaussb(s, \rs_{i})}/{\sum_{j=1}^{m}
\gaussb(s, \rs_{j})} .
$
We will use \kera\ to build matrices \Ka\ and \kerb\ to build matrix \D.

\begin{figure*}
\vspace{5mm}
\centering
\subfloat[KBRL's matrices \label{fig:matrices_kbrl}]{ 
\fbox{
\begin{small}
$
\begin{array}{c}
\Pca = 
\begin{array}{c}
 \hat{s}^{a}_{1} \\
\hat{s}^{a}_{2} \\
\hat{s}^{a}_{3} \\
\hat{s}^{b}_{1} \\
\hat{s}^{b}_{2} \\
\end{array}
\begin{blockarray}{ccccc}
 \hat{s}^{a}_{1} & \hat{s}^{a}_{2} & \hat{s}^{a}_{3} & \hat{s}^{b}_{1} &
\hat{s}^{b}_{2} \\
\begin{block}{[ccccc]}
       \kera(\hat{s}^{a}_{1}, {s}^{a}_{1})  & \kera(\hat{s}^{a}_{1},
{s}^{a}_{2}) & \kera(\hat{s}^{a}_{1}, {s}^{a}_{3}) & 0  & 0\\
       \kera(\hat{s}^{a}_{2}, {s}^{a}_{1})  & \kera(\hat{s}^{a}_{2},
{s}^{a}_{2}) & \kera(\hat{s}^{a}_{2}, {s}^{a}_{3}) & 0  & 0\\
       \kera(\hat{s}^{a}_{3}, {s}^{a}_{1})  & \kera(\hat{s}^{a}_{3},
{s}^{a}_{2}) & \kera(\hat{s}^{a}_{3}, {s}^{a}_{3}) & 0  & 0\\
       \kera(\hat{s}^{b}_{1}, {s}^{a}_{1})  & \kera(\hat{s}^{b}_{1},
{s}^{a}_{2}) & \kera(\hat{s}^{b}_{1}, {s}^{a}_{3}) & 0  & 0\\
       \kera(\hat{s}^{b}_{2}, {s}^{a}_{1})  & \kera(\hat{s}^{b}_{2},
{s}^{a}_{2}) & \kera(\hat{s}^{b}_{2}, {s}^{a}_{3}) & 0  & 0\\
\end{block}
\end{blockarray},
\\ 
\Pcb = 
\begin{array}{c}
 \hat{s}^{a}_{1} \\
\hat{s}^{a}_{2} \\
\hat{s}^{a}_{3} \\
\hat{s}^{b}_{1} \\
\hat{s}^{b}_{2} \\
\end{array}
\begin{blockarray}{ccccc}
 \hat{s}^{a}_{1} & \hat{s}^{a}_{2} & \hat{s}^{a}_{3} & \hat{s}^{b}_{1} &
\hat{s}^{b}_{2} \\
\begin{block}{[ccccc]}
       0  & 0 & 0 & \kera(\hat{s}^{a}_{1}, {s}^{b}_{1})  &
\kera(\hat{s}^{a}_{1}, {s}^{b}_{2})\\
       0  & 0 & 0 & \kera(\hat{s}^{a}_{2}, {s}^{b}_{1})  &
\kera(\hat{s}^{a}_{2}, {s}^{b}_{2})\\
       0  & 0 & 0 & \kera(\hat{s}^{a}_{3}, {s}^{b}_{1})  &
\kera(\hat{s}^{a}_{3}, {s}^{b}_{2})\\
       0  & 0 & 0 & \kera(\hat{s}^{b}_{1}, {s}^{b}_{1})  &
\kera(\hat{s}^{b}_{1}, {s}^{b}_{2})\\
       0  & 0 & 0 & \kera(\hat{s}^{b}_{2}, {s}^{b}_{1})  &
\kera(\hat{s}^{b}_{2}, {s}^{b}_{2})\\
\end{block}
\end{blockarray}
\end{array}
$
\end{small}
}
}

\subfloat[KBSF's sparse matrices \label{fig:matrices_kbsf_sparse}]{ 
\framebox[\textwidth]{
\begin{small}
$
\begin{array}{cl}
\D = 
\begin{array}{c}
 \hat{s}^{a}_{1} \\
\hat{s}^{a}_{2} \\
\hat{s}^{a}_{3} \\
\hat{s}^{b}_{1} \\
\hat{s}^{b}_{2} \\
\end{array}
\begin{blockarray}{cc}
 \rs_{1} & \rs_{2} \\
\begin{block}{[cc]}
\kerb(\hat{s}^{a}_{1}, \rs_{1})  & \kerb(\hat{s}^{a}_{1}, \rs_{2}) \\
\kerb(\hat{s}^{a}_{2}, \rs_{1})  & \kerb(\hat{s}^{a}_{2}, \rs_{2}) \\
\kerb(\hat{s}^{a}_{3}, \rs_{1})  & \kerb(\hat{s}^{a}_{3}, \rs_{2}) \\
\kerb(\hat{s}^{b}_{1}, \rs_{1})  & \kerb(\hat{s}^{b}_{1}, \rs_{2}) \\
\kerb(\hat{s}^{b}_{2}, \rs_{1})  & \kerb(\hat{s}^{b}_{2}, \rs_{2}) \\
\end{block}
\end{blockarray},
&
\begin{array}{l}
\Ka = 
\begin{array}{c}
 \rs_{1} \\
\rs_{2} \\
\end{array}
\begin{blockarray}{ccccc}
 \hat{s}^{a}_{1} & \hat{s}^{a}_{2} & \hat{s}^{a}_{3} & \hat{s}^{b}_{1} &
\hat{s}^{b}_{2} \\
\begin{block}{[ccccc]}
       \kera(\rs_{1}, {s}^{a}_{1})  & \kera(\rs_{1}, {s}^{a}_{2}) &
\kera(\rs_{1}, {s}^{a}_{3}) & 0  & 0\\
       \kera(\rs_{2}, {s}^{a}_{1})  & \kera(\rs_{2}, {s}^{a}_{2}) &
\kera(\rs_{2}, {s}^{a}_{3}) & 0  & 0\\
\end{block}
\end{blockarray},  \\
\matii{K}{b}{} = 
\begin{array}{c}
 \rs_{1} \\
\rs_{2} \\
\end{array}
\begin{blockarray}{ccccc}
 \hat{s}^{a}_{1} & \hat{s}^{a}_{2} & \hat{s}^{a}_{3} & \hat{s}^{b}_{1} &
\hat{s}^{b}_{2} \\
\begin{block}{[ccccc]}
       0 & 0 & 0 & \kera(\rs_{1}, {s}^{b}_{1}) &
\kera(\rs_{1}, {s}^{b}_{2}) \\
       0 & 0 & 0 & \kera(\rs_{2}, {s}^{b}_{1}) &
\kera(\rs_{2}, {s}^{b}_{2}) \\
\end{block}
\end{blockarray}. \\
\end{array}
\end{array}
$
\end{small}
}
}

\subfloat[KBSF's dense matrices \label{fig:matrices_kbsf_dense}]{ 
\framebox[\columnwidth]{
\begin{small}
$
\begin{array}{cc}
\begin{array}{l}
\matii{\dot{D}}{a}{} = 
\begin{array}{c}
 \hat{s}^{a}_{1} \\
\hat{s}^{a}_{2} \\
\hat{s}^{a}_{3} \\
\end{array}
\begin{blockarray}{cc}
 \rs_{1} & \rs_{2} \\
\begin{block}{[cc]}
\kerb(\hat{s}^{a}_{1}, \rs_{1})  & \kerb(\hat{s}^{a}_{1}, \rs_{2}) \\
\kerb(\hat{s}^{a}_{2}, \rs_{1})  & \kerb(\hat{s}^{a}_{2}, \rs_{2}) \\
\kerb(\hat{s}^{a}_{3}, \rs_{1})  & \kerb(\hat{s}^{a}_{3}, \rs_{2}) \\
\end{block}
\end{blockarray}, \\
\matii{\dot{D}}{b}{} = 
\begin{array}{c}
\hat{s}^{b}_{1} \\
\hat{s}^{b}_{2} \\
\end{array}
\begin{blockarray}{cc}
 \rs_{1} & \rs_{2} \\
\begin{block}{[cc]}
\kerb(\hat{s}^{b}_{1}, \rs_{1})  & \kerb(\hat{s}^{b}_{1}, \rs_{2}) \\
\kerb(\hat{s}^{b}_{2}, \rs_{1})  & \kerb(\hat{s}^{b}_{2}, \rs_{2}) \\
\end{block}
\end{blockarray}, \\
\end{array}
&
\begin{array}{l}
\Kda = 
\begin{array}{c}
 \rs_{1} \\
\rs_{2} \\
\end{array}
\begin{blockarray}{ccc}
 \hat{s}^{a}_{1} & \hat{s}^{a}_{2} & \hat{s}^{a}_{3} \\
\begin{block}{[ccc]}
       \kera(\rs_{1}, {s}^{a}_{1})  & \kera(\rs_{1}, {s}^{a}_{2}) &
\kera(\rs_{1}, {s}^{a}_{3})\\
       \kera(\rs_{2}, {s}^{a}_{1})  & \kera(\rs_{2}, {s}^{a}_{2}) &
\kera(\rs_{2}, {s}^{a}_{3}) \\
\end{block}
\end{blockarray}, \\ \\
\matii{\dot{K}}{b}{} = 
\begin{array}{c}
 \rs_{1} \\
\rs_{2} \\
\end{array}
\begin{blockarray}{cc}
 \hat{s}^{b}_{1} & \hat{s}^{b}_{2} \\
\begin{block}{[cc]}
       \kera(\rs_{1}, {s}^{b}_{1}) & \kera(\rs_{1}, {s}^{b}_{2}) \\
       \kera(\rs_{2}, {s}^{b}_{1}) & \kera(\rs_{2}, {s}^{b}_{2}) \\
\end{block}
\end{blockarray}. \\
\end{array}\\
\end{array}
$
\end{small}
}
}
\caption{Matrices built by KBRL and KBSF for the case in which the original MDP
has two actions, $a$ and $b$, and $n_{a} = 3$, $n_{b} = 2$, and $m = 2$.
\label{fig:matrices}}
\end{figure*}

As shown in Figure~\ref{fig:matrices_kbrl}, KBRL's matrices \Pca\ have a very
specific structure, since only
transitions ending in states $\yia \in \Sca$ have 
a nonzero probability of occurrence. 
Suppose that we want to apply the stochastic-factorization trick to KBRL's
MDP. Assuming that the matrices \Ka\ have the same structure
as \Pca, when computing $\Pba = \Ka \D$ we only have to look at the
sub-matrices of \Ka\ and \D\ corresponding to the $n_{a}$ nonzero columns of
\Ka. We call these matrices $\Kda \in \R^{m \times n_{a}}$ and 
$\Dda \in \R^{n_{a} \times m}$. 
The strategy of KBSF is to fill out matrices $\Kda$ and $\Dda$ with elements
\begin{equation}
\label{eq:mat_kbsf}
\begin{array}{ccc}
\dot{k}^{a}_{ij} = \kera(\rs_{i}, \xja) 
& \text{ and } &
\dot{d}^{a}_{ij} =\kerb(\yia,\rs_{j}). \\
\end{array}
\end{equation}
Note that, based on \Dda, one can easily recover \D\ as
$
\begin{array}{c}
\D^{\t} \equiv [
   (\matii{\dot{D}}{1}{})^{\t} 
   (\matii{\dot{D}}{2}{})^{\t}  ...
   (\matii{\dot{D}}{|A|}{})^{\t} ] \in \R^{n \times m}.
\end{array}
$
Similarly, if we let 
$\K \equiv [\matii{\dot{K}}{1}{} 
\matii{\dot{K}}{2}{} ...
\matii{\dot{K}}{{|A|}}{}] \in \R^{m \times n}$,
then $\Ka \in \R^{m \times n}$ is matrix \K\ with all elements replaced by
zeros except for those corresponding to matrix \Kda
(see Figures~\ref{fig:matrices_kbsf_sparse} 
and~\ref{fig:matrices_kbsf_dense} for an illustration). 
It should be thus obvious that $\Pb = \Ka\D = \Kda\Dda$.

In order to conclude the construction of KBSF's MDP, we have to define the
vectors of expected rewards \rba.
As shown in expression~(\ref{eq:mdp_kbrl_R}), the reward functions of KBRL's
MDP, $\hat{R}^{a}(s,\sprime)$, only depend on the ending state $\sprime$.
Recalling
the
interpretation of the rows of \Ka\ as transition probabilities from the
representative states to the original ones, illustrated in
Figure~\ref{fig:mat_trans}, 
it is clear that 
\begin{equation}
\label{eq:rba}
\rba = \Kda \ra = \Ka \rr. 
\end{equation}
Therefore, the formal specification of KBSF's MDP is given by
$\bM \equiv (\bar{S},A,\Kda\Dda,\Kda\ra,\gamma) =
(\bar{S},A,\Ka\Da,\Ka\rr,\gamma) = (\bar{S},A,\Pba,\rba,\gamma)$.

As discussed in Section~\ref{sec:kbrl}, KBRL's approximation scheme can 
be interpreted as the derivation of a finite MDP. In this case, the sample
transitions define both the finite state space \cS\ and the model's 
transition and reward functions.
This means that the state space and dynamics of KBRL's model are inexorably
linked: except maybe for degenerate cases, changing one also changes the other.
By defining a set of representative states, KBSF decouples the MDP's structure
from its particular instantiation. To see why this is so, note that, if we fix
the representative states, different sets of sample transitions will
give rise to different models. Conversely, the same set of transitions
can generate different MDPs, depending on how the representative states are
defined.

A step by step description of KBSF is given in Algorithm~\ref{alg:kbsf}.
As one can see, KBSF is very simple to understand and to implement.
It works as follows: first, the MDP \bM\ is built as described
above. Then, its action-value function \Qbo\ is determined through any 
dynamic programming algorithm. Finally, KBSF returns an approximation
of \vco---the optimal value function of KBRL's MDP---computed as $\vt = \Rd \D
\Qbo$.  Based on \vt, one can compute an approximation of KBRL's action-value
function
$\cQ(s,a)$ by simply replacing $\tV$ for $\cV^{*}$  in~(\ref{eq:kbrl_q}),
that is, 
\begin{equation}
\label{eq:kbsf_q}
\tQ(s,a) = \sum_{i=1}^{n_{a}} \kera(s, \xia) \left[\ria +
\gamma \tV(\yia) \right],
\end{equation}
where $s \in \Sc$ and $a \in A$. Note that $\tV(\yia)$ corresponds to 
one specific entry of vector \vt, whose index is given by 
$\sum_{b=0}^{a-1} n_b + i$, where we assume that $n_0 = 0$.

\begin{algorithm}
   \caption{Batch KBSF}
   \label{alg:kbsf}
\begin{algorithmic}
   \State {\bfseries Input:} 
\begin{tabular}{lr}
$\Sca = \{(\xka,\rka,\yka)| k = 1, 2, ..., n_{a}\}$ for all $a \in A$ 
& \Comment{Sample transitions} \\
$\Sb = \{\rs_{1}, \rs_{2}, ..., \rs_{m}\}$ 
& \Comment{Set of representative states}\\
\end{tabular}
   \State {\bfseries Output:} $\vt \approx \vco$
   \For{each $a \in A$}
   \State Compute matrix \Dda: $\dot{d}^{a}_{ij} = \kerb(\yia, \rs_{j})$
   \State Compute matrix \Kda: $\dot{k}^{a}_{ij} = \kera(\rs_{i}, \xja)$
   \State Compute vector \matii{\bar{r}}{a}{}: $\bar{r}_{i}^{a} = \sum_{j}
\dot{k}^{a}_{ij} \rja$
   \State Compute matrix $\Pba = \Kda \Dda$
   \EndFor
   \State Solve $\bM \equiv(\Sb,A,\Pba,\rba,\gamma)$
\Comment{{\sl i.e.}, compute \Qbo}
   \State Return $\vt = \Rd \D \Qbo$, where {$\D^{\t} = \left[
   (\matii{\dot{D}}{1}{})^{\t} 
   (\matii{\dot{D}}{2}{})^{\t}  ...
   (\matii{\dot{D}}{|A|}{})^{\t} 
   \right]$ }
\end{algorithmic}
\end{algorithm}

As shown in Algorithm~\ref{alg:kbsf}, the key point of KBSF's mechanics is 
the fact that
the matrices $\Pxa = \D\Ka$ are never actually computed, but instead we 
directly solve the MDP \bM\ containing $m$ states only. 
This results in an efficient algorithm that requires only 
$O(n m |A| \dims + \hat{n}m^{2}|A|)$
operations and $O(\hat{n}m)$ bits to build a reduced
version of KBRL's MDP, where $\hat{n} = \max_{a} n_{a}$. After the reduced
model \bM\ has been
constructed,  KBSF's computational cost becomes a function of $m$ only.
In particular, the cost of solving \bM\ through dynamic programming
becomes polynomial in $m$ instead of $n$: while one application of 
$\cT$, the Bellman operator of \cM, is
$O(n\hat{n}|A|)$, the computation of $\bT$ is 
$O(m^{2}|A|)$.
Therefore, KBSF's time
and memory complexities are only linear in $n$.

We note that, in practice, KBSF's computational requirements can be reduced even
further if one enforces the kernels \kera\ and \kerb\ to be sparse. In
particular, given a fixed $\rs_{i}$, instead of computing $\gaussb(\rs_{i},
\xja)$ for $j = 1, 2, ..., n_a$, one can evaluate the kernel on a pre-specified
neighborhood of $\rs_{i}$ only. Assuming that $\gaussb(\rs_{i}, \xja)$ is zero
for all \xja\ outside this region,
one avoids not only computing the kernel but also storing the
resulting values (the same reasoning applies to the computation of
$\gaussa(\yia, \rs_{j})$ for a fixed~\yia).

\subsection{A closer look at KBSF's approximation}
\label{sec:kbsf_app}

As outlined in Section~\ref{sec:back}, KBRL defines the probability of a
transition from state \yib\ to state \yka\ as being $\kera(\yib, \xka)$, where
$a, b \in A$ (see Figure~\ref{fig:matrices_kbrl}). 
Note that the kernel \kera\ is computed with the initial state \xka, and not
\yka\ itself. The intuition behind this is simple: since we know the transition
$\xka \xrightarrow{a} \yka$ has occurred before, the more ``similar'' \yib\ is
to \xka, the more likely the transition $\yib \xrightarrow{a} \yka$
becomes~\cp{ormoneit2002kernelbased}.

From~(\ref{eq:mat_kbsf}), it is clear that 
the computation of matrices \Ka\ performed by KBSF follows the same reasoning
underlying the computation of KBRL's matrices \Pca;
in particular, $\kera(\rs_{j}, \xka)$  gives the probability of 
a transition from $\rs_{j}$ to $\yka$.
However, when we look at matrix \D\ things are slightly different: 
here, the probability of a ``transition'' 
from $\yib$ to representative state $\rs_{j}$ is
given by $\kerb(\yib, \rs_{j})$---a computation that involves $\rs_{j}$ itself.
If we were to strictly adhere to KBRL's logic when computing the transition
probabilities to the representative states $\rs_{j}$, 
the probability of transitioning from \yib\ to $\rs_{j}$ 
upon executing action $a$ should be a function of 
\yib\ and a state $\sprime$ from which we knew a transition 
$\sprime \xrightarrow{a} \rs_{j}$ had occurred. 
In this case we would end up with one matrix
\Da\ for each action $a \in A$. 
Note though that this formulation of the method is not practical, because
the computation of the matrices \Da\ would require 
a transition $(\cdot) \xrightarrow{a} \rs_{j}$ for each $a \in A$ and each
$\rs_{j} \in \Sb$. Clearly, such a requirement is
hard to fulfill even if we have a generative model available to generate sample
transitions.

In this section we provide an interpretation of the approximation computed by
KBSF that supports our definition of matrix \D.
We start by looking at how KBRL constructs the matrices \Pca.
As shown in Figure~\ref{fig:matrices_kbrl}, for each action $a \in A$
the state \yib\ has an associated stochastic vector $\matii{\hat{p}}{a}{j} \in
\R^{1 \times n}$ whose nonzero entries correspond to the kernel $\kera(\yib,
\cdot)$ evaluated at $\xka, k = 1, 2, \ldots, n_{a}$.
Since we are dealing with a
continuous state space, it is possible to compute an analogous vector
for any $s \in \Sc$ and any $a \in A$. Focusing on the nonzero entries
of $\matii{\hat{p}}{a}{j}$, we define the function
\begin{equation}
\label{eq:kfunca}
\begin{array}{cl}
\kfunca: & \Sc \mapsto \R^{1 \times n_a} \\
& \kfunca(s) = \matii{\hat{p}}{a}{} \iff \hat{p}^{a}_{i} =
\kera(s,\xia) \text{ for } i = 1, 2, ..., n_a.
\end{array}
\end{equation}

Clearly, full knowledge of the function \kfunca\ 
allows for an exact computation of KBRL's transition matrix \Pca.
Now suppose we do not know \kfunca\ and we want to compute an approximation 
of this function in the points $\yia \in \Sca$, for all $a \in A$. 
Suppose further that we are only given a ``training set'' composed of $m$ pairs
$(\rs_{j}, \kfunca(\rs_{j}))$. One possible way of 
approaching this problem is to resort to kernel smoothing 
techniques. In this case,
a particularly common choice is the so-called Nadaraya-Watson kernel-weighted
estimator~\cp[Chapter~6]{hastie2002elements}:
\begin{equation}
\label{eq:kfuncd}
 \kfuncb(s) = \frac{\sum_{j=1}^{m} \gaussb(s, \rs_{j}) \kfunca(\rs_{j})}
{\sum_{j=1}^{m} \gaussb(s, \rs_{j})} = 
\sum_{j=1}^{m} \kerb(s, \rs_{j})\kfunca(\rs_{j}).
\end{equation}
Contrasting the expression above with~(\ref{eq:mat_kbsf}), we see that this is
exactly how KBSF computes its approximation $\D\Ka \approx \Pca$,
with \kfuncb\ evaluated at the points $\yib \in \Scb$, $b = 1, 2, ...,
|A|$. In this case, $\kerb(\yib, \rs_{j})$ are the elements of matrix \D, 
and $\kfunca(\rs_{j})$ is the \jth\ row of matrix \Kda.
Thus, in some sense, KBSF uses KBRL's own kernel approximation principle to
compute a stochastic factorization of \cM.

\subsection{Theoretical results}
\label{sec:theory_batch}

Since KBSF comes down to the solution of a finite MDP, 
it always converges to 
the same approximation \vt, whose distance to KBRL's optimal
value function \vco\ is bounded by Proposition~\ref{teo:bound_sf}. 
Once \vt\ is available, the value of any state-action pair can be
determined through~(\ref{eq:kbsf_q}). The following result generalizes
Proposition~\ref{teo:bound_sf} to the entire continuous state space \Sc:

\begin{proposition}
\label{teo:batch_kbsf}
Let $\cQ$ be the value function computed by KBRL through~(\ref{eq:kbrl_q})
and let $\tilde{Q}$ be the value function computed by KBSF
through~(\ref{eq:kbsf_q}).
Then, for any $s \in \Sc$ and any $a \in A$,
$|\cQ(s,a) - \tilde{Q}(s,a)| \le \gamma \boundV$, with \boundV\
defined in~(\ref{eq:bound_sf}).
\end{proposition}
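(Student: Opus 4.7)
The plan is to prove the bound by direct subtraction of the two expressions in (\ref{eq:kbrl_q}) and (\ref{eq:kbsf_q}), and then invoke Proposition~\ref{teo:bound_sf} applied to KBRL's MDP $\cM$.

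First I would write the difference. Since $\cQ(s,a)$ and $\tQ(s,a)$ share the same kernel weights $\kera(s,\xia)$ and the same empirical rewards $\ria$, the reward terms cancel and I obtain
\[
\cQ(s,a) - \tQ(s,a) = \gamma \sum_{i=1}^{n_{a}} \kera(s,\xia)\bigl[\hat{V}^{*}(\yia) - \tV(\yia)\bigr].
\]
Taking absolute values and using the triangle inequality gives
\[
|\cQ(s,a) - \tQ(s,a)| \le \gamma \sum_{i=1}^{n_{a}} \kera(s,\xia)\,\bigl|\hat{V}^{*}(\yia) - \tV(\yia)\bigr|.
\]

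Next I would exploit the fact that, by the normalization in (\ref{eq:kera}), the weights $\kera(s,\cdot)$ are nonnegative and sum to one. Therefore the right-hand side is a convex combination and can be upper-bounded by the maximum of its entries, which is in turn no larger than $\infnorm{\vco - \tV}$, where $\vco$ is the optimal value function of $\cM$ restricted to the finite state set $\cS$ and $\tV = \Rd\D\Qbo$.

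Finally, I would apply Proposition~\ref{teo:bound_sf} to the finite MDP $\cM \equiv (\cS, A, \Pca, \rca, \gamma)$ constructed by KBRL, with the matrices $\D$, $\Kda$ and vectors $\rba$ built by KBSF as in (\ref{eq:mat_kbsf}) and (\ref{eq:rba}). This yields $\infnorm{\vco - \Rd\D\Qbo} \le \boundV$, so $\bigl|\hat{V}^{*}(\yia) - \tV(\yia)\bigr| \le \boundV$ for every $i$, and combining with the bound above gives $|\cQ(s,a) - \tQ(s,a)| \le \gamma \boundV$.

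I do not anticipate any serious obstacle: the argument is essentially a one-line application of the stochasticity of the kernel weights followed by invocation of the previously proved proposition. The only point requiring a little care is verifying that Proposition~\ref{teo:bound_sf} is applicable to $\cM$ with the particular $\D$, $\Ka$, $\rba$ defined by KBSF, but this is immediate from the construction since $\D$ and each $\Ka$ are stochastic by virtue of the normalization in the kernels $\kera$ and $\kerb$, and $\bM$ as defined in the algorithm has the form required by the proposition.
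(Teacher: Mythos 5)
Your proposal is correct and follows essentially the same route as the paper's proof: cancel the reward terms, bound the difference by a convex combination of $|\hat{V}^{*}(\yia) - \tV(\yia)|$, and invoke Proposition~\ref{teo:bound_sf} applied to KBRL's finite MDP $\cM$ with the stochastic matrices $\D$, $\Ka$ and vectors $\rba$ built by KBSF. The only cosmetic difference is that you pass through $\infnorm{\vco - \tV}$ before applying the proposition, whereas the paper bounds each term $|\cV^{*}(\yia) -\tV(\yia)|$ by $\boundV$ directly and then uses the convexity of the weights.
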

\begin{proof}
\begin{align*}
|\cQ(s,a) - \tilde{Q}(s,a)|
& = 
\left|
\sum_{i=1}^{n_{a}} \kera(s, \xia) \left[\ria +
\gamma \hat{V}^{*}(\yia) \right] -
\sum_{i=1}^{n_{a}} \kera(s, \xia) \left[\ria +
\gamma \tV(\yia) \right] 
\right|
\\
& \le
\gamma \sum_{i=1}^{n_{a}} \kera(s, \xia) \left|\cV^{*}(\yia) -\tV(\yia)
\right| 
 \le
\gamma \sum_{i=1}^{n_{a}} \kera(s, \xia) \boundV
\le \gamma \boundV, \\
\end{align*}
where the second inequality results from the application of
Proposition~\ref{teo:bound_sf} and the third inequality is a consequence of
the fact that $\sum_{i=1}^{n_{a}} \kera(s, \xia)$ defines a convex combination.
\end{proof}

Proposition~\ref{teo:batch_kbsf} makes it clear that the quality of the
approximation computed by KBSF depends crucially on \boundV.
In the remainder of this section we will show that, if the distances between
sampled states and the respective nearest representative states are small
enough, then we can make \boundV\ as small as desired by setting $\bar{\tau}$ to
a sufficiently small value.
To be more precise, let $\neib: \Sc \times \{1,2,...,m\} \mapsto \Sb$
be a function that orders the representative states according to their distance
to a given state $s$, that is, if $\neib(s, i) = \rs_{k}$, then
$\rs_{k}$ is the \ith\ nearest representative state to $s$. 
Define $\ddb: \Sc \times
\{1,2,...,m\} \mapsto \R$ as $\ddb(s,i) = \norm{s - \neib(s, i)}$. 
Assuming that we have $|A|$ fixed sets of sample transitions \Sca, we will show
that, for any $\epsilon >  0$,  
there is a $\delta > 0$ such that, if $\max_{a,i}\ddb(\yia, 1) < \delta$,
then we can set $\taub$ in order to guarantee that $\boundV< \epsilon$.
To show that, we will need the following two lemmas, proved in
Appendix~\ref{seca:theory}:

\begin{lemma}
\label{teo:dist_cont}
For any $\xia \in \Sca$ and any $\epsilon > 0$, there is a
$\delta > 0$ such
that $|\kera(s,\xia) - \kera(\sprime,\xia)| < \epsilon$ if 
$\norm{s - \sprime} <~\delta$.
\end{lemma}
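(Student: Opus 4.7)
The plan is to prove continuity of $\kera(\cdot,\xia)$ at the point $s$ by expressing it as a ratio of Lipschitz functions whose denominator is bounded away from zero, then extract an $\epsilon$--$\delta$ statement.

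First I would establish that, for each fixed $\xja \in \Sca$, the map $s \mapsto \gaussa(s,\xja)$ is Lipschitz. This follows by combining the Lipschitz assumption on $\mk$ with the reverse triangle inequality applied to $\norm{\cdot}$, yielding
\begin{equation*}
|\gaussa(s,\xja) - \gaussa(\sprime,\xja)|
= \left|\mk\!\left(\tfrac{\norm{s-\xja}}{\tau}\right) - \mk\!\left(\tfrac{\norm{\sprime-\xja}}{\tau}\right)\right|
\le \frac{L_{\mk}}{\tau}\, \norm{s-\sprime},
\end{equation*}
where $L_{\mk}$ is the Lipschitz constant of $\mk$. Consequently both the numerator $N(s) := \gaussa(s,\xia)$ and the denominator $B(s) := \sum_{j=1}^{n_{a}} \gaussa(s,\xja)$ of $\kera(s,\xia)$ are Lipschitz in $s$, with constants $L_{\mk}/\tau$ and $n_{a} L_{\mk}/\tau$ respectively.

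Next I would argue that $B$ is bounded away from zero in a neighborhood of $s$. Since $\mk$ takes values in $\R^{+}$, we have $B(s) > 0$. By continuity of $B$, there exists $\delta_{0} > 0$ such that $B(\sprime) \ge B(s)/2 > 0$ whenever $\norm{s-\sprime} < \delta_{0}$. Writing
\begin{equation*}
\kera(s,\xia) - \kera(\sprime,\xia)
= \frac{N(s) - N(\sprime)}{B(s)} + N(\sprime)\cdot \frac{B(\sprime) - B(s)}{B(s) B(\sprime)},
\end{equation*}
and using $0 \le N(\sprime) \le B(\sprime)$, one obtains for $\norm{s-\sprime} < \delta_{0}$:
\begin{equation*}
|\kera(s,\xia) - \kera(\sprime,\xia)|
\le \frac{|N(s)-N(\sprime)|}{B(s)} + \frac{|B(s)-B(\sprime)|}{B(s)}
\le \frac{(n_{a}+1) L_{\mk}}{\tau\, B(s)}\, \norm{s-\sprime}.
\end{equation*}

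Given $\epsilon > 0$, it now suffices to choose
\begin{equation*}
\delta = \min\!\left\{\delta_{0},\; \frac{\epsilon\, \tau\, B(s)}{(n_{a}+1) L_{\mk}}\right\},
\end{equation*}
which yields the claim. The only mildly delicate step is bounding $B$ away from zero; everything else is a direct consequence of the Lipschitz assumption on $\mk$ and the quotient rule for continuity. Since the paper's standing assumption is that $\mk$ maps into $\R^{+}$, positivity of $B(s)$ is immediate; should one want uniform (rather than pointwise) continuity over a compact subset of $\Sc$, the same argument gives it with a global lower bound on $B$.
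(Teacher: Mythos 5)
Your proof is correct and follows essentially the same route as the paper's, which simply observes that $\kera(\cdot,\xia)$ is continuous (being a ratio of continuous functions with positive denominator) and concludes from $\lim_{\sprime \rightarrow s}|\kera(s,\xia)-\kera(\sprime,\xia)| = 0$. Your version is just a quantitative elaboration of that argument: by exploiting the Lipschitz assumption on \mk\ and the positivity of the denominator you obtain an explicit local Lipschitz constant and hence an explicit $\delta$, which is a harmless (indeed slightly stronger) refinement rather than a different approach.
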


\begin{lemma}
\label{teo:split}
Let $s \in \Sc$, let $m > 1$, and assume there is a $w \in \{1, 2, ..., m-1\}$
such that $\ddb(s, w) < \ddb(s,w+1)$. Define 
\begin{equation*}
W \equiv \{k \;|\; \norm{s\ - \rs_{k}} \le \ddb(s,w) \}
\text{ and }
\bar{W} \equiv \{1, 2, ..., m\} - W.
\end{equation*}
Then, for any
$\alpha > 0$, 
$\label{eq:res_split2}
\sum_{k \in W} \kerb(s, \rs_{k}) < \alpha  \sum_{k \in \bar{W}} 
\kerb(s, \rs_{k})
$
for \taub\ sufficiently small.
\end{lemma}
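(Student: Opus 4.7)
The plan is to cancel the common normalizer appearing in $\kerb$, reducing the claim to an inequality on the unnormalized kernel $\gaussb$, and then exploit the strict gap $\ddb(s,w) < \ddb(s,w+1)$ together with the tail behaviour of $\mkb$ (stipulated in Appendix~\ref{seca:assumptions}) to force the desired imbalance as $\taub \to 0^{+}$.

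First, I would observe that every $\kerb(s,\rs_k)$ shares the same denominator $\sum_{j=1}^{m}\gaussb(s,\rs_j)$, which does not depend on $k$. This common factor cancels on both sides of the claimed inequality, so it suffices to prove the corresponding statement with $\gaussb$ in place of $\kerb$.

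Second, the definitions of $W$ and $\bar{W}$ split the arguments of $\mkb$: for each $k \in W$, $\|s - \rs_k\|/\taub \le \ddb(s,w)/\taub$, whereas for each $k \in \bar{W}$, $\|s - \rs_k\|/\taub \ge \ddb(s,w+1)/\taub$. Writing $\eta := \ddb(s,w+1) - \ddb(s,w) > 0$ (which is strictly positive by hypothesis), the two ranges of arguments fed to $\mkb$ are separated by the gap $\eta/\taub$, which diverges as $\taub \to 0^{+}$.

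Third, I would control each unnormalized sum using this separation. Upper bound one side by (cardinality)$\cdot$(supremum of $\mkb$ on its relevant argument range) and lower bound the opposing side by a single well-placed term, e.g.\ the one corresponding to the nearest representative state in that set, $\mkb(\ddb(s,1)/\taub)$ or $\mkb(\ddb(s,w+1)/\taub)$ as appropriate. The desired ratio then collapses to a quotient of two $\mkb$-values whose arguments are separated by $\eta/\taub$, and under the decay assumptions on $\mkb$ this quotient can be driven below any prescribed threshold by shrinking $\taub$.

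The main obstacle is the cardinality factor: the sums contain up to $m-1$ terms, so the pointwise decay of $\mkb$ has to dominate a discrete multiplicative constant. This is precisely where the specific regularity assumptions on the mother kernel in Appendix~\ref{seca:assumptions} (boundedness, Lipschitz continuity, and above all rapid decay at infinity) enter. With those properties in hand, for any fixed $\alpha > 0$ one can make the per-term ratio smaller than $\alpha/(m-1)$ by selecting $\taub$ sufficiently small, and summing over the (at most $m-1$) terms yields the claimed bound.
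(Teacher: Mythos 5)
Your argument is correct and is essentially the paper's own proof: cancel the common normalizer, use the monotonicity of $\mkb$ to bound each sum by a cardinality factor times an extreme term, and invoke the exponential sandwich of Assumption~(\ref{it:ed}) to drive the resulting single-term ratio below the required threshold for $\taub$ sufficiently small. Note that what you prove (and what the appendix restatement proves and Proposition~\ref{teo:bound_rep_states} actually uses) is $\sum_{k\in\bar{W}}\kerb(s,\rs_{k}) < \alpha\sum_{k\in W}\kerb(s,\rs_{k})$; the main-text statement of the lemma prints the two sums in the opposite order, which is a typo, since that direction fails as $\taub\to 0$ because the normalized kernel concentrates on the nearest representative states.
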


Lemma~\ref{teo:dist_cont} is basically a
continuity argument: it shows that, for any fixed \xia, $|\kera(s,\xia) -
\kera(\sprime,\xia)| \rightarrow 0$ as $\norm{s - \sprime} \rightarrow 0$. 
Lemma~\ref{teo:split} states that, 
if we order the representative states according to their distance to a fixed 
state $s$, and then partition them in two subsets, we can control the
relative magnitude of the corresponding kernels's sums by adjusting the
parameter \taub\ (we redirect the reader to Appendix~\ref{seca:theory} for
details on how to set \taub). Based on these two lemmas, we present the main
result of this section, also proved in Appendix~\ref{seca:theory}:

\begin{proposition}
\label{teo:bound_rep_states}
For any $\epsilon > 0$, there is a $\delta > 0$ such that, if
$\max_{a,i}\ddb(\yia, 1) < \delta$, then we can 
guarantee that $\boundV<~\epsilon$
by making $\taub$ sufficiently small.
\end{proposition}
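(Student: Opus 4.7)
The plan is to bound $\boundV$ by separately controlling its three constituents from Proposition~\ref{teo:bound_sf}: the reward-approximation term $\maxinf{\ra}{\D\rba}$, the transition-approximation term $\maxinf{\Pa}{\D\Ka}$, and the stochasticity term $\levelD$. The geometric hypothesis $\max_{a,i}\ddb(\yia,1)<\delta$ guarantees that each state $\yia$ of KBRL's finite MDP has a representative state $\rs_{k^*}$ (with $k^*$ depending on $a$ and $i$) within distance $\delta$. The overall idea is that, for $\taub$ small enough, the smoothing kernel $\kerb(\yia,\cdot)$ concentrates almost entirely on $\rs_{k^*}$; once this is established, Lemma~\ref{teo:dist_cont} shows that replacing $\yia$ by $\rs_{k^*}$ inside $\kera$ incurs negligible error.

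For $\levelD$ I apply Lemma~\ref{teo:split} at each $s=\yia$ with $w=1$, taking $W=\{k^*\}$ in the generic case where the nearest representative is unique (ties at the minimum can be handled by grouping them together). The lemma yields $\sum_{k\neq k^*}\kerb(\yia,\rs_k)<\alpha\,\kerb(\yia,\rs_{k^*})$ for arbitrary $\alpha>0$ once $\taub$ is small enough; together with the fact that the row of \D\ sums to $1$, this forces $d_{ik^*}>1/(1+\alpha)$ and therefore $\levelD\le\alpha/(1+\alpha)$. For the two approximation terms I use the identity $\kera(\yia,\xka)=\sum_l\kerb(\yia,\rs_l)\kera(\yia,\xka)$ and bound each row sum of the entrywise absolute difference by
\[
\sum_l\kerb(\yia,\rs_l)\sum_k\bigl|\kera(\yia,\xka)-\kera(\rs_l,\xka)\bigr|,
\]
splitting the outer sum into $l=k^*$ and $l\neq k^*$. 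The $l=k^*$ piece is made arbitrarily small by Lemma~\ref{teo:dist_cont} applied termwise in $k$, provided $\delta$ is small enough. The $l\neq k^*$ piece is bounded by $2\sum_{l\neq k^*}\kerb(\yia,\rs_l)$ because $\kera(\yia,\cdot)$ and $\kera(\rs_l,\cdot)$ are both probability vectors summing to $1$, and this outer mass is controlled by Lemma~\ref{teo:split}. The reward term $\maxinf{\ra}{\D\rba}$ admits the same decomposition, with the inner sum weighted by $|r_k^a|\le R_{\max}$.

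The main delicate point is reconciling the two smallness conditions. To control the $l=k^*$ contribution, Lemma~\ref{teo:dist_cont} must be invoked with tolerance of order $\epsilon/n_a$ since we sum $n_a$ terms in $k$, which forces $\delta$ to depend on $\max_a n_a$; this is harmless in the batch setting, where that quantity is fixed. Given $\delta$, one then picks $\taub$ small enough that Lemma~\ref{teo:split} delivers the required tail control simultaneously at the finitely many states $\yia$, by taking the minimum over the thresholds it provides for each pair $(a,i)$. Plugging the three resulting bounds into~(\ref{eq:bound_sf}) and absorbing the constants $1/(1-\gamma)$, $\bar{R}_{\dif}$, and $\gamma/2$ into the choice of tolerances gives $\boundV<\epsilon$, as desired.
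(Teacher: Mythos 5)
Your proof is correct and follows essentially the same route as the paper's: fix $\delta$ via Lemma~\ref{teo:dist_cont} so that nearby representative states yield small kernel differences, then shrink $\taub$ via Lemma~\ref{teo:split} to push the kernel mass of $\kerb(\yia,\cdot)$ onto the near set, handling the transition term and $\levelD$ in exactly this way. The only (cosmetic) differences are that the paper controls the reward term through the identity $\rca-\D\rba=(\Pca-\D\Ka)\rr$ rather than by a direct $R_{\max}$-weighted decomposition, and splits the representative states into near/far sets of possibly more than one element per $(a,i,j)$ instead of isolating the single nearest $\rs_{k^*}$.
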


Proposition~\ref{teo:bound_rep_states} tells us that, regardless
of the specific reinforcement learning problem at hand, if the distances between
sampled states \yia\ and the respective nearest representative states are small
enough, then we can make KBSF's approximation of
KBRL's value function as accurate as desired by setting \taub\ to a sufficiently
small value (one can see how exactly to set \taub\ in the proof of the
proposition). How small the maximum distance $\max_{a,i}\ddb(\yia, 1)$ should be
depends on the 
particular choice of kernel \gaussa\ and on the sets of
sample transitions \Sca.
Here, we deliberately refrained from making assumptions
on \gaussa\ and \Sca\ in order to present the proposition in its most general
form. 

Note that a fixed number of representative states $m$ 
imposes a minimum possible value for $\max_{a,i}\ddb(\yia, 1)$, and if 
this value is not small enough decreasing \taub\ may actually hurt the
approximation. The optimal value for \taub\ in this case is again
context-dependent. As a positive flip side of this statement, we note that, even
if $\max_{a,i}\ddb(\yia, 1) > \delta$, it might be possible to make
$\boundV<~\epsilon$ by setting \taub\ appropriately.
Therefore, rather than as a practical guide on how to configure KBSF,
Proposition~\ref{teo:bound_rep_states} should be seen as a 
theoretical argument showing that KBSF is a sound algorithm, in the sense that
in the limit it recovers KBRL's solution.

\subsection{Empirical results}
\label{sec:empirical_batch}

We now present a series of computational experiments designed to illustrate
the behavior of KBSF in a variety of challenging domains. We start
with a simple problem, the ``puddle world'', to show that KBSF is
indeed capable of compressing the information contained in KBRL's model. We
then move to more difficult tasks, and compare KBSF with other
state-of-the-art reinforcement-learning algorithms.
We start with two classical control tasks, single and double pole-balancing.
Next we study two medically-related problems based on real data: HIV drug
schedule and epilepsy-suppression domains.

All problems considered in this paper have a continuous state space and a
finite number of actions, and were modeled as discounted tasks. The algorithms's
results correspond to the performance of the greedy decision policy derived from
the final value function computed. In all cases, the decision policies were
evaluated on challenging test states from which the tasks cannot be easily
solved. The details of the experiments are given in
Appendix~\ref{seca:exp_details}.

\subsubsection{Puddle world (proof of concept)}
\label{sec:puddle}

In order to show that KBSF is indeed capable of summarizing the information
contained in KBRL's model, we use the puddle world
task~\cp{sutton96generalization}. The puddle world is a simple 
two-dimensional problem in which the objective is to reach a goal region
avoiding two ``puddles'' along the way. We implemented the task 
exactly as described by~\ct{sutton96generalization}, except that we used a
discount factor of $\gamma=0.99$ and evaluated the decision policies on a set of
pre-defined test states surrounding the 
puddles (see Appendix~\ref{seca:exp_details}).

The experiment was carried out as follows: first,
we collected a set of $n$ sample transitions  $(\xka,\rka,\yka)$ using a
random exploration policy (that is, a policy that selects actions
uniformly at random). 
In the case of KBRL, this set of sample transitions defined the 
model used to approximate the value function.
In order to define KBSF's model, the states $\yka$ were grouped by the
$k$-means algorithm into $m$ clusters and a representative state 
$\rs_{j}$ was placed at the center of each resulting
cluster~\cp{kaufman90finding}.
As for the kernels's widths, we varied both $\tau$ and $\taub$ in the set 
$\{0.01, 0.1, 1\}$ (see Table~\ref{tab:kbsf_params} on
page~\pageref{tab:kbsf_params}). The
results reported represent the
best performance of the algorithms over $50$ runs; that is, for each $n$
and each $m$ we picked the combination of parameters that generated the maximum
average return.
We use the following convention to refer to specific instances of each method:
the first number enclosed in parentheses after an algorithm's name is $n$, the
number of sample transitions used in the
approximation, and the second one is $m$, the size of the model used to
approximate the value function. Note that for KBRL $n$ and $m$ coincide.

In Figure~\ref{fig:puddle_m}
and~\ref{fig:puddle_m_time} we observe the effect of fixing the number of 
transitions $n$ and varying the number of representative states $m$. As
expected, KBSF's results improve as $m \rightarrow n$. More
surprising is the fact that KBSF has essentially the same performance as
KBRL using models one order of magnitude smaller. This
indicates that KBSF is summarizing well the information contained in 
the data. Depending on the values of $n$ and $m$, such a 
compression may represent a significant reduction 
on the consumption of computational resources.
For example, by replacing KBRL($8\ts000$) with KBSF($8\ts000$,
$100$), we obtain a decrease of approximately $99.58\%$ on the number of
operations performed to find a policy, as shown in
Figure~\ref{fig:puddle_m_time} (the cost of constructing KBSF's MDP is
included in all reported run times).

\begin{figure*}
\centering
   \subfloat[Performance as a function of $m$]{ 
   \label{fig:puddle_m}
   \includegraphics[scale=\scll]{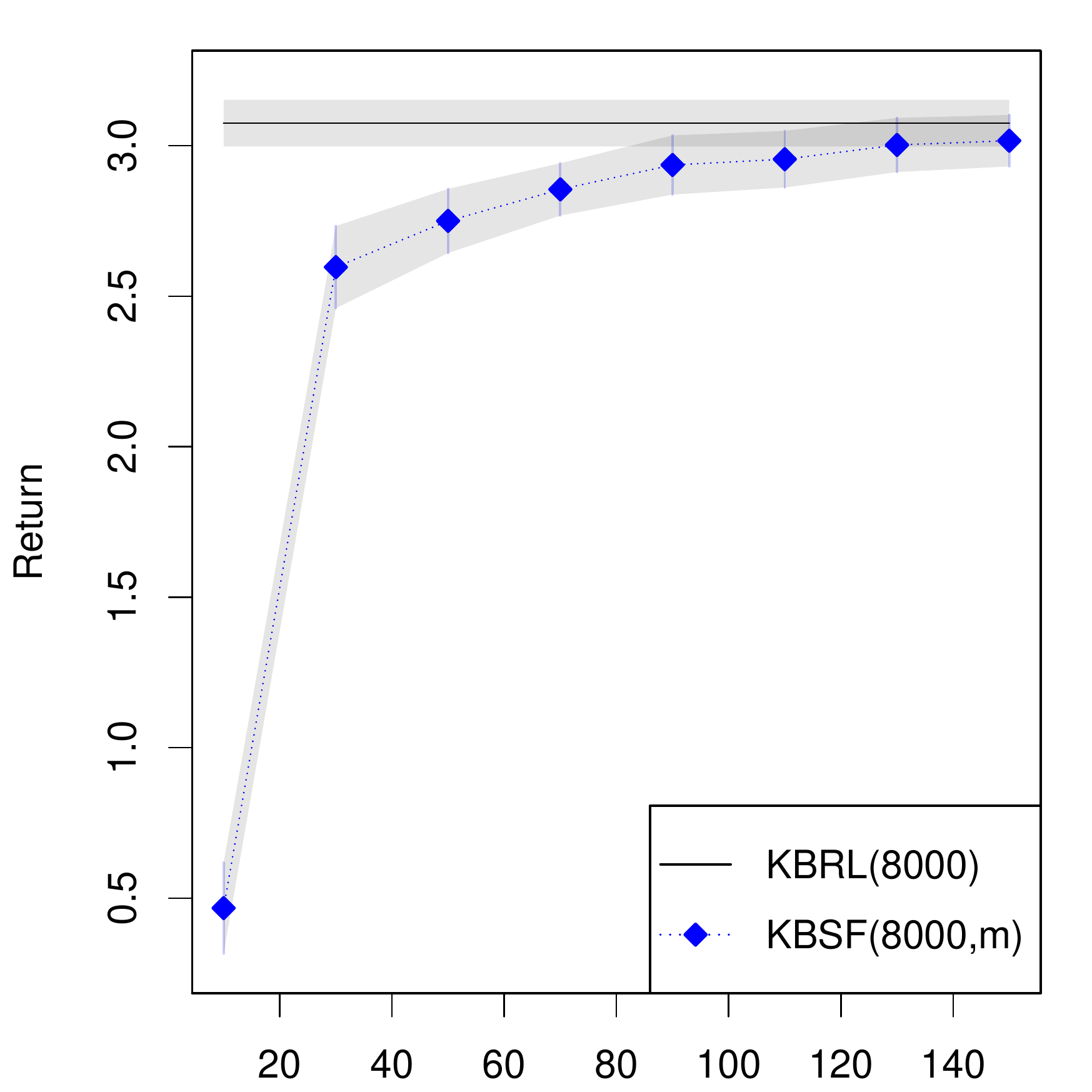} 
   }
    \subfloat[Run time as a function of $m$] { 
   \label{fig:puddle_m_time}
   \includegraphics[scale=\scll]{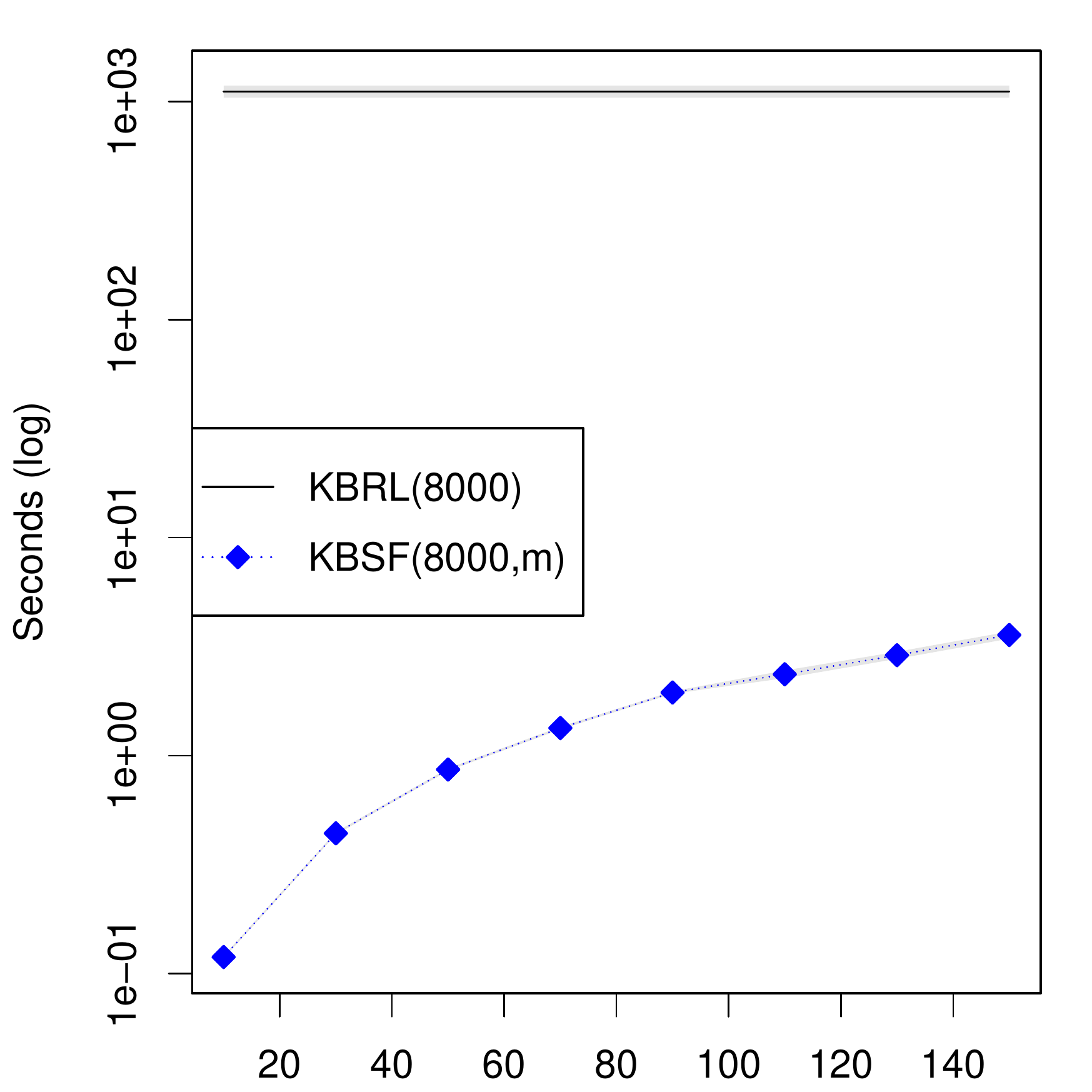}
    }
    
    \subfloat[Performance as a function of $n$]{ 
   \label{fig:puddle_n}
   \includegraphics[scale=\scll]{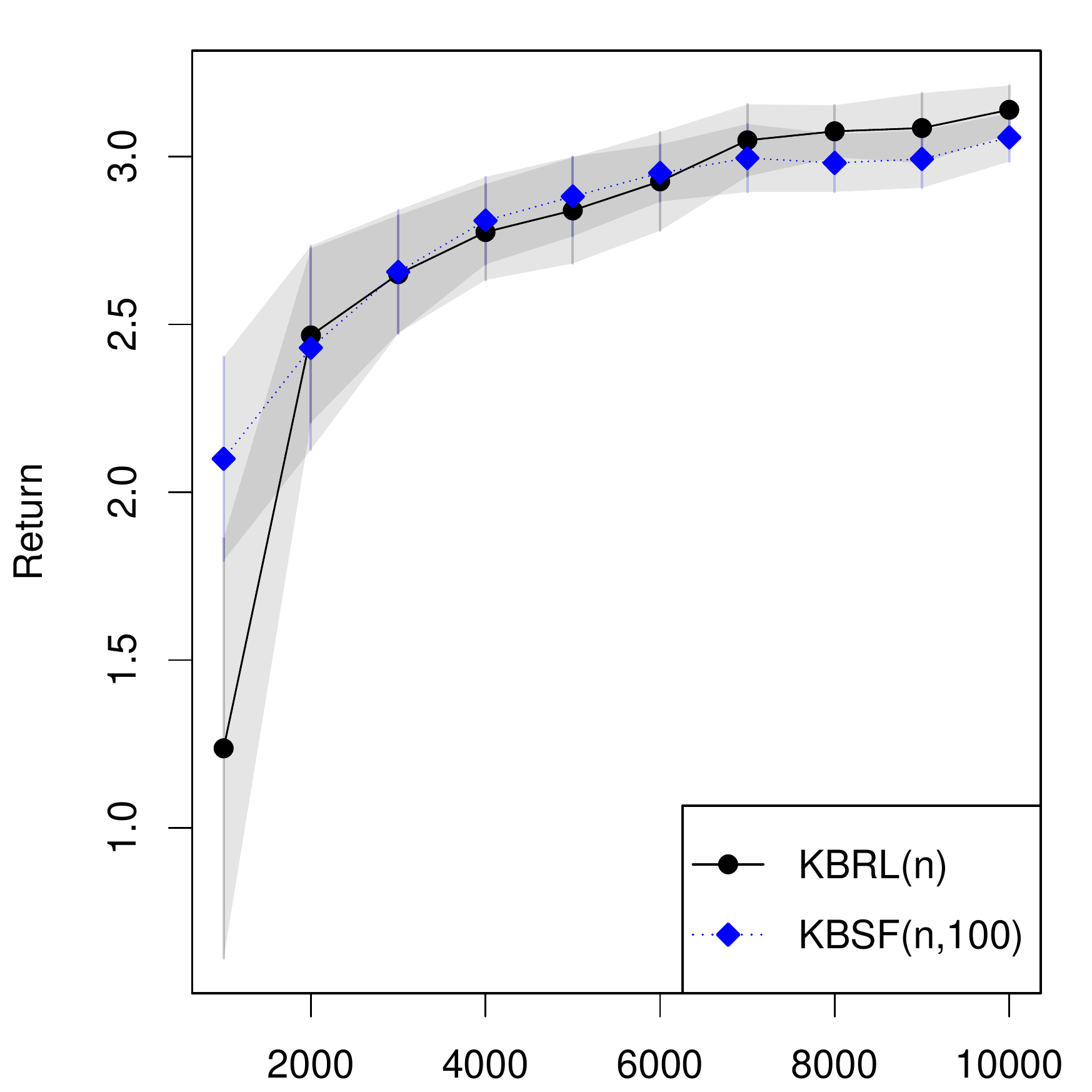} 
   }
    \subfloat[Run time as a function of $n$] { 
   \label{fig:puddle_n_time}
   \includegraphics[scale=\scll]{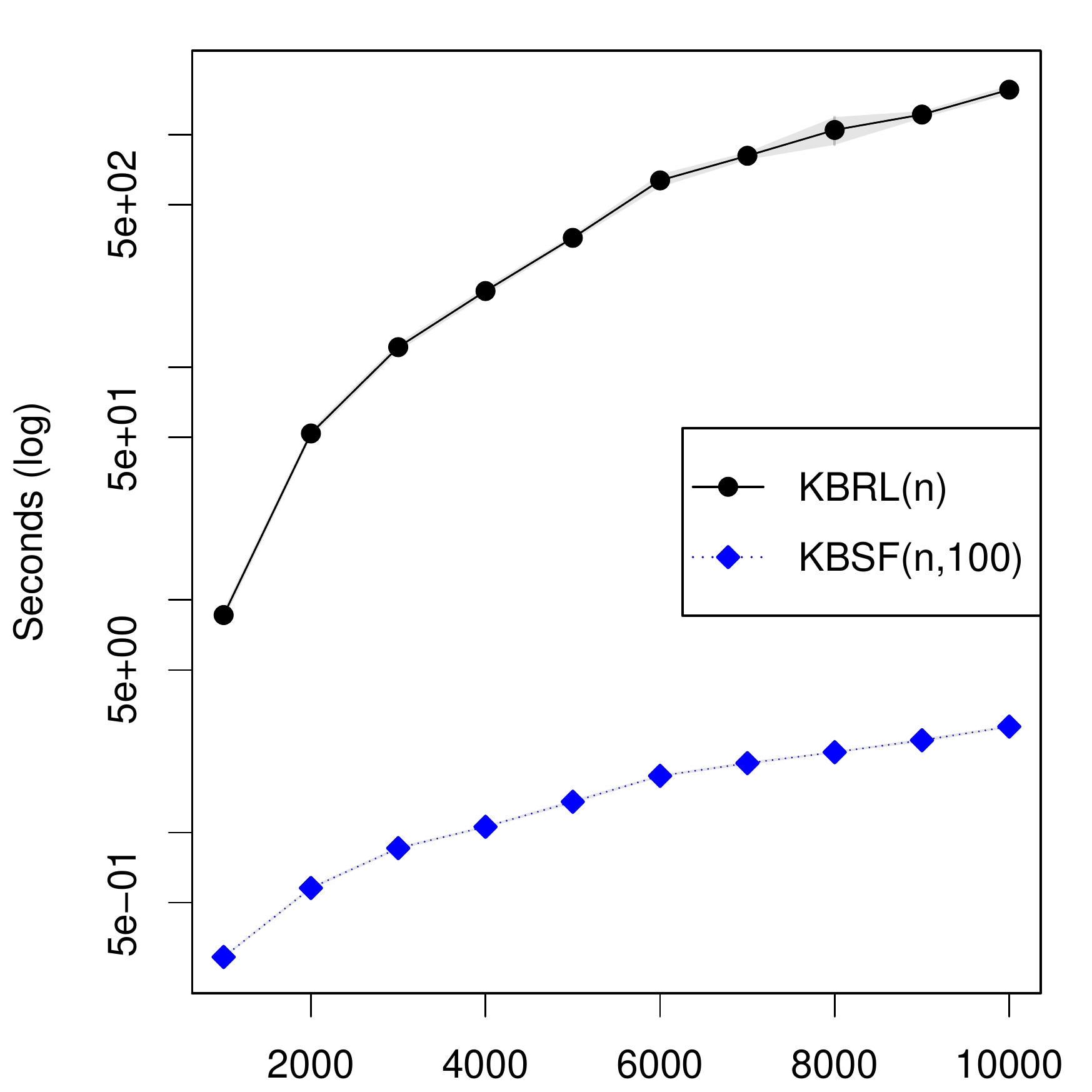}
    }
\caption{Results on the puddle-world task averaged over $50$ runs. The
algorithms were evaluated on a set of test states distributed over a region of
the state space surrounding the ``puddles'' (details in 
Appendix~\ref{seca:exp_details}). The shadowed regions
represent $99\%$ confidence intervals. \label{fig:puddle}}
\end{figure*} 

In Figures~\ref{fig:puddle_n} and~\ref{fig:puddle_n_time} we fix $m$ and vary
$n$. Observe in Figure~\ref{fig:puddle_n} how KBRL and KBSF have similar
performances, and both improve as $n$ increases. However, since
KBSF is using a model of fixed size, its computational cost depends only
linearly on $n$, whereas KBRL's cost grows 
with $n^2\hat{n}$, roughly. This explains the huge
difference in the algorithms's run times shown in
Figure~\ref{fig:puddle_n_time}.

\subsubsection{Single and double pole-balancing (comparison with LSPI) }
\label{sec:pole}

We now evaluate how KBSF compares to other modern reinforcement
learning algorithms on more difficult tasks.
We first contrast our method with \ctp{lagoudakis2003least} least-squares
policy iteration algorithm (LSPI). 
Besides its popularity, LSPI is a natural
candidate for such a comparison 
for three reasons: it also builds an approximator of fixed size out of a batch
of sample transitions, it has good theoretical guarantees, and it has been
successfully applied to several reinforcement learning tasks.

We compare the performance of LSPI and KBSF on the pole balancing task. Pole
balancing has a long history as a benchmark problem
because it represents a rich class of unstable 
systems~\cp{michie68boxes,anderson86learning,barto83neuronlike}. The
objective in this problem is to apply forces to a wheeled cart moving along a
limited track in order to keep one or more poles hinged to the cart from falling
over. 
There are several variations of the task with
different levels of difficulty; among them, balancing two poles 
side by side is particularly hard~\cp{wieland91evolving}.
In this paper we compare LSPI and KBSF on both the single- and two-poles
versions of the problem. We implemented the tasks using a realistic simulator
described by \ct{gomez2003robust}. We refer the reader to
Appendix~\ref{seca:exp_details} for details on the problems's configuration.

The experiments were carried out as described in the previous section,
with sample transitions collected by a random policy and then clustered by the
$k$-means algorithm.
In both versions of the pole-balancing task LSPI used the same data and
approximation architectures as KBSF.
To make the comparison with LSPI as fair as possible, 
we fixed the width of KBSF's kernel \kera\ at $\tau=1$ 
and varied \taub\ in $\{0.01, 0.1, 1\}$ for both algorithms.
Also, policy iteration was used
to find a decision policy for the MDPs constructed by KBSF, and this
algorithm was run for a maximum of $30$ iterations, the same limit used 
for LSPI.

Figure~\ref{fig:pole} shows the results of LSPI and KBSF on the single and
double pole-balancing tasks. We call attention to the fact
that the version of the problems used here is
significantly harder than the more commonly-used variants in which the decision
policies are evaluated on a single state close to the origin. This is probably
the reason why LSPI achieves a success rate of
no more than $60\%$ on the single pole-balancing task, as shown in
Figure~\ref{fig:pole_m}. In contrast, KBSF's decision policies are able to
balance the pole in $90\%$ of the attempts, on average, using as few as $m=30$
representative states.

\begin{figure*}
\centering
   \subfloat[Performance on single pole-balancing]{ 
   \label{fig:pole_m}
   \includegraphics[scale=\scll]{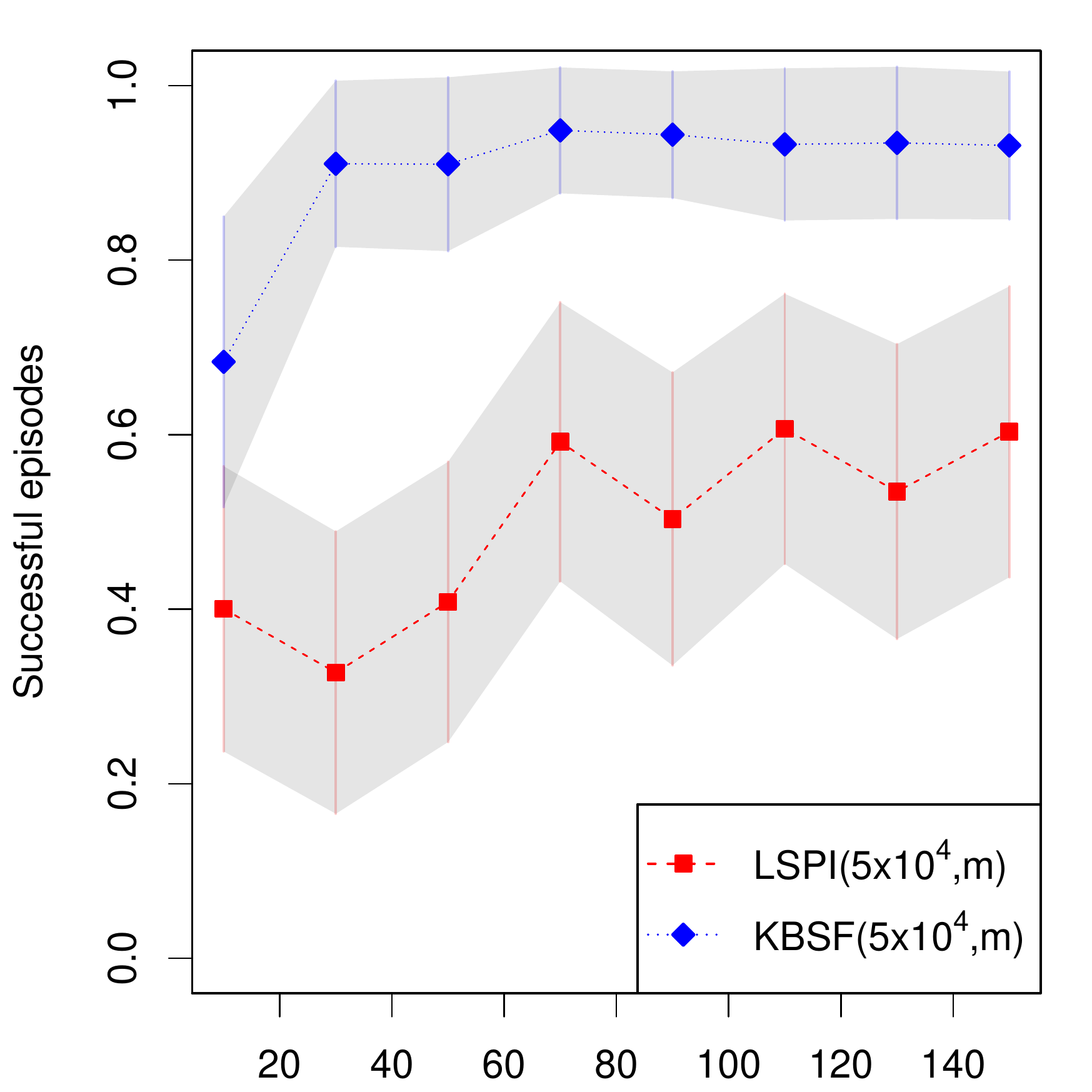} 
   }
    \subfloat[Run time on single pole-balancing] { 
   \label{fig:pole_m_time}
   \includegraphics[scale=\scll]{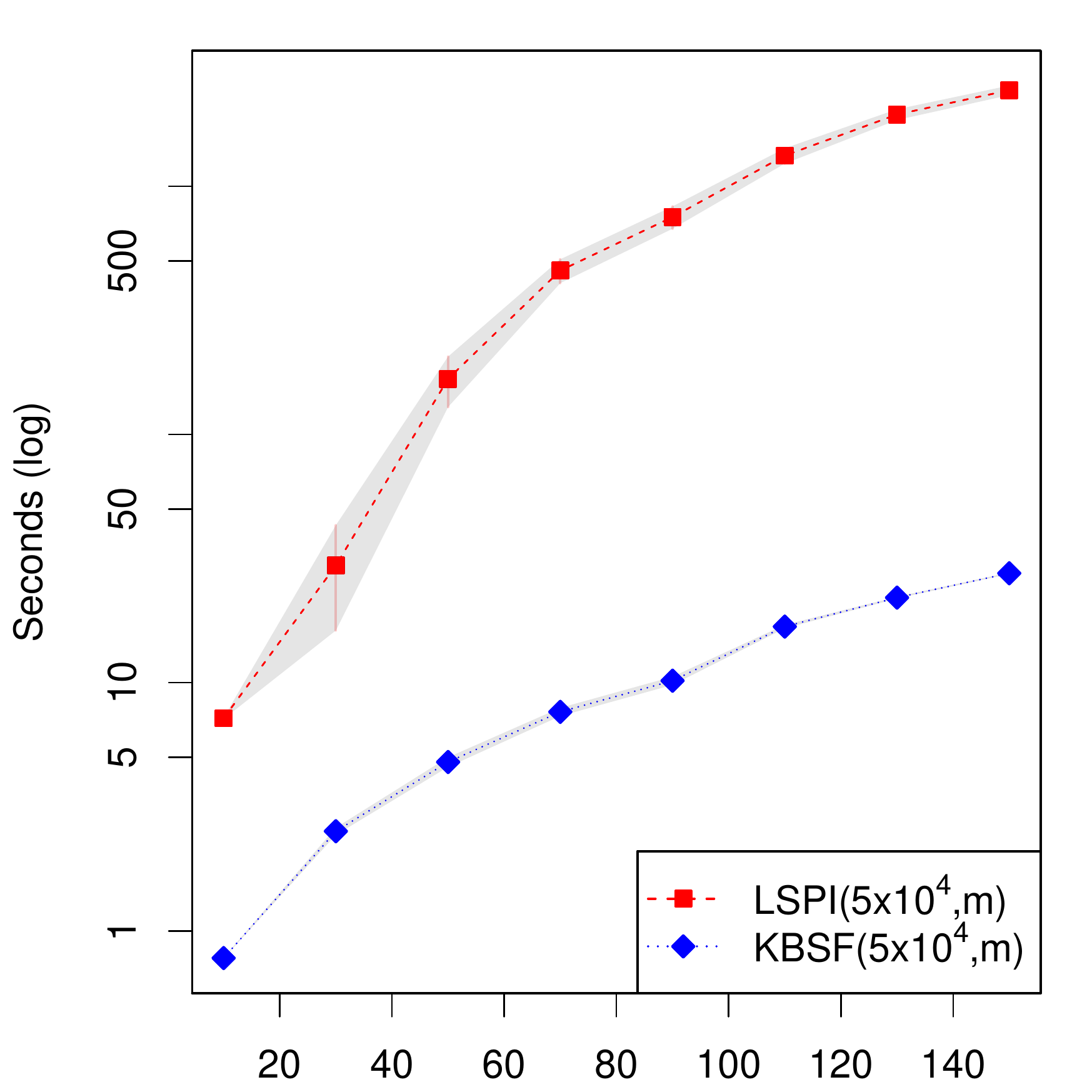}
    }
    
   \subfloat[Performance on double pole-balancing]{ 
   \label{fig:double_pole_m}
   \includegraphics[scale=\scll]{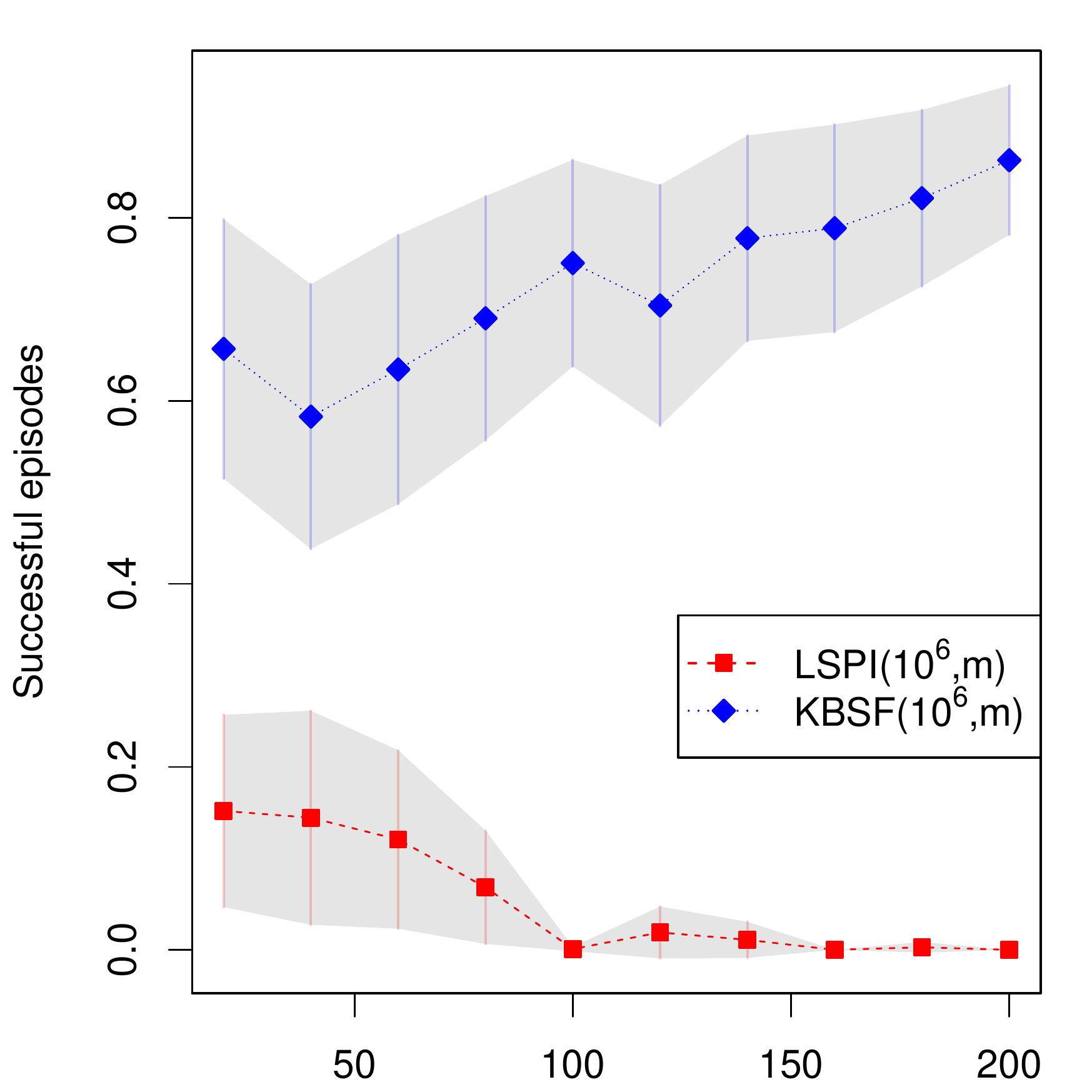} 
   }
    \subfloat[Run time on double pole-balancing] { 
   \label{fig:double_pole_m_time}
   \includegraphics[scale=\scll]{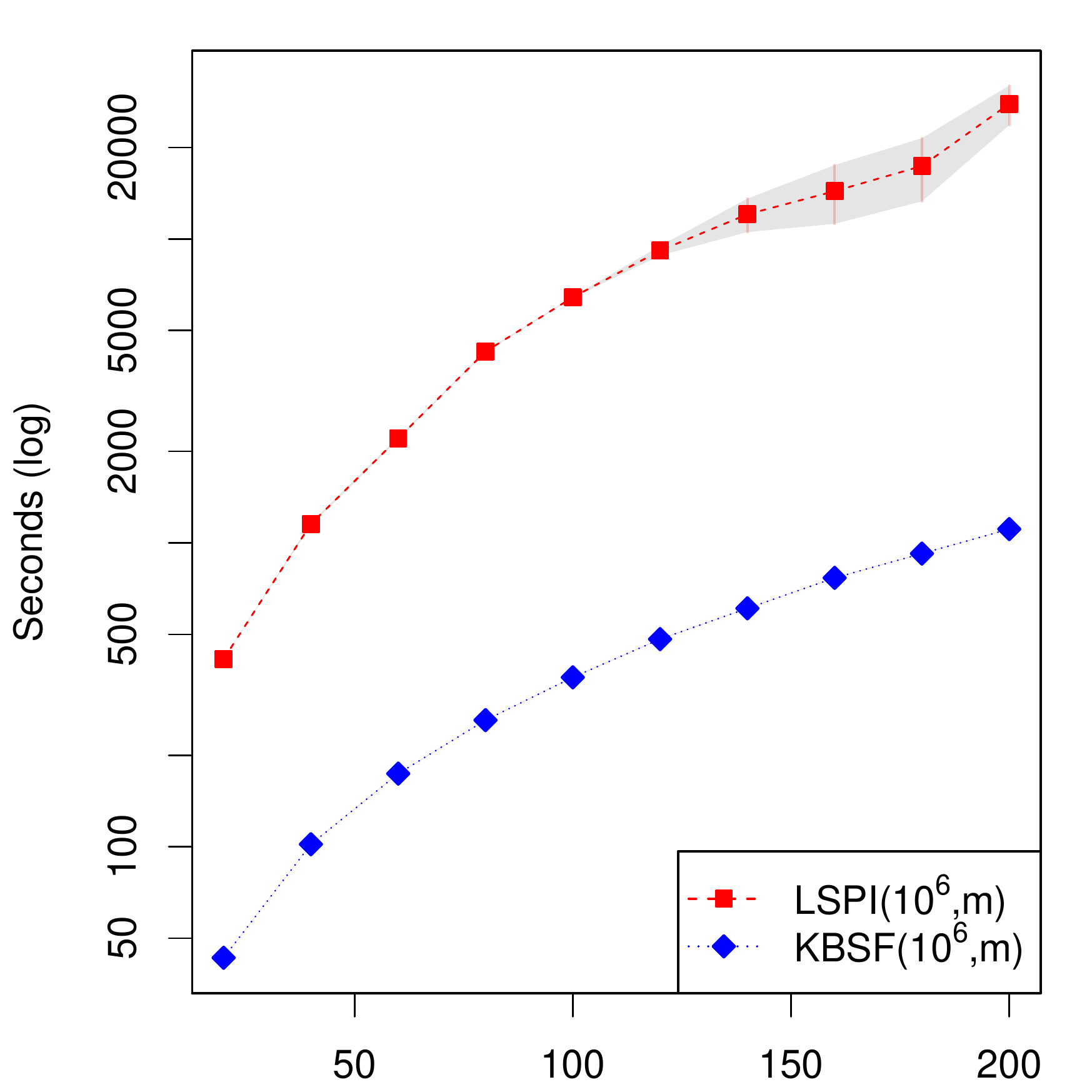}
    }
\caption{Results on the pole-balancing tasks,
as a function of the number of representative states $m$,
averaged over $50$
runs. The values correspond to the fraction of episodes initiated from the test
states in which the pole(s) could be balanced for $3\ts000$ steps (one minute of
simulated time). The test sets were regular grids defined over the hypercube
centered at the origin and covering $50\%$ of the state-space axes in each
dimension (see Appendix~\ref{seca:exp_details}). Shadowed regions represent
$99\%$ confidence intervals.\label{fig:pole}}
\end{figure*}

The results of KBSF on the double pole-balancing task are still more impressive.
As \ct{wieland91evolving} rightly points out, this version of the problem is
considerably more difficult than its single pole variant, and previous attempts
to apply reinforcement-learning techniques to this domain resulted in
disappointing performance~\cp{gomez2006efficient}. As shown in
Figure~\ref{fig:double_pole_m}, KBSF($10^{6}$, $200$) is able to achieve a
success rate of more than $80\%$. To put this number in perspective, recall
that some of the test states are quite challenging, with the two poles inclined
and falling in opposite directions.

The good performance of KBSF comes at a relatively low computational cost. A
conservative estimate reveals that, were KBRL($10^{6}$) run on the same computer
used for these experiments, we would have to wait for more than $6$
\emph{months} to see the results.
KBSF($10^{6}$, $200$) delivers a decision policy in
less than $7$ minutes. KBSF's computational cost also compares well with that of
LSPI, as shown in Figures~\ref{fig:pole_m_time}
and~\ref{fig:double_pole_m_time}. LSPI's policy-evaluation step involves the
update and solution of a linear system of equations, which take $O(nm^{2})$ and
$O(m^{3}|A|^{3})$, respectively. In addition, the policy-update stage requires
the definition of $\pi(\yka)$ for all $n$ states in the set of sample
transitions. In contrast, at each iteration KBSF only performs $O(m^{3})$
operations to evaluate a decision policy and $O(m^{2}|A|)$ operations to update
it. 

\subsubsection{HIV drug schedule (comparison with fitted $Q$-iteration)}
\label{sec:hiv}

We now compare KBSF with the fitted $Q$-iteration
algorithm~\cp{ernst2005tree,antos2007fitted,munos2008finite}. 
Fitted $Q$-iteration is a conceptually simple method that also builds
its approximation based solely on sample transitions.
Here we adopt this algorithm with 
an ensemble of trees generated by \ctp{geurts2006extremely} extra-trees
algorithm.
We will refer to the resulting method as
FQIT.

We chose FQIT for our comparisons because it has shown excellent
performance on both benchmark and real-world reinforcement-learning
tasks~\cp{ernst2005tree,ersnt2006clinical}.
In all experiments reported in this paper we used FQIT with ensembles of $30$
trees. As detailed in
Appendix~\ref{seca:exp_details}, besides the number of trees, FQIT has three
main parameters.
Among them, the minimum number of elements required to split a node
in the construction of the trees,
denoted here by \mne, has a particularly strong effect on both the
algorithm's performance and computational cost.
Thus, in our experiments we fixed FQIT's parameters at reasonable 
values---selected based on preliminary experiments---and
only varied \mne. The respective instances
of the tree-based approach are referred to as 
FQIT(\mne).

We compare FQIT and KBSF on an important medical problem which we will
refer to as the HIV drug schedule
domain~\cp{adams2004dynamic,ersnt2006clinical}.
Typical HIV treatments use drug cocktails containing two types of medication:
reverse transcriptase inhibitors (RTI) and protease inhibitors (PI). 
Despite the success of drug cocktails in maintaining
low viral loads, there are several complications associated with their
long-term use. This has attracted the interest of the scientific community
to the problem of optimizing drug-scheduling strategies. 
One strategy that has
been receiving a lot of attention recently is structured treatment interruption
(STI), in which patients undergo alternate cycles with and without the drugs.
Although many successful STI treatments have been reported in the literature,
as of now there is no consensus regarding the exact protocol that
should be followed~\cp{bajaria2004predicting}.

The scheduling of STI treatments can be seen as a sequential decision problem
in which the actions correspond to the types of cocktail that should
be administered to a patient~\cp{ersnt2006clinical}. 
To simplify the problem's formulation, 
it is assumed that RTI and PI drugs are administered at fixed amounts, 
reducing the actions to the four possible combinations of 
drugs: none, RTI only, PI only, or both. The goal is to minimize the viral load
using as little drugs as
possible. Following \ct{ersnt2006clinical}, 
we performed our experiments using a model 
that describes the interaction of
the immune system with HIV. This model was 
developed by~\ct{adams2004dynamic} and has been identified and validated
based on real clinical data. 
The resulting reinforcement learning task has a $6$-dimensional
continuous state space whose variables describe the overall
patient's condition.

We formulated the problem exactly as proposed 
by~\ct[see Appendix~\ref{seca:exp_details} for details]{ersnt2006clinical}.
The strategy used to generate the data also followed the protocol proposed
by these authors, which we now briefly explain. Starting from a batch
of $6\ts000$
sample transitions generated by a random policy, each algorithm first computed
an initial approximation of the problem's optimal value function. Based on this
approximation, a $0.15$-greedy policy was used to collect a second batch of
$6\ts000$ transitions, which was merged with the 
first.\footnote{As explained
by \ct{sutton98reinforcement}, an $\epsilon$-greedy policy selects the action
with maximum value with probability $1-\epsilon$, and 
with probability $\epsilon$ it picks an action uniformly at random.}
This process was
repeated for $10$ rounds, resulting in a total of $60\ts000$ sample transitions.

We varied FQIT's parameter \mne\ in the set 
$\{50, 100, 200\}$.
For the experiments with KBSF, we fixed $\tau = \taub = 1$
and varied $m$ in $\{2\ts000, 4\ts000, ..., 10\ts000\}$
(in the rounds in which $m \ge n$ we simply used all states \yia\ as 
representative states). 
As discussed in the beginning of this section, it is possible to 
reduce KBSF's computational cost with the use of sparse kernels. 
In our experiments with the HIV drug schedule task, we 
only computed the $\nk = 2$
largest values of $\gaussa(\rs_{i},\cdot)$ and 
the $\nkb = 3$ largest values of $\gaussb(\yia, \cdot)$ (see
Appendix~\ref{seca:algorithms}).
The representative states $\rs_{i}$ were selected at random from the set of
sampled states \yia\ (the reason for this will become clear shortly). 
Since in the current experiments the number
of sample transitions $n$ was fixed,
we will refer to the particular instances of our algorithm 
simply as KBSF($m$).

Figure~\ref{fig:hiv} shows the results obtained by FQIT and KBSF on the
HIV drug schedule task. As shown in Figure~\ref{fig:hiv_return_m}, FQIT's
performance improves when \mne\ is decreased, as expected. 
In contrast, increasing the number of representative states $m$ does not have a
strong impact on the quality of KBSF's solutions (in fact, in some cases the
average return obtained by the resulting policies decreases slightly when $m$
grows).
Overall, the performance of KBSF on the HIV drug schedule task is not
nearly as impressive as on the previous problems. For example, even when using
$m=10\ts000$ representative states, which
corresponds to one sixth of the sampled states,
KBSF is unable to reproduce the performance of FQIT with $\mne = 50$. 

\begin{figure*}
\centering
   \subfloat[Performance]{ 
   \label{fig:hiv_return_m}
   \includegraphics[scale=\scll]{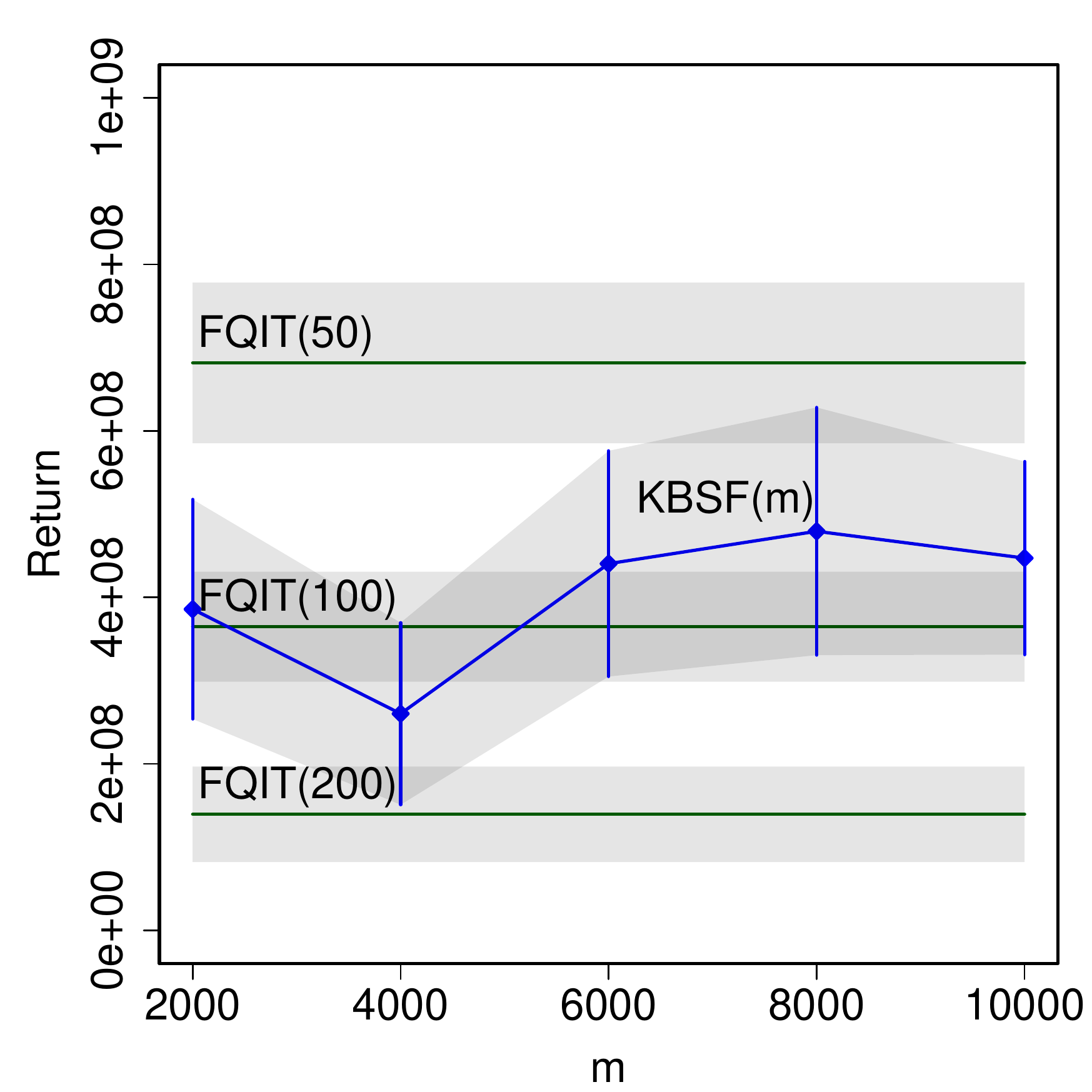} 
   }
    \subfloat[Run times] { 
   \label{fig:hiv_time_m}
   \includegraphics[scale=\scll]{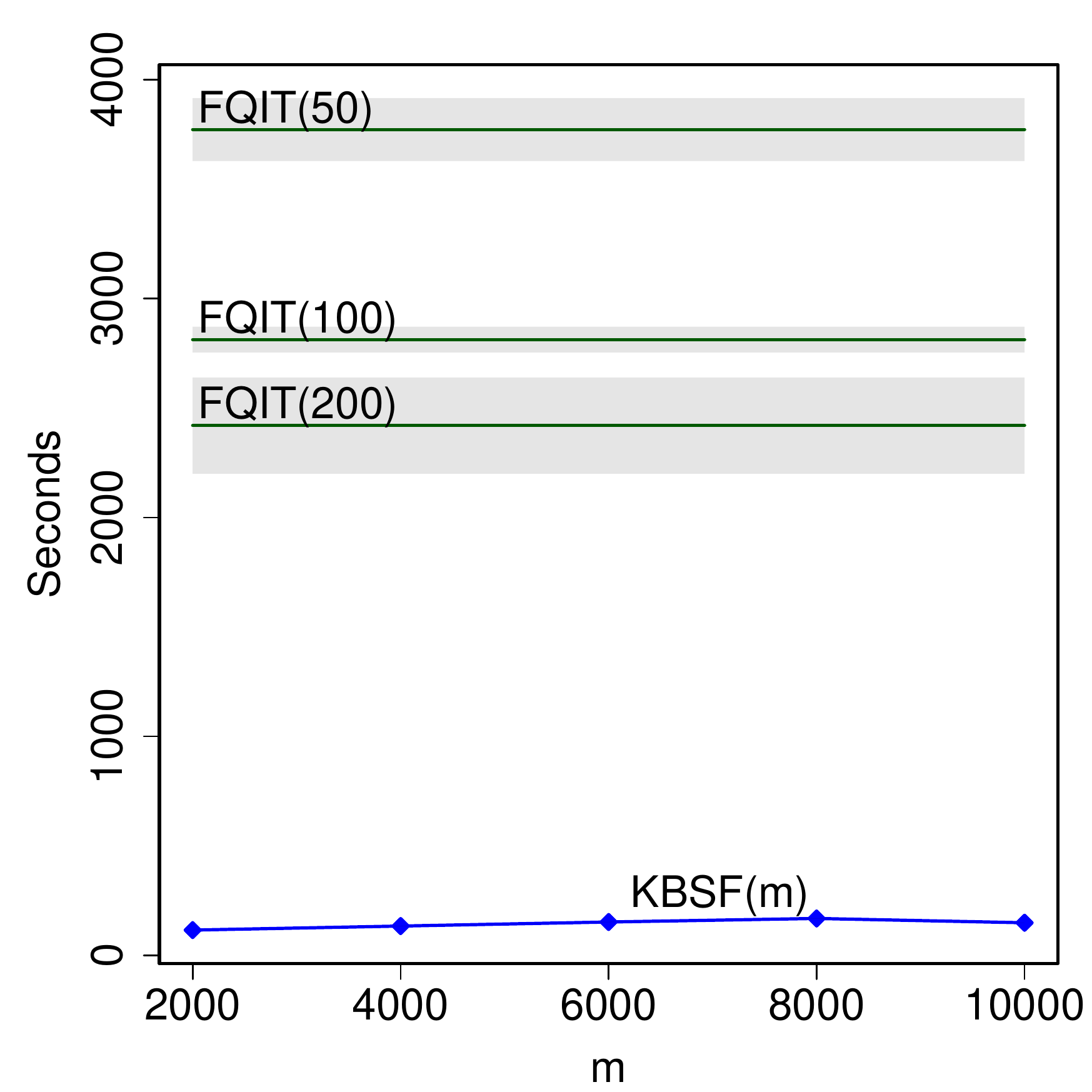}
    }

   \subfloat[Performance]{ 
   \label{fig:hiv_return}
   \includegraphics[scale=\scll]{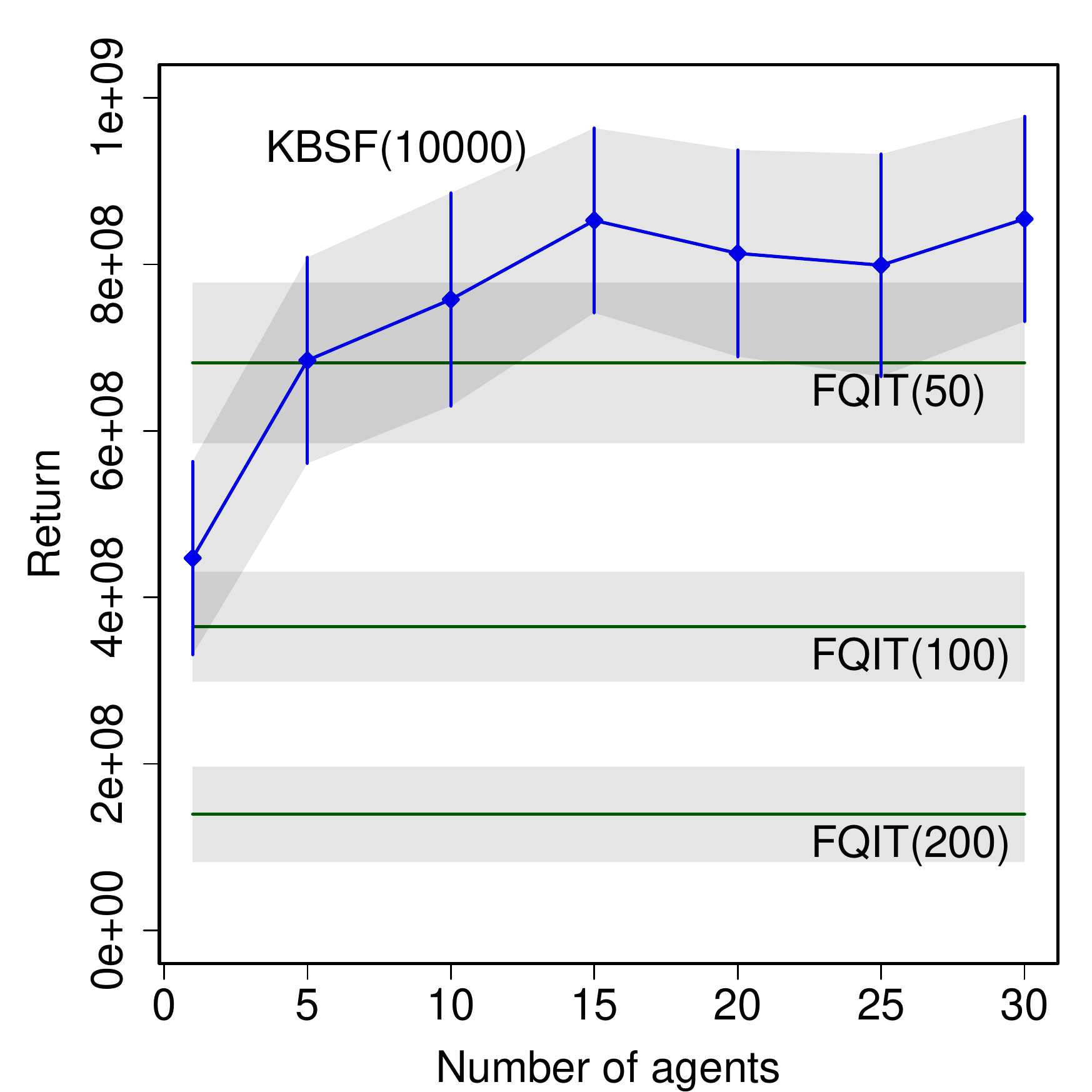} 
   }
    \subfloat[Run times] { 
   \label{fig:hiv_time}
   \includegraphics[scale=\scll]{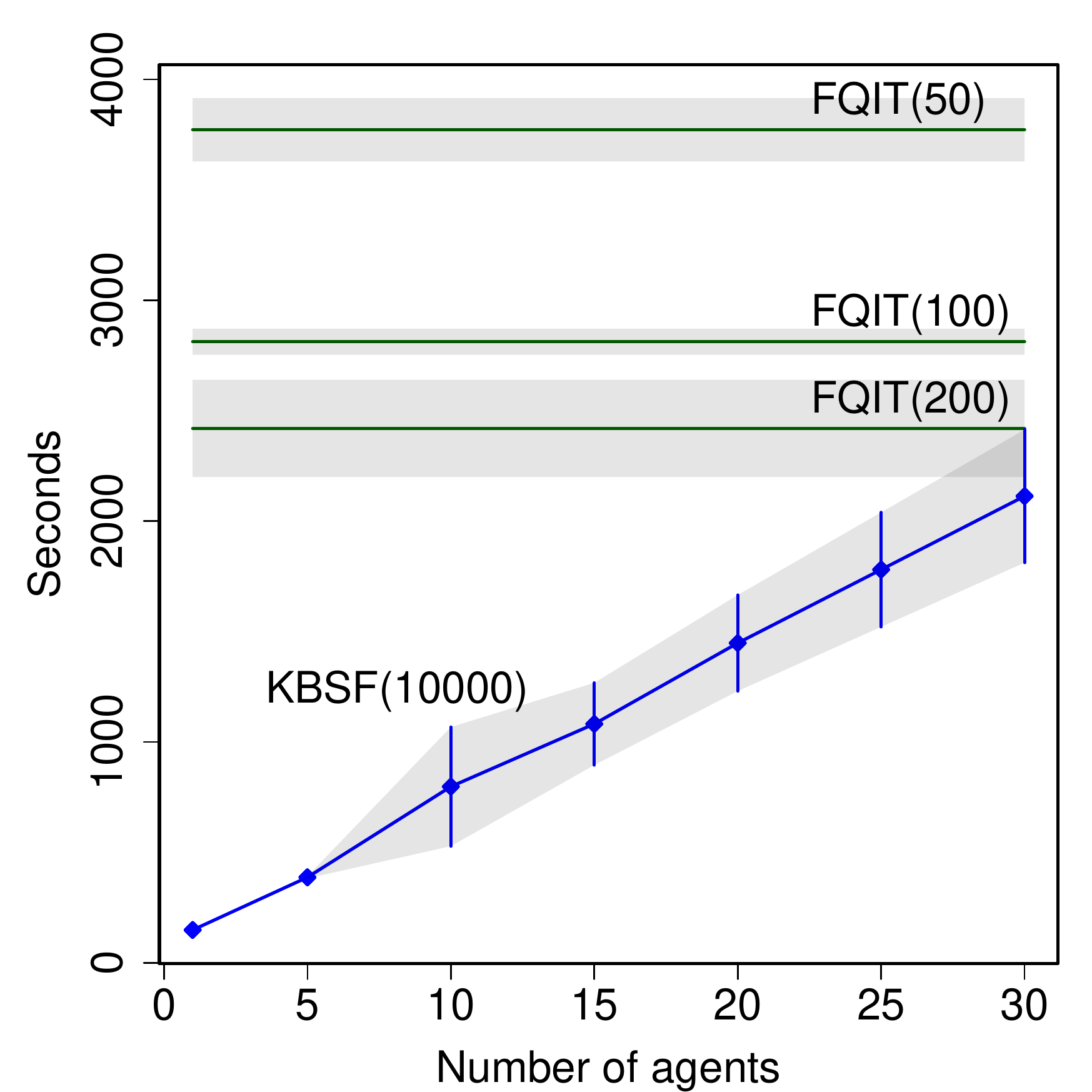}
    }
\caption{Results on the HIV drug schedule task averaged over $50$ runs. The
STI policies were evaluated for $5\ts000$ days starting from a state
representing a patient's unhealthy state (see
Appendix~\ref{seca:exp_details}). The shadowed regions
represent $99\%$ confidence intervals. \label{fig:hiv}}
\end{figure*} 

On the other hand, when we look at Figure~\ref{fig:hiv_time_m}, it is clear
that the difference on the algorithms's performance is counterbalanced by a 
substantial difference on the associated computational costs. As an
illustration, note that 
KBSF($10\ts000$) is $15$ times faster than FQTI($100$) and $20$ times faster 
than FQTI($50$).
This difference on the algorithms's run times is expected, since each iteration
of FQIT involves the construction (or update) of an ensemble of trees, each one
requiring  at least $O(n \log (n/\mne))$ operations, and the improvement of the
current decision policy, which is $O(n|A|)$~\cp{geurts2006extremely}. As
discussed before, KBSF's efficiency comes from the fact that its computational
cost per iteration is independent of the number of sample transitions $n$.

Note that the fact that FQIT uses an ensemble of trees is both a blessing and a
curse. If on the one hand this reduces the variance of the approximation, on
the other hand it also increases the algorithm's computational 
cost~\cp{geurts2006extremely}. Given the
big gap between FQIT's and KBSF's time complexities, one may wonder if
the latter can also benefit from averaging over several models. In order to
verify this hypothesis, we implemented a very simple model-averaging strategy
with KBSF: we trained several agents independently, using 
Algorithm~\ref{alg:kbsf} on the same set of sample
transitions, and then put them together on a single ``committee''. 
In order to increase the variability within the committee of
agents, instead of using $k$-means to determine the representative states 
$\rs_{j}$ we simply selected them uniformly at random from the set of sampled
states \yia\ (note that this has the extra benefit of reducing the method's
overall computational cost).
The actions selected by the committee of agents were determined by
``voting''---that is, we simply picked the action chosen by the majority of
agents, with ties
broken randomly. 

We do not claim that the approach described above is the best 
model-averaging strategy to be used with 
KBSF. However, it seems to be
sufficient to boost the algorithm's performance considerably, as shown in 
Figure~\ref{fig:hiv_return}. Note how KBSF already performs 
comparably to FQTI($50$) when
using only $5$ agents in the committee. When this number is increased 
to $15$, the expected return of KBSF's agents is considerably larger than 
that of the best FQIT's agent, with only a small overlap between 
the $99\%$ confidence intervals associated with the algorithms's results. The
good performance of KBSF is still more impressive when we look at
Figure~\ref{fig:hiv_time}, which shows that even when using a committee of $30$
agents this algorithm is faster than FQIT($200$).

In concluding, we should mention that, overall, our experience with FQIT
confirms \ctp{ernst2005tree} report: it is a
stable, easy-to-configure method that usually delivers good solutions. 
In fact, given the algorithm's ease of use, when the problem at hand can be
solved off-line using a moderate number of sample transitions, FQIT 
may be a very good alternative. 
On the other hand, for on-line problems or off-line problems involving a large
number of sample transitions, FQIT's computational cost can be prohibitive in
practice. In Section~\ref{sec:triple_pole} we will discuss an experiment in
which such a computational demand effectively precludes the use of this
algorithm.

\subsubsection{Epilepsy suppression (comparison with LSPI and fitted
$Q$-iteration)}
\label{sec:epilepsy}

We conclude our empirical evaluation of KBSF by using it to learn a
neuro-stimulation policy for the treatment of epilepsy. 
It has been shown that the electrical stimulation
of specific structures in the neural system at fixed
frequencies can effectively suppress the occurrence of
seizures~\cp{durand2001supression}. Unfortunately, {\sl in vitro}
neuro-stimulation experiments
suggest that fixed-frequency pulses are not equally effective across epileptic
systems. Moreover, the long term use of this treatment may
potentially damage the patients's neural tissues. Therefore, it is desirable to
develop neuro-stimulation policies that replace the fixed-stimulation regime
with an adaptive scheme.

The search for efficient neuro-stimulation strategies can be seen as a
reinforcement learning problem. Here we study it using a generative
model developed by~\ct{bush2009manifold} based on real data collected from
epileptic rat hippocampus slices. This model was shown to reproduce the seizure
pattern of the original dynamical system and 
was later validated through the deployment of a learned treatment policy on a
real brain slice~\cp{bush2009manifold2}. The associated decision problem has a
five-dimensional continuous state space and highly non-linear dynamics. At
each time step the agent must choose whether or not to apply an electrical
pulse. The goal is to suppress seizures as much as possible while minimizing
the total amount of stimulation needed to do so.
 
The experiments were performed as described in Section~\ref{sec:puddle}, with
a single batch of sample transitions collected by a policy that selects 
actions uniformly at random. 
Specifically, the random policy was used to collect $50$ trajectories of
length $10\ts000$, resulting in a total of $500\ts000$ sample transitions.
We use as a baseline for our comparisons the 
already mentioned fixed-frequency stimulation policies usually adopted in {\sl
in vitro} clinical studies~\cp{bush2009manifold2}. 
In particular, we considered policies that apply electrical pulses at
frequencies of $0$ Hz, $0.5$ Hz, $1$ Hz, and $1.5$ Hz. 

We compare KBSF with LSPI and FQIT.
For this task we ran both LSPI and KBSF with sparse kernels, that is, we only
computed the kernels at the $6$-nearest neighbors of a
given state ($\nk=\nkb=6$; see Appendix~\ref{seca:algorithms} for
details). This modification
made it possible to use $m = 50\ts000$ representative states with KBSF. Since
for LSPI the reduction on the computational cost was not very significant, we
fixed $m = 50$ to keep its run time within reasonable bounds.
Again, KBSF and LSPI used the same approximation architectures, with 
representative states defined by the $k$-means algorithm.
We fixed $\tau = 1$ and varied $\taub$ in $\{0.01, 0.1, 1\}$.
FQIT was configured as described in the previous section, with the parameter
\mne\ varying in $\{20, 30, ...,  200\}$. In general, we observed that
the performance of the tree-based method improved with smaller values for \mne,
with an expected increase in the computational cost. Thus, in order to give an
overall characterization of FQIT's performance, we only report
the results obtained with the extreme values of $\mne$.

Figure~\ref{fig:epilepsy} shows the results on the epilepsy-suppression task.
In order to obtain different compromises between the problem's two conflicting
objectives, we varied the relative magnitude of the penalties associated with 
the occurrence of seizures and with the application of an electrical 
pulse~\cp{bush2009manifold,bush2009manifold2}.
Specifically, we fixed the latter at $-1$ and varied the former 
with values in $\{-10, -20, -40\}$. This appears in the plots as subscripts next
to the algorithms's names. 
As shown in Figure~\ref{fig:ep_pareto}, LSPI's policies seem to prioritize
reduction of stimulation  at the expense of higher seizure occurrence, which is
clearly sub-optimal from a clinical point of view. FQIT($200$) also performs
poorly, with solutions representing no advance over the fixed-frequency
stimulation strategies. In contrast, FQTI($20$) and KBSF are both able to
generate decision policies that are superior to the 1~Hz policy, which is the
most efficient stimulation regime known to date in the clinical
literature~\cp{jerger95periodic}. However, as shown in
Figure~\ref{fig:ep_times}, KBSF is able to do it at least $100$ times faster
than the tree-based method.

\begin{figure*}
\centering
   \subfloat[Performance. The length of the rectangles's edges represent $99\%$
confidence intervals.]{ 
   \label{fig:ep_pareto}
   \includegraphics[width=3.5in]{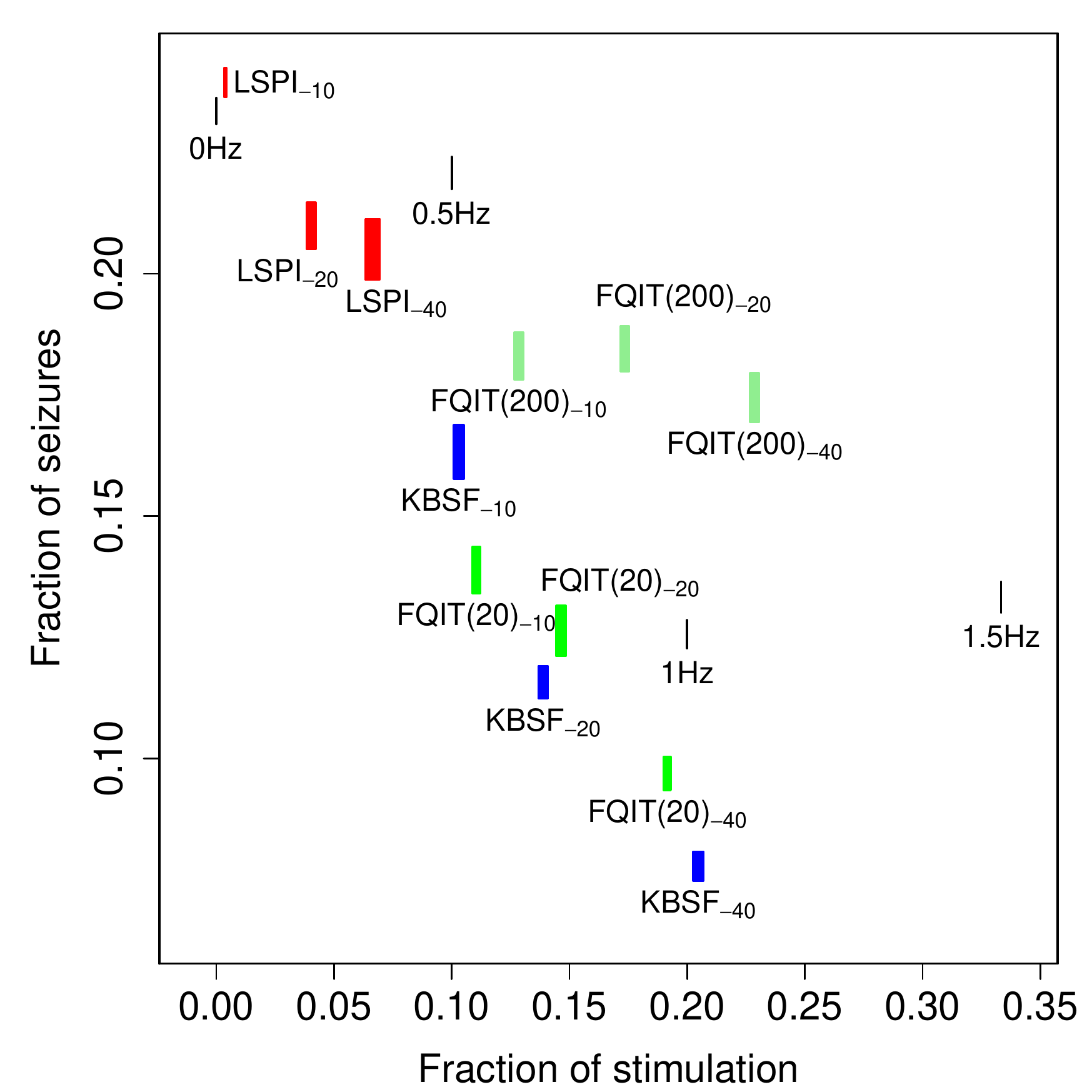}
    } 
   \subfloat[Run times (confidence intervals do not show up in logarithmic
scale) ]{
    \label{fig:ep_times}
    \includegraphics[height=3.5in]{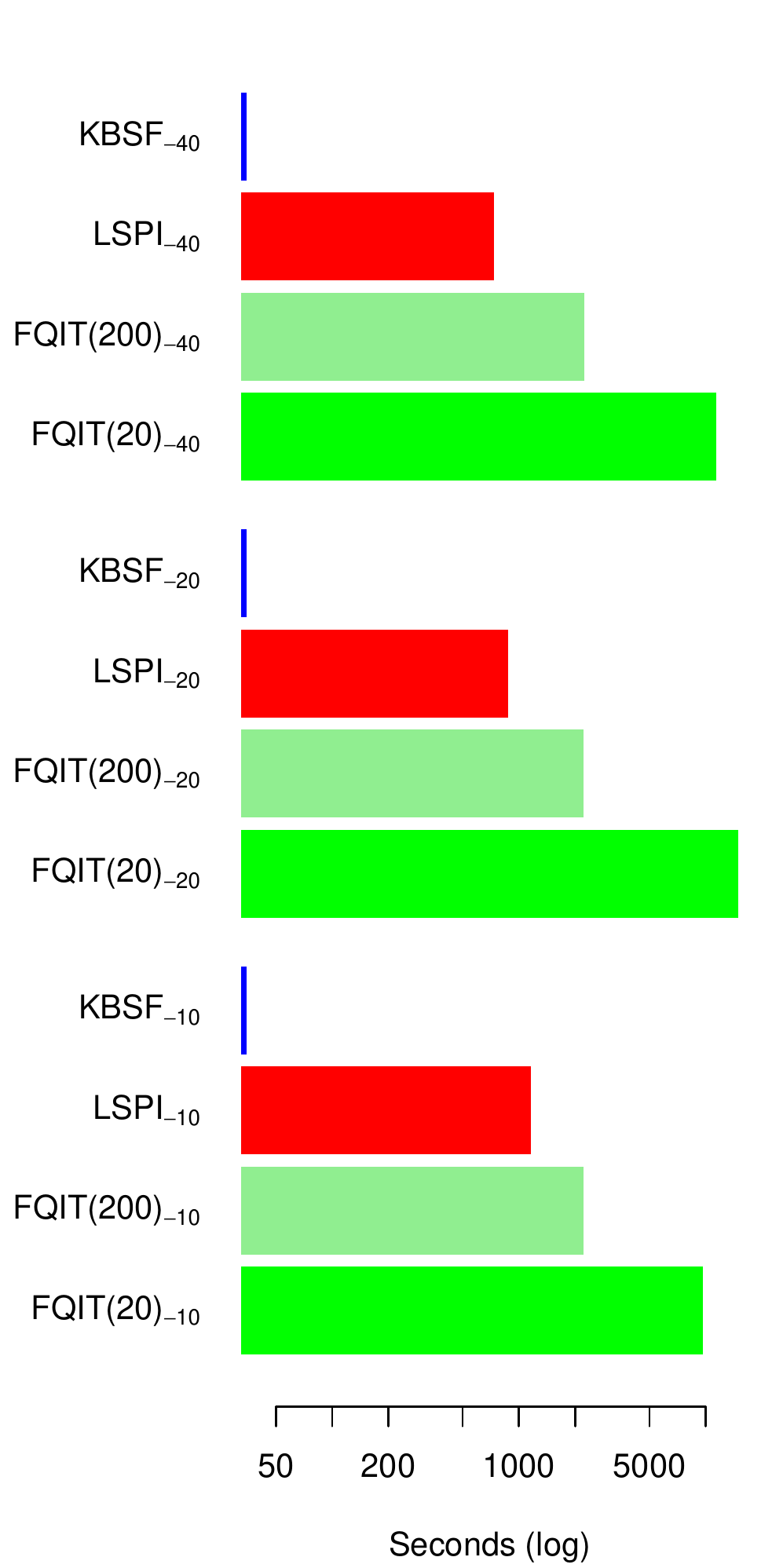} 
   }
\caption{Results on the epilepsy-suppression problem averaged over $50$
runs. The decision policies were evaluated on  episodes
of $10^{5}$ transitions starting from a fixed set of $10$ test states
drawn uniformly at random. \label{fig:epilepsy}}
\end{figure*}

\section{Incremental KBSF}
\label{sec:ikbsf}

As clear in the previous section, one characteristic of KBSF
that sets it apart from other methods is its low demand in terms of
computational resources. Specifically, both time and memory complexities of
our algorithm are linear in the number of sample transitions $n$. In terms of
the number of operations performed by the algorithm, this is the best one can do
without discarding transitions. However, in terms of memory usage, it is
possible to do even better. In this section we show how to build KBSF's
approximation 
incrementally, without ever having access to the entire set of sample
transitions at once.
Besides reducing the memory complexity of the algorithm, this 
modification has the
additional advantage of making KBSF suitable for on-line reinforcement
learning.

In the batch version of KBSF, described in 
Section~\ref{sec:kbsf},
the matrices \Pba\ and vectors \rba\ are
determined using all the transitions in the corresponding sets \Sca.
This has two undesirable
consequences. First, the construction of the MDP \bM\ requires an amount of
memory of $O(\hat{n}m)$. Although this is a
significant improvement over KBRL's memory usage, which is 
lower bounded by $(\min_{a}{n_a})^{2}|A|$, in more challenging domains
even a linear dependence on $\hat{n}$ may
be impractical. Second, in the batch version of KBSF the only 
way to incorporate new data into the model \bM\ is to recompute 
the multiplication $\Pba = \Kda \Dda$ for all actions $a$ for which 
there are new sample transitions available. Even if we ignore the
issue with memory usage, this is clearly inefficient in terms of
computation.
In what follows we 
present an incremental version of KBSF that circumvents these important
limitations~\cp{barreto2012online}.

We assume the same scenario considered in Section~\ref{sec:kbsf}: there is a
set of sample transitions $\Sca = \{(\xka,\rka,\yka)| k = 1, 2,
...,n_{a}\}$ 
associated with each action $a \in A$, where $\xka, \yka \in \Sc$ and
$\rka \in \R$, and a set of representative states 
$\Sb = \{\rs_{1}, \rs_{2}, ..., \rs_{m}\}$, with $\rs_{i} \in \Sc$.
Suppose now that we split the set of sample transitions \Sca\ in two 
subsets $S_{1}$ and $S_{2}$ such that 
$S_{1} \cap S_{2} = \emptyset$ and 
$S_{1} \cup S_{2} = \Sca$ (we drop the ``$a$'' superscript 
in the sets $S_{1}$ and $S_{2}$ to improve clarity).
 Without loss of generality, suppose that the sample
transitions are indexed so that
\begin{equation*}
S_{1} \equiv \{(\xka,\rka,\yka)| k = 1, 2, ..., n_{1}\} 
\mbox{ and }
S_{2} \equiv \{(\xka,\rka,\yka)| k = n_{1} + 1, n_{1}+ 2, ...,
n_{1} + n_{2} = n_{a}\}.
\end{equation*}
Let \Pban\ and \rban\ be matrix \Pba\ and vector \rba\
computed by KBSF using only the $n_{1}$ transitions in $S_{1}$
(if $n_{1}=0$, we define $\Pban = \mat{0} \in \R^{m \times m}$ and 
$\rban = \mat{0} \in \R^{m}$ for all $a \in A$).
We want to compute 
\Pbann\ and \rbann\ from \Pban, \rban, and $S_{2}$, without using the set of
sample transitions~$S_{1}$.

We start with the transition matrices \Pba. We know that 
\begin{equation*}
\begin{array}{cl}
\bar{p}^{_{S_1}}_{ij} & = \sum_{t=1}^{n_{1}} \dot{k}^{a}_{it} \dot{d}^{a}_{tj}
= \sum_{t=1}^ {n_{1}} \dfrac{\gaussa(\rs_{i}, \xta)}{\sum_{l=1}^{n_{1}}
\gaussa(\rs_{i}, \xla)} 
\dfrac{\gaussb(\yta, \rs_{j})}{\sum_{l=1}^{m} \gaussb(\yta, \rs_{l})} \\ \\
& = 
\dfrac{1}{\sum_{l=1}^{n_{1}} \gaussa(\rs_{i}, \xla)}
\sum_{t=1}^ {n_{1}} \dfrac{\gaussa(\rs_{i}, \xta) \gaussb(\yta,
\rs_{j})}{\sum_{l=1}^{m} \gaussb(\yta, \rs_{l})}. 
\end{array}
\end{equation*}
To simplify the notation, define
\begin{equation*}
w^{_{S_1}}_{i} = \sum_{l=1}^{n_{1}} \gaussa(\rs_{i}, \xla), \;
 w^{_{S_2}}_{i} = \sum_{l=n_{1}+1}^{n_{1}+n_{2}} \gaussa(\rs_{i}, \xla),
\mbox{ and }
b^{t}_{ij} = \frac{\gaussa(\rs_{i}, \xta) \gaussb(\yta, \rs_{j})}{\sum_{l=1}^{m}
\gaussb(\yta, \rs_{l})}, 
\end{equation*}
with $t \in \{1, 2, ..., n_{1} + n_{2}\}$. Then, we can write
\begin{equation*}
\begin{array}{cl}
\bar{p}^{_{S_1 \cup S_2}}_{ij} 
= \dfrac{1}{w^{_{S_1}}_{i} + w^{_{S_2}}_{i}} \left(\sum_{t=1}^{n_{1}}
b^{t}_{ij}
+ \sum_{t=n_{1}+1}^{n_{1}+n_{2}} b^{t}_{ij} \right) 
= \dfrac{1}{w^{_{S_1}}_{i} + w^{_{S_2}}_{i}} 
\left(
\bar{p}_{ij}^{_{S_1}} w^{_{S_1}}_{i} + \sum_{t=n_{1}+1}^{n_{1}+n_{2}}
b^{t}_{ij}
\right).
\end{array}
\end{equation*}
Now, defining $b_{ij}^{_{S_2}} = \sum_{t=n_{1}+1}^{n_{1}+n_{2}} b^{t}_{ij}$, we
have the 
simple update rule:
\begin{equation}
\mbox{\fbox{
$\label{eq:upd_p}
\bar{p}^{_{S_1 \cup S_2}}_{ij} = 
\dfrac{1}{w^{_{S_1}}_{i} + w^{_{S_2}}_{i}} \left(b_{ij}^{_{S_2}} +
\bar{p}_{ij}^{_{S_1}} w^{_{S_1}}_{i} \right)
$
}}\;.
\end{equation}

We can apply similar reasoning to derive an update rule for the 
rewards $\bar{r}_{i}^{a}$. We know that
\begin{equation*}
\bar{r}^{_{S_1}}_{i} 
= \frac{1}{\sum_{l=1}^{n_{1}} \gaussa(\rs_{i}, \xla)} \sum_{t=1}^{n_{1}}
\gaussa(\rs_{i}, \xta) r^{a}_{t}
=  \frac{1}{w^{_{S_1}}_{i}} \sum_{t=1}^{n_{1}}
\gaussa(\rs_{i}, \xta) r^{a}_{t}.
\end{equation*}
Let $e^{t}_{i} = \gaussa(\rs_{i}, \xta) r^{a}_{t}$,
with $t \in \{1, 2, ..., n_{1} + n_{2}\}$. Then, 
\begin{equation*}
\begin{array}{cl}
\bar{r}^{_{S_1 \cup S_2}}_{i} 
= \dfrac{1}{w^{_{S_1}}_{i} + w^{_{S_2}}_{i}} 
\left(\sum_{t=1}^{n_{1}} e^{t}_{i} + \sum_{t=n_{1} + 1}^{n_{1} + n_{2}}
e^{t}_{i} \right)  
 = \dfrac{1}{w^{_{S_1}}_{i} + w^{_{S_2}}_{i}} 
\left(w^{_{S_1}}_{i} \bar{r}_{i}^{_{S_1}} + \sum_{t=n_{1} + 1}^{n_{1} +
n_{2}}
e^{t}_{i}
\right) \;. 
\end{array}
\end{equation*}
Defining $e^{_{S_2}}_{i} = \sum_{t=n_{1}+1}^{n_{1}+n_{2}} e^{t}_{i} $,
we have the following update rule:

\begin{equation}
\mbox{\fbox{
$
\label{eq:upd_r}
\bar{r}^{_{S_1 \cup S_2}}_{i} = 
\dfrac{1}{w^{_{S_1}}_{i} + w^{_{S_2}}_{i}} \left(e^{_{S_2}}_{i} +
\bar{r}^{_{S_1}}_{i} w^{_{S_1}}_{i} \right)
$
}} \;.
\end{equation}
Since $b_{ij}^{_{S_2}}$, $e^{_{S_2}}_{i}$, and $w^{_{S_2}}_{i}$ can
be computed based on $S_{2}$ only, we can discard the sample transitions
in $S_{1}$ after computing \Pban\ and \rban. 
To do that, we only have to keep the variables
$w^{_{S_1}}_{i}$. These variables can be stored in $|A|$ vectors 
$\wa \in \R^{m}$, resulting in a modest memory overhead.
Note that we can apply the ideas above recursively, further splitting the sets
$S_{1}$ and $S_{2}$ in subsets of smaller size. Thus, we
have a fully incremental way of computing KBSF's MDP which requires almost no
extra memory. 

Algorithm~\ref{alg:kbsf_upd} shows a step-by-step
description of how to update \bM\ based on a set of sample
transitions. Using this method to update its model, 
KBSF's space complexity drops from $O(\hat{n}m)$ to $O(m^{2})$. 
Since the amount of memory used by KBSF is now independent of
$n$, it can process an arbitrary number of sample
transitions (or, more precisely, the limit on the amount of data it
can process is dictated by time only, not space).

\begin{algorithm}
   \caption{Update KBSF's MDP}
   \label{alg:kbsf_upd}
\begin{algorithmic}
   \State {\bfseries Input:} 
\begin{tabular}{llr}
\Pba, \rba, \wa\  for all $a \in A$
& \Comment{Current model} \\
$\Sca = \{(\xka,\rka,\yka)| k = 1, 2, ..., n_{a}\}$  for all $a \in A$ 
& \hspace{5mm} \Comment{Sample transitions} \\

\end{tabular}
   \State {\bfseries Output:} Updated \bM\ and \wa\
    \vspace{2mm}
   \For{$a \in A$}
   \State {\bf for} $t = 1,...,n_a$ {\bf do} 
$z_{t} \la {\sum_{l=1}^{m} \gaussb(\yta, \rs_{l})}$
   \State $n_a \la |\Sca|$
    \For{$i = 1, 2, ..., m$}
    \State $w' \la \sum_{t=1}^{n_a} \gaussa(\rs_{i}, \xta)$
   \For{$j = 1, 2, ..., m$}
   \State $b \la {\sum_{t=1}^{n_a} {\gaussa(\rs_{i},
\xta) \gaussb(\yta, \rs_{j})} / z_{t} }$
    \State $\bar{p}_{ij} \la 
\dfrac{1}{w^{a}_{i} + w'} \left(b + \bar{p}_{ij} w^{a}_{i}
\right)$
\Comment{Update transition probabilities}
    \EndFor
    \State $e \la \sum_{t=1}^{n_a} \gaussa(\rs_{i}, \xta)
r^{a}_{t}$
    \State $\bar{r}_{i} \la 
\dfrac{1}{w^{a}_{i} + w'} \left(e + \bar{r}_{i} w^{a}_{i} \right)$
\Comment{Update rewards}
    \State $w^{a}_{i} \la w^{a}_{i} + w'$
\Comment{Update normalization factor}
    \EndFor
    \EndFor
\end{algorithmic}
\end{algorithm}

Instead of assuming that $S_{1}$ and $S_{2}$ are a partition
of a fixed data set $\Sca$, we can consider that $S_{2}$ was
generated based on the policy learned by KBSF
using the transitions in $S_{1}$.
Thus, Algorithm~\ref{alg:kbsf_upd} provides
a flexible framework for integrating learning and planning within KBSF.
Specifically, our algorithm 
can cycle between learning a model of the problem based on sample
transitions, using such a model to derive a policy, and resorting to this policy
to collect more data.
Algorithm~\ref{alg:inc_kbsf} shows a possible implementation of this framework. 
In order to distinguish it from its batch counterpart, we will call the
incremental version of our algorithm \ikbsf. \ikbsf\ updates the model \bM\ and
the value function \Qb\ at fixed intervals $t_{m}$ and $t_{v}$, respectively.
When $t_{m} = t_{v} = n$, we recover the batch version of
KBSF; when $t_{m} = t_{v} = 1$, we have a fully on-line method which
stores no sample transitions.

\begin{algorithm}
   \caption{Incremental KBSF (\ikbsf)} 
   \label{alg:inc_kbsf}
\begin{algorithmic}
   \State {\bfseries Input:} 
\begin{tabular}{lr}
$\Sb = \{\rs_{1}, \rs_{2}, ..., \rs_{m}\}$ 
& \hspace{35mm} \Comment{Set of representative states}\\
 $t_{m}$ 
& \Comment{Interval to update model} \\
$t_{v}$ 
& \Comment{Interval to update value function} \\
\end{tabular}
   \State {\bfseries Output:} Approximate value function $\tilde{Q}(s,a)$
    \vspace{2mm}
\State $\Pba \la \mat{0} \in \R^{m \times m}$, $\rba \la \mat{0} \in
\R^{m}$, $\wa \la \mat{0} \in \R^{m}$, for all $a \in A$
\State $\Qb \la$ arbitrary matrix in $\R^{m \times |A|}$
\State $s \la $ initial state
\State $a \la $ random action
\For{$t \la 1, 2, ...$}
\State Execute $a$ in $s$ and observe $r$ and $\hat{s}$ 
\State $\Sca \la \Sca \bigcup \{(s ,r, \hat{s})\}$
\If{($t \mod t_{m} = 0$)} 
\Comment{Update model}
\State Add new representative states to \bM\ using $\Sca$
\Comment{This step is optional}
\State Update \bM\ and \wa\ using Algorithm~\ref{alg:kbsf_upd} and \Sca\
\State $\Sca \la \emptyset$ for all $a \in A$ \Comment{Discard transitions}
\EndIf
\State {\bf if} {($t \mod t_{v} = 0$)} update \Qb\ 
\Comment{Update value function}
\State $s \la \hat{s}$ 
\State Select $a$ based on $\tilde{Q}(s,a) = \sum_{i=1}^{m} \kerb(s, \rs_{i})
\bar{q}_{ia}$
\EndFor  
\end{algorithmic}
\end{algorithm}

Algorithm~\ref{alg:inc_kbsf} also allows for the inclusion of new 
representative states to the model \bM. Using Algorithm~\ref{alg:kbsf_upd} this
is easy to do: given a new representative state $\rs_{m+1}$, it
suffices to set $w^{a}_{m+1} = 0$, $\bar{r}^{a}_{m+1} = 0$, 
and $\bar{p}_{m+1, j} = \bar{p}_{j, m+1} = 0$ for $j = 1, 2, ..., m+1$ and all
$a \in A$. 
Then, in the following applications of update rules~(\ref{eq:upd_p})
and~(\ref{eq:upd_r}), the dynamics of \bM\ will naturally reflect the existence
of state $\rs_{m+1}$.
Note that the inclusion of new representative states does not destroy the
information already in the model. This allows \ikbsf\ to refine its
approximation 
on the fly, as needed. One can think of several ways of detecting the need
for new representative 
states. A simple strategy, based on Proposition~\ref{teo:bound_rep_states}, is
to impose a maximum distance allowed between a sampled state \yia\ and the
nearest representative
state,  $\ddb(\yia, 1)$. Thus, anytime the agent encounters a new state 
\yia\ for which $\ddb(\yia, 1)$ is above a given
threshold, \yia\ is added to the model as $\rs_{m+1}$. In
Section~\ref{sec:empirical_inc} we
report experiments with \ikbsf\ using this approach. Before that, though, we
discuss the theoretical properties of the incremental version of our algorithm.

\subsection{Theoretical results}
\label{sec:theory_inc}

As discussed, \ikbsf\ does not need to store sample transitions to
build its approximation. However, the computation of $\tilde{Q}(s,a)$ 
through~(\ref{eq:kbsf_q}) requires all the tuples $(\xia,\ria,\yia)$ to be
available. In some situations, it may be feasible to keep the transitions
in order to compute $\tilde{Q}(s,a)$. 
However, if we want to use \ikbsf\ to its full extend, we need a way of
computing $\tilde{Q}(s,a)$ without using the sample transitions.
This is why upon reaching state $s$ at time step $t$ \ikbsf\ 
selects the action to be performed based on
\begin{equation}
\label{eq:ikbsf_q}
\tilde{Q}_{t}(s,a) = \sum_{i=1}^{m} \kerb(s, \rs_{i})
\bar{Q}_{t} (\rs_{i}, a),
\end{equation}
where $\bar{Q}_{t} (\rs_{i}, a)$ is the action-value function 
available to \ikbsf\ at the $\tth$ iteration  (see
Algorithm~\ref{alg:inc_kbsf}). Note that we do not assume that \ikbsf\ has
computed the optimal value function of its current model $\bM_{t}$---that is, it
may be the case that $\bar{Q}_{t} (\rs_{i}, a) \ne \bar{Q}_{t}^{*} (\rs_{i},
a)$. 

Unfortunately, when we replace~(\ref{eq:kbsf_q}) with~(\ref{eq:ikbsf_q})
Proposition~\ref{teo:batch_kbsf} no longer applies. In this section we 
address this issue by deriving an upper bound for the difference between
$\tilde{Q}_{t}(s,a)$ and $\cQ_{t}(s, a)$, the action-value function that
would be computed by KBRL using all the transitions processed by \ikbsf\ up to
time step $t$. In order to derive our bound, we assume that \ikbsf\ 
uses a fixed set \Sb---meaning
that no representative states are added to the model \bM---and that it
never stops refining its model, doing so at every iteration
$t$ ({\sl i.e.}, $t_{m} = 1$ in Algorithm~\ref{alg:inc_kbsf}).
We start by showing the following lemma, proved in
Appendix~\ref{seca:theory}:

\begin{lemma}
\label{teo:bound_q}
Let $M \equiv (S, A, \Pa, \ra,\gamma)$ and 
$\tilde{M} \equiv (S, A, \Pta, \rta,\gamma)$ be two finite MDPs. Then,
for any $s \in S$ and any $a \in A$,
\begin{equation*}
|{Q}^{*}(s, a) - \tilde{Q}^{*}(s, a)| \le 
\frac{1}{1-\gamma} \maxinf{\ra}{\rta} + \frac{\gamma (2 -
\gamma)}{2(1-\gamma)^{2}} 
R_{\dif} 
\maxinf{\Pa}{\Pta}, 
\end{equation*}
where 
${R}_{\dif} = \max_{a,i} {r}^{a}_{i} - \min_{a,i}
{r}^{a}_{i}$.
\end{lemma}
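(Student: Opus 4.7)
The plan is to mimic the structure of Whitt's classical argument that was already invoked in Proposition~\ref{teo:bound_sf}, but applied one Bellman step later, at the level of the action-value functions rather than the state-value functions. Concretely, I would start from the Bellman optimality equations
\[
Q^{*}(s,a) = r^{a}(s) + \gamma \sum_{s'} P^{a}(s'|s)\,V^{*}(s'), \qquad
\tilde{Q}^{*}(s,a) = \tilde{r}^{a}(s) + \gamma \sum_{s'} \tilde{P}^{a}(s'|s)\,\tilde{V}^{*}(s'),
\]
subtract, and apply the triangle inequality after inserting the mixed term $\sum_{s'} P^{a}(s'|s) \tilde{V}^{*}(s')$. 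This yields
\[
|Q^{*}(s,a) - \tilde{Q}^{*}(s,a)|
\;\le\; |r^{a}(s)-\tilde{r}^{a}(s)|
\;+\; \gamma\, \bigl|P^{a}(\cdot|s)\bigl(V^{*}-\tilde{V}^{*}\bigr)\bigr|
\;+\; \gamma\, \bigl|\bigl(P^{a}(\cdot|s)-\tilde{P}^{a}(\cdot|s)\bigr)\tilde{V}^{*}\bigr|.
\]

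Next I would bound each of the three terms. The first is at most $\maxinf{\ra}{\rta}$. For the second, $P^{a}(\cdot|s)$ is a probability distribution, so $|P^{a}(\cdot|s)(V^{*}-\tilde{V}^{*})| \le \infnorm{V^{*}-\tilde{V}^{*}}$, and I would then invoke Whitt's bound (exactly as in the proof of Proposition~\ref{teo:bound_sf}, i.e.\ equation~(\ref{eq:whitt}) with $M$ and $\tilde{M}$ in place of $M$ and $\check{M}$, and taking the identity as the state mapping) to obtain
\[
\infnorm{V^{*}-\tilde{V}^{*}} \;\le\; \frac{1}{1-\gamma}\maxinf{\ra}{\rta} + \frac{\gamma R_{\dif}}{2(1-\gamma)^{2}}\maxinf{\Pa}{\Pta}.
\]
For the third term I would use that since $P^{a}(\cdot|s)$ and $\tilde{P}^{a}(\cdot|s)$ are both stochastic, their difference has zero mass, so the pairing against $\tilde{V}^{*}$ can be controlled via $\maxinf{\Pa}{\Pta}$ together with a bound on $\tilde{V}^{*}$. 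Since any optimal value function of an MDP whose rewards lie in an interval of length $R_{\dif}$ satisfies $\infnorm{\tilde{V}^{*}} \le R_{\dif}/(1-\gamma)$ (up to the freedom of subtracting a constant, which does not affect the pairing), this term is at most $\gamma R_{\dif}(1-\gamma)^{-1}\maxinf{\Pa}{\Pta}$.

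Collecting the three contributions, the coefficient of $\maxinf{\ra}{\rta}$ becomes $1 + \gamma/(1-\gamma) = 1/(1-\gamma)$, while the coefficient of $R_{\dif}\maxinf{\Pa}{\Pta}$ becomes
\[
\frac{\gamma^{2}}{2(1-\gamma)^{2}} + \frac{\gamma}{1-\gamma}
\;=\; \frac{\gamma^{2} + 2\gamma(1-\gamma)}{2(1-\gamma)^{2}}
\;=\; \frac{\gamma(2-\gamma)}{2(1-\gamma)^{2}},
\]
which is exactly the stated bound. The main obstacle I anticipate is the third term: one must be careful to justify the factor $R_{\dif}/(1-\gamma)$ on $\tilde{V}^{*}$ (the cleanest route is to replace $\tilde{V}^{*}$ by $\tilde{V}^{*}-c$ for an arbitrary constant $c$, exploiting that $P^{a}(\cdot|s)-\tilde{P}^{a}(\cdot|s)$ annihilates constants, and then pick $c$ so that the resulting $\infnorm{\cdot}$ is controlled). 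Everything else is a clean triangle-inequality computation together with a direct appeal to Whitt's theorem as already used in the paper.
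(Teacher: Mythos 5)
Your proposal follows essentially the same route as the paper's proof: subtract the Bellman optimality equations, insert a mixed term, bound $\infnorm{\vo - \vto}$ via Whitt's theorem exactly as in Proposition~\ref{teo:bound_sf}, and control the pairing of the transition difference against a value function through the span of that value function. The final arithmetic is correct and lands exactly on the stated constant. The one detail you should fix is the choice of mixed term. By inserting $\sum_{s'}P^{a}(s'|s)\tilde{V}^{*}(s')$ you pair $P^{a}(\cdot|s)-\tilde{P}^{a}(\cdot|s)$ with $\tilde{V}^{*}$, so the span bound you need is $\tilde{V}^{*}_{\max}-\tilde{V}^{*}_{\min}\le \tilde{R}_{\dif}/(1-\gamma)$, where $\tilde{R}_{\dif}$ is the reward span of $\tilde{M}$ --- not $R_{\dif}$ as defined in the statement, which is the reward span of $M$. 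The paper inserts $\tilde{P}^{a}(\cdot|s)V^{*}$ instead, so the transition difference is paired with $V^{*}$ and the constant that appears really is $R_{\dif}$; swapping your decomposition accordingly removes the mismatch at no cost. Your ``subtract a constant'' device is exactly the right idea and is genuinely needed, since $\infnorm{\vo}$ itself is not controlled by $R_{\dif}$. Finally, note that the paper extracts an extra factor of two from this term by observing that each row of $\Pa-\Pta$ sums to zero, so that $|\sum_{j}(p^{a}_{ij}-\tilde{p}^{a}_{ij})v^{*}_{j}|\le \tfrac{1}{2}\infnorm{\Pa-\Pta}\,(v^{*}_{\max}-v^{*}_{\min})$; this yields the tighter coefficient $\gamma/\bigl(2(1-\gamma)^{2}\bigr)$, but your looser centering argument still suffices for the bound as stated.
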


Lemma~\ref{teo:bound_q} provides an upper bound for the difference in the
action-value functions of any two MDPs having the same state space $S$, action
space $A$, and discount factor
$\gamma$.\footnote{\ctp{strehl2008analysis}
Lemma~1 is similar to our result. Their bound is more general than ours, as
it applies to any $Q^{\pi}$, but it is also slightly looser.} 
Our strategy will be to use this result to bound the error introduced by the
application of the stochastic-factorization trick in the context of \ikbsf.

When $t_{m} = 1$, at any time step $t$ \ikbsf\ has a model $\bM_{t}$ built
based on the $t$ transitions observed thus far. As shown in the beginning of
this section, $\bM_{t}$ exactly matches the model that would be computed by
batch KBSF using the same data and the same set of representative states. Thus,
we can think of matrices \Pbat\ and vectors \rbat\
available at the \tth\ iteration of \ikbsf\ as the result of the
stochastic-factorization trick applied with matrices \Dt\ and \Kat.
Although \ikbsf\ does not explicitly compute such matrices, they serve as a 
solid theoretical ground to build our result~on. 

\begin{proposition}
\label{teo:inc_kbsf}
Suppose \ikbsf\ is executed with a fixed set of representative states \Sb\
using $t_{m} = 1$. Let \Dt, \Kat\ and \rbat\ be the matrices and the vector 
(implicitly) computed by this algorithm at iteration $t$. 
Then, if $s$ is the state encountered by \ikbsf\ at 
time step $t$,
 
{\footnotesize
\begin{equation*}
|\cQ_{t}(s, a) - \tilde{Q}_{t}(s, a)| \le 
\frac{1}{1 - \gamma} \maxinf{\rcat}{\Dt\rbat} + 
\frac{\bar{R}_{\dif,t}}{(1-\gamma)^{2}}
\left(
\frac{\gamma(2 - \gamma)}{2} \maxinf{\Pcat}{\Dt \Kat}
+
\levelDc
\right) + \epsilon_{\bar{Q}_t}, 
\end{equation*}
}

\noindent
for any $a \in A$,
where $\tilde{Q}_{t}$ is the value function computed by \ikbsf\
at time step $t$ through~(\ref{eq:ikbsf_q}),
$\cQ_{t}$ is the value function 
computed by KBRL through~(\ref{eq:kbrl_q}) based on
the same data,
$\bar{R}_{\dif,t} = \max_{a,i} \bar{r}^{a}_{i,t} - 
\min_{a,i} \bar{r}^{a}_{i,t}$,
$\levelDc = \max_{i}{(1 - \max_{j}{d_{ij,t})}}$,
and 
$\epsilon_{\bar{Q}_{t}}= \max_{i,a} |\bar{Q}^{*}_{t}(\rs_{i},a)
- \bar{Q}_{t}(\rs_{i},a)|$.
\end{proposition}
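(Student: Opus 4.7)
The plan is to mirror the two-step chain of Proposition~\ref{teo:bound_sf} while introducing one extra ingredient that absorbs the discrepancy between \ikbsf's current $\bar{Q}_{t}$ and the true optimum $\bar{Q}^{*}_{t}$ of the reduced MDP $\bar{M}_{t}$. Let
\[
Q^{\dagger}_{t}(s, a) \;:=\; \sum_{i=1}^{m} \kerb(s, \rs_{i})\, \bar{Q}^{*}_{t}(\rs_{i}, a),
\]
so that, by the triangle inequality, $|\cQ_{t}(s,a) - \tilde{Q}_{t}(s,a)| \le |\cQ_{t}(s,a) - Q^{\dagger}_{t}(s,a)| + |Q^{\dagger}_{t}(s,a) - \tilde{Q}_{t}(s,a)|$. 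Since $\sum_{i} \kerb(s, \rs_{i}) = 1$, the second summand is immediately bounded by $\max_{i} |\bar{Q}^{*}_{t}(\rs_{i}, a) - \bar{Q}_{t}(\rs_{i}, a)| \le \epsilon_{\bar{Q}_{t}}$, which accounts for the last term of the stated inequality.

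To bound $|\cQ_{t}(s,a) - Q^{\dagger}_{t}(s,a)|$, I would augment KBRL's MDP $\hat{M}_{t}$ by appending $s$ as a virtual state, inheriting the usual KBRL transition and reward definitions $\hat{P}^{a}(\cdot\mid s) = \kera(s, \cdot)$ and $\hat{R}^{a}(s, \yib) = r^{a}_{i}\, \Ind{a=b}$, so that the optimal action-value function of the resulting finite MDP $\hat{M}^{+}_{t}$ at $(s,a)$ is exactly $\cQ_{t}(s,a)$ by~(\ref{eq:kbrl_q}). Extending $\Dt$ by appending a row of entries $\kerb(s, \rs_{\cdot})$ produces a new stochastic matrix $\mat{D}^{+}$, and the MDP $\check{M}^{+}_{t}$ with transitions $\mat{D}^{+}\Kat$ and rewards $\mat{D}^{+}\rbat$ forms the natural ``$\check{M}$-type'' companion to $\hat{M}^{+}_{t}$ from the proof of Proposition~\ref{teo:bound_sf}. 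Crucially, $(\mat{D}^{+}\bar{Q}^{*}_{t})_{s,a} = Q^{\dagger}_{t}(s,a)$ by construction.

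The remaining argument then repeats Proposition~\ref{teo:bound_sf}'s structure, but with Lemma~\ref{teo:bound_q} playing the role of \citet{whitt78approximations}'s theorem: Lemma~\ref{teo:bound_q} applied to the pair $(\hat{M}^{+}_{t}, \check{M}^{+}_{t})$ bounds $|\cQ_{t}(s,a) - \check{Q}^{+,*}_{t}(s,a)|$ by $\tfrac{1}{1-\gamma}\maxinf{\rcat}{\Dt\rbat} + \tfrac{\gamma(2-\gamma)}{2(1-\gamma)^{2}}\bar{R}_{\dif,t}\maxinf{\Pcat}{\Dt\Kat}$, and \citet{sorg2009transfer}'s Theorem~1 applied to the soft homomorphism induced by $\mat{D}^{+}$ between $\check{M}^{+}_{t}$ and $\bar{M}_{t}$ bounds $|\check{Q}^{+,*}_{t}(s,a) - Q^{\dagger}_{t}(s,a)|$ by $\tfrac{\bar{R}_{\dif,t}}{(1-\gamma)^{2}}\levelDc$. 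The switch from Whitt's theorem to Lemma~\ref{teo:bound_q}, which directly bounds differences of optimal action-value functions (rather than state-value functions), is what produces $\gamma(2-\gamma)/2$ in place of the $\gamma/2$ of Proposition~\ref{teo:bound_sf}; it also explains why no outer factor of $\gamma$ appears here, unlike in Proposition~\ref{teo:batch_kbsf}, because Lemma~\ref{teo:bound_q} is invoked directly at the state~$s$ and no subsequent convex-combination step is needed.

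The main obstacle I anticipate is the bookkeeping around the augmentation from $\Dt$ to $\mat{D}^{+}$. The norms $\maxinf{\Pcat}{\Dt\Kat}$, $\maxinf{\rcat}{\Dt\rbat}$ and the quantity $\levelDc$ in the stated bound are taken over the original $n$ sample states, so I must verify that the extra row appended for $s$ does not enlarge these quantities. A naive application of Lemma~\ref{teo:bound_q} and \citet{sorg2009transfer}'s Theorem~1 to $\hat{M}^{+}_{t}$, $\check{M}^{+}_{t}$, and $\bar{M}_{t}$ produces max-norms over $\hat{S} \cup \{s\}$, so the key step of the writeup is to argue that the new row, being constructed from exactly the same kernel primitives as the rows already in $\Dt$ and $\Kat$, cannot contribute a worst-case entry exceeding those already present on $\hat{S}$, leaving the stated inequality intact once the two discrete-MDP bounds are summed with $\epsilon_{\bar{Q}_{t}}$.
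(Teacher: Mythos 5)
Your decomposition is essentially the paper's: the same chain of intermediate quantities (KBRL's value at $s$, the value of the $\Dt\Kat$/$\Dt\rbat$ model, the $\Dt$-weighted combination of $\bar{Q}^{*}_{t}$, and finally $\tilde{Q}_{t}$), with Lemma~\ref{teo:bound_q} replacing Whitt's theorem and \ct{sorg2009transfer}'s Theorem~1 handling the soft-homomorphism step; your explanations of why the coefficient becomes $\gamma(2-\gamma)/2$ and why no outer factor of $\gamma$ appears are also correct, and the $\epsilon_{\bar{Q}_{t}}$ term is handled exactly as in the paper. Where you diverge is in treating $s$ as a \emph{new} state that must be appended to $\cM_{t}$ and to $\Dt$. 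The paper needs no such augmentation: because $t_{m}=1$, the transition ending at $s$ has already been incorporated into the model by the time the action is selected, so $s$ is literally one of the states $\yib$ of $\cM_{t}$. Consequently $\cQ_{t}(s,a)=\cQ^{*}_{t}(\yib,a)$ for some $i$ and $b$, the row of $\Dt$ indexed by $s$ already carries the weights $\kerb(s,\rs_{j})$, and every max-norm in the stated bound already ranges over a state set containing $s$. This is precisely what the hypothesis $t_{m}=1$ is there to guarantee.

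This matters because the fallback argument you propose for the augmented route --- that the appended row, being built from ``the same kernel primitives,'' cannot contribute a worst-case entry exceeding those already present --- is not valid for an arbitrary $s\notin\cS_{t}$. For example, $1-\max_{j}\kerb(s,\rs_{j})$ can exceed $\levelDc$ whenever $s$ lies farther from every representative state than any sampled state does, and similarly the appended row's error $\onenorm{\kfunca(s)-\sum_{j}\kerb(s,\rs_{j})\kfunca(\rs_{j})}$ can exceed the corresponding maximum over $\cS_{t}$. So the key step of your writeup, as argued, would fail in general; it is rescued only by observing that under $t_{m}=1$ the ``new'' row coincides with a row already present, which makes the augmentation vacuous and reduces your proof to the paper's.
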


\begin{proof}
Let $\check{M}_{t} \equiv (\cS_{t}, A, \Pxat, \rxat, \gamma)$,
with $\Pxat = \Dt \Kat$ and 
$\rxat =\Dt \rbat$. From the triangle
inequality, we know that
\begin{equation}
\label{eq:triangle_q}
|\cQ_{t}(s, a) - \tilde{Q}_{t}(s, a)| \le
|\cQ_{t}(s, a) - \check{Q}^{*}_{t}(s, a)|
+
|\check{Q}^{*}_{t}(s, a) - \tilde{Q}^{*}_{t}(s, a)|
+
|\tilde{Q}^{*}_{t}(s, a) - \tilde{Q}_{t}(s, a)|,
\end{equation}
where 
$\cQ_{t}$ and $\tilde{Q}_{t}$ are defined in the proposition's statement,
$\check{Q}_{t}^{*}$ is the optimal action-value function of $\check{M}_{t}$,
and $\tilde{Q}^{*}_{t}(s,a) = \sum_{i=1}^{m} \kerb(s, \rs_{i})
\bar{Q}^{*}_{t} (\rs_{i}, a)$ (the reader will forgive a slight abuse of
notation here, since in general $\tilde{Q}^{*}_{t}$ is not the optimal
value function of any MDP).
Our strategy will be to bound each term on the right-hand side 
of~(\ref{eq:triangle_q}). 
Since $\cM_{t}$ is the model constructed by KBRL using all the data
seen by \ikbsf\ up to time step $t$, state $s$ will correspond to 
one of the states \yib\ in this MDP. Thus, 
from~(\ref{eq:kbrl_q}), we see that $\cQ_{t}(s, a) = \cQ_{t}^{*}(\yib, a)$ for
some $i$ and some $b$.
Therefore, applying Lemma~\ref{teo:bound_q}
to $\cM_{t}$ and $\xM_{t}$, we can write
\begin{equation}
\label{eq:term1}
|\cQ_{t}(s, a) - \check{Q}^{*}_{t}(s, a)|
\le
\frac{1}{1-\gamma} \maxinf{\rcat}{\Dt\rbat} + \frac{\gamma (2 -
\gamma)}{2(1-\gamma)^{2}} 
\bar{R}_{\dif,t} \maxinf{\Pcat}{\Dt\Kat}.
\end{equation}
In order to bound $|\check{Q}^{*}_{t}(s, a) - \tilde{Q}^{*}_{t}(s, a)|$,
we note that, since the information contained in the transition 
to state $s$ has been
incorporated to \ikbsf's model \bM\ 
at time $t$, 
$
\tilde{Q}^{*}_{t}(s,a) = 
\sum_{i=1}^{m} d_{ti,t} \bar{Q}^{*}_{t} (\rs_{i}, a),
$
for any $a \in A$, where $d_{ti,t}$ is the element in the \tth\ row and \ith\
column of \Dt\ (see Figure~\ref{fig:matrices_kbsf_sparse}).
In matrix form, we have $\matii{\tilde{Q}}{*}{t} = \Dt \matii{\bar{Q}}{*}{t}$.
As $\Dt$ is a soft homomorphism
between  $\xM_{t}$ and $\bM_{t}$, we can resort to 
\ctp{sorg2009transfer} Theorem~1,
as done in Proposition~\ref{teo:bound_sf}, to write:
\begin{equation}
\label{eq:term2}
|\check{Q}^{*}_{t}(s, a) - \tilde{Q}^{*}_{t}(s, a)|
\le 
\frac{ \bar{R}_{\dif,t}}{(1-\gamma)^{2}} \levelDc
\end{equation}
(see~(\ref{eq:sing_tighter}) and~(\ref{eq:sing})).
Finally,
\begin{align}
\nonumber
|\tilde{Q}^{*}_{t}(s, a) - \tilde{Q}_{t}(s, a)| 
& = 
\left|\sum_{i=1}^{m}  \kerb(s, \rs_{i}) \bar{Q}^{*}_{t}(\rs_{i}, a)
- \sum_{i=1}^{m}  \kerb(s, \rs_{i}) \bar{Q}_{t}(\rs_{i}, a)\right|
\\
& 
\label{eq:term3}
\le  
\sum_{i=1}^{m}  \kerb(s, \rs_{i}) \left|\bar{Q}^{*}_{t}(\rs_{i}, a)
- \bar{Q}_{t}(\rs_{i}, a) \right|
\le \epsilon_{\bar{Q}_{t}},
\end{align}
where the last step follows from the fact that 
$\sum_{i=1}^{m}  \kerb(s,\rs_{i})$ is a convex combination.
Substituting~(\ref{eq:term1}),~(\ref{eq:term2}), and~(\ref{eq:term3})
in~(\ref{eq:triangle_q}), we obtain the desired bound.
\end{proof}

Proposition~\ref{teo:inc_kbsf} shows that, at any time step $t$, the 
error in the action-value function computed by \ikbsf\ is bounded above by the 
quality and the level of stochasticity of the stochastic 
factorization implicitly computed by the algorithm.
The term $\epsilon_{\bar{Q}_{t}}$ accounts for the possibility that \ikbsf\
has not computed the optimal value function of its model 
at step $t$, either because $t_m \ne t_v$ or because the update of 
\Qb\ in Algorithm~\ref{alg:inc_kbsf} is not done to completion 
(for example, one can apply the Bellman operator $\bT$ a fixed number 
of times, stopping short of convergence).
We note that the restriction $t_{m} = 1$ is not strictly necessary if we
are willing to compare $\tilde{Q}_{t}(s, a)$ with $\cQ_{z}(s, a)$,
where $z = \lfloor (t + t_m) / t \rfloor$ (the next time step scheduled for a 
model update). However, such a result would be somewhat circular, since 
the sample transitions used to build $\cQ_{z}(s, a)$ may depend on
$\tilde{Q}_{t}(s, a)$.

\subsection{Empirical results}
\label{sec:empirical_inc}

We now look at the empirical performance of 
the incremental version of KBSF.
Following the structure of Section~\ref{sec:empirical_batch}, 
we start with the puddle world task to show that \ikbsf\ is
indeed able to match the performance of batch KBSF without storing all 
sample transitions. Next we exploit the scalability of \ikbsf\
to solve two difficult control tasks, triple pole-balancing and 
helicopter hovering. We also compare \ikbsf's performance with
that of other reinforcement learning algorithms.

\subsubsection{Puddle world (proof of concept)}
\label{sec:puddle_inc}

We use the puddle world problem as a proof of
concept~\cp{sutton96generalization}. In this first experiment we show
that \ikbsf\ is able to recover the model that would be
computed by its batch counterpart. In order to do so, we applied
Algorithm~\ref{alg:inc_kbsf} to the puddle-world task
using a random policy to select actions.

Figure~\ref{fig:puddle_n_inc} shows the result of the experiment when we
vary the parameters $t_{m}$ and $t_{v}$.
Note that the case in which $t_{m} = t_{v} = 8\ts000$ corresponds to
the batch version of KBSF, whose results on the puddle world are shown in
Figure~\ref{fig:puddle}. As expected, the performance of KBSF 
policies improves gradually as the algorithm goes through more sample
transitions,
and in general the intensity of the improvement is proportional to the amount of
data processed. More important, the performance of the decision policies after
all sample transitions have been processed is essentially the same
for all values of $t_{m}$ and $t_{v}$, which
confirms that \ikbsf\ can be used as an instrument to circumvent KBSF's 
memory demand. Thus, if one has a batch of sample
transitions that does not fit in the available memory, it is possible to 
split the data in chunks of smaller sizes and still get the same 
value-function approximation
that would be computed if the entire data set were processed at once. 
As shown in Figure~\ref{fig:puddle_n_inc_time}, there is only a small 
computational overhead associated with such a strategy (this results from
unnormalizing and normalizing the elements of \Pba\ and \rba\ several times
through update rules~(\ref{eq:upd_p}) and~(\ref{eq:upd_r})).

\begin{figure*}
\centering
   \subfloat[Performance]{ 
   \label{fig:puddle_n_inc}
   \includegraphics[scale=\scll]{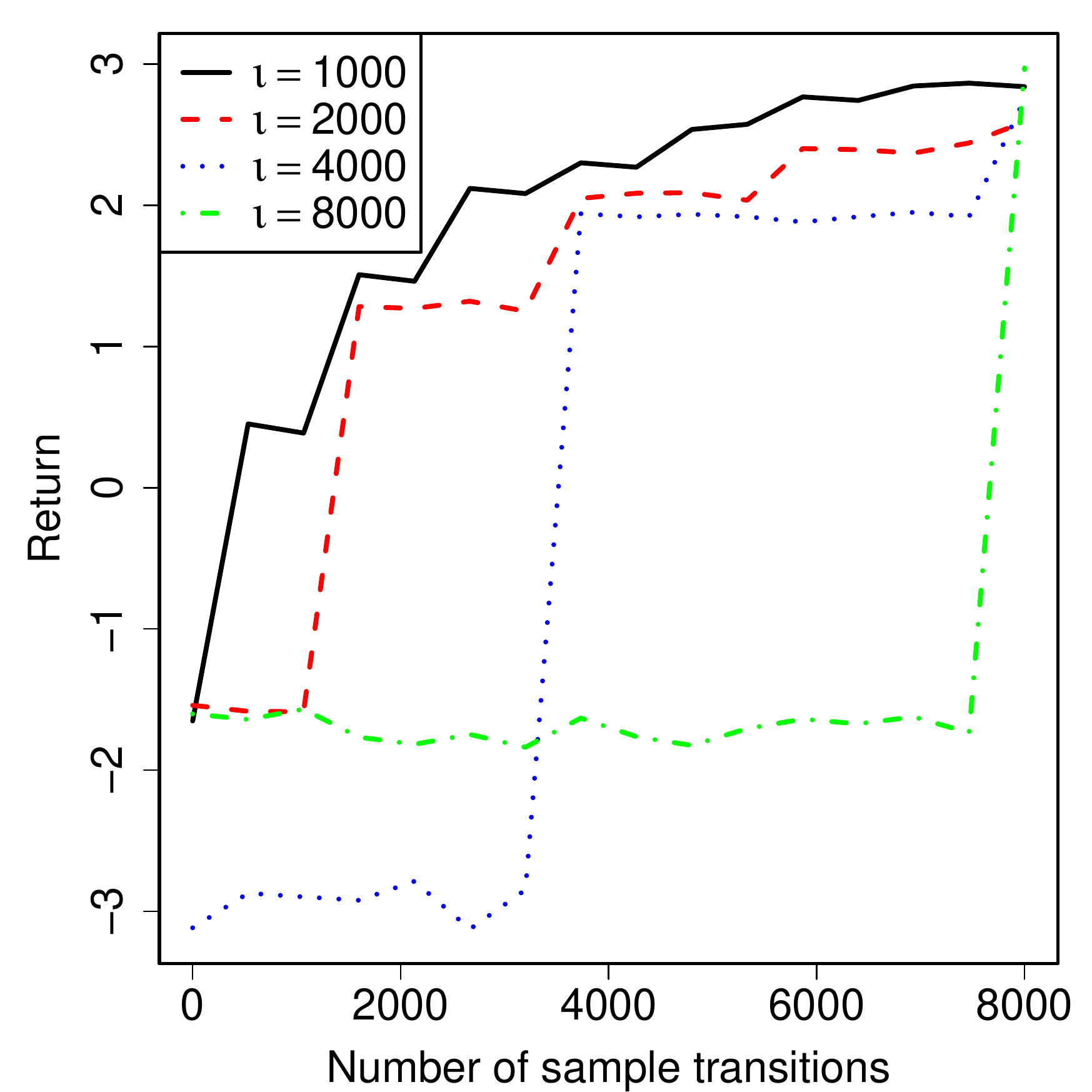} 
   }
    \subfloat[Run times] { 
   \label{fig:puddle_n_inc_time}
   \includegraphics[scale=\scll]{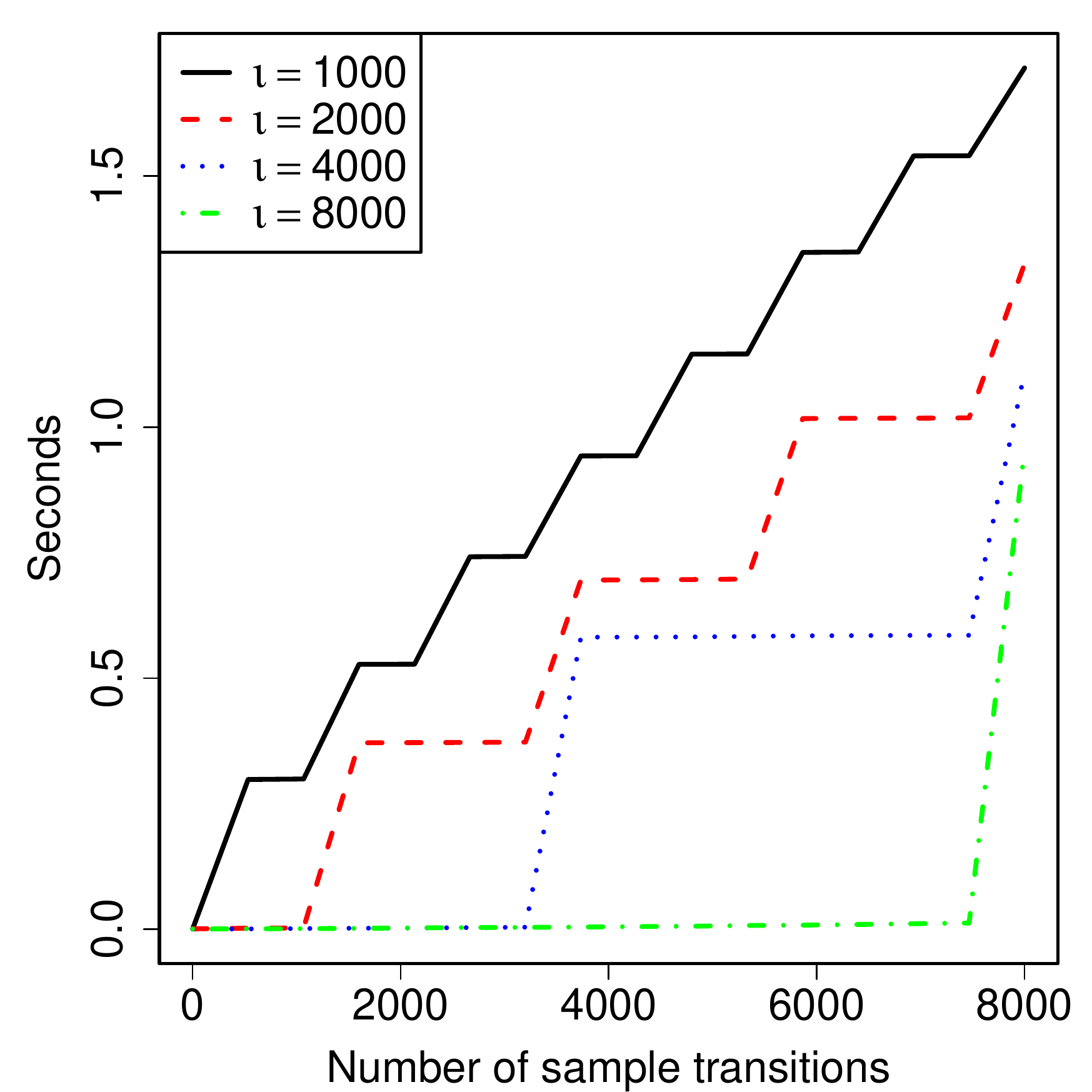}
    }
\caption{Results on the puddle-world task averaged over $50$ runs. 
KBSF used $100$ representative states evenly distributed
over the state space and $t_{m} = t_{v} = \iota$ (see legends).
Sample transitions were collected by a random policy.
The agents were tested on two sets of
states surrounding the ``puddles'' (see Appendix~\ref{seca:exp_details}).
\label{fig:puddle_inc}}
\end{figure*}

\subsubsection{Triple pole-balancing (comparison with fitted $Q$-iteration)}
\label{sec:triple_pole}

As discussed in Section~\ref{sec:pole}, the pole balancing task has been
addressed in several different versions,
and among them simultaneously balancing two poles is particularly 
challenging~\cp{wieland91evolving}. Figures~\ref{fig:double_pole_m}
and~\ref{fig:double_pole_m_time} show that the batch version of KBSF was able to
satisfactorily solve the double pole-balancing task.
In order to show the scalability of the incremental version of our algorithm, 
in this section we raise the bar, adding a third pole to the problem.
We perform our simulations using the parameters usually adopted with the
two-pole problem, with the extra pole having the same length and
mass as the longer pole~\cp[see
Appendix~\ref{seca:exp_details}]{gomez2003robust}. 
This results in a difficult control problem with an $8$-dimensional 
state space \Sc.

In our experiments with KBSF on the two-pole task, we
used $200$ representative states and $10^{6}$ sample transitions collected by a
random policy. Here we start our experiment with
triple pole-balancing using exactly the same configuration, and then we 
let \ikbsf\ refine its 
model \bM\ by incorporating more sample transitions through update
rules~(\ref{eq:upd_p}) and~(\ref{eq:upd_r}).
We also let \ikbsf\ grow its model if necessary. Specifically, 
a new representative state is added to \bM\ on-line every time the agent
encounters a sample state $\yia$ for which $\gaussb(\yia, \rs_{j}) < 0.01$ for
all $j \in 1, 2, ..., m$. This corresponds to setting the maximum allowed
distance from a sampled state to the closest representative state,
$\max_{a,i}\ddb(\yia, 1)$. 

Given the poor performance of LSPI on the double pole-balancing task, shown in
Figures~\ref{fig:double_pole_m} and~\ref{fig:double_pole_m_time}, on the
three-pole version of the problem we only compare KBSF with FQIT.
We used FQIT with the same configuration adopted in 
Sections~\ref{sec:hiv} and~\ref{sec:epilepsy}, with the parameter \mne\ 
varying in the set $\{10\ts000, 1\ts000, 100\}$.
As for KBSF, the widths of the kernels were fixed at $\tau = 100$ and $\taub =
1$ and sparse kernels were used ($\nk=50$ and $\nkb = 10$).

In order to show the benefits provided by the incremental version of our
algorithm, we assumed that both KBSF and FQIT could store at most
$10^{6}$ sample transitions in memory. In the case of \ikbsf, this is not a
problem, since we can always split the data in subsets of smaller size and
process them incrementally. Here, we used Algorithm~\ref{alg:inc_kbsf}
with a $0.3$-greedy policy, $t_{m} = t_{v} = 10^{6}$, and 
$n = 10^{7}$.
In the case of FQIT, we have two options to circumvent the limited 
amount of memory available. The first one is to use a single batch of
$10^{6}$ sample transitions. The other option is to use the initial 
batch of transitions to compute an approximation of the problem's value
function, then use an $0.3$-greedy policy induced by this approximation to
collect a second batch, and so on. Here we show the performance of FQIT using
both strategies.

We first compare the performance of \ikbsf\ with that of FQIT using a single
batch of sample transitions. This is shown in Figure~\ref{fig:tp_ep_batch} 
and~\ref{fig:tp_tim_batch}. For reference, we also show the results of batch
KBSF---that is, we show the performance of the policy that would be computed by
our algorithm if we did not have a way of computing its approximation
incrementally. As shown in Figure~\ref{fig:tp_ep_batch}, both FQIT and batch
KBSF perform poorly in the triple pole-balancing task, with average success
rates below $55\%$. 
These results suggest that the
amount of data used by these algorithms is insufficient to describe the
dynamics of the control task. 
Of course, we could give more sample transitions to FQIT and
batch KBSF. Note however that, since they are batch-learning  methods, there is
an inherent limit on the amount of data that these algorithms can use to
construct their approximation. 
In contrast, the amount of memory required by \ikbsf\ is independent of
the number of sample transitions $n$. This fact together with the fact that
KBSF's computational complexity is only linear in $n$ allow our algorithm
to process a large amount of data in reasonable time. This can be
clearly observed in Figure~\ref{fig:tp_tim_batch}, which shows that \ikbsf\
can build an approximation using $10^{7}$ sample transitions in under $20$
minutes. As a reference for comparison, FQIT($1000$) took an average of
$1$ hour and $18$ minutes to process $10$ times less data.

\begin{figure*}
\centering
   \subfloat[Performance]{ 
   \label{fig:tp_ep_batch}
   \includegraphics[scale=\scll]{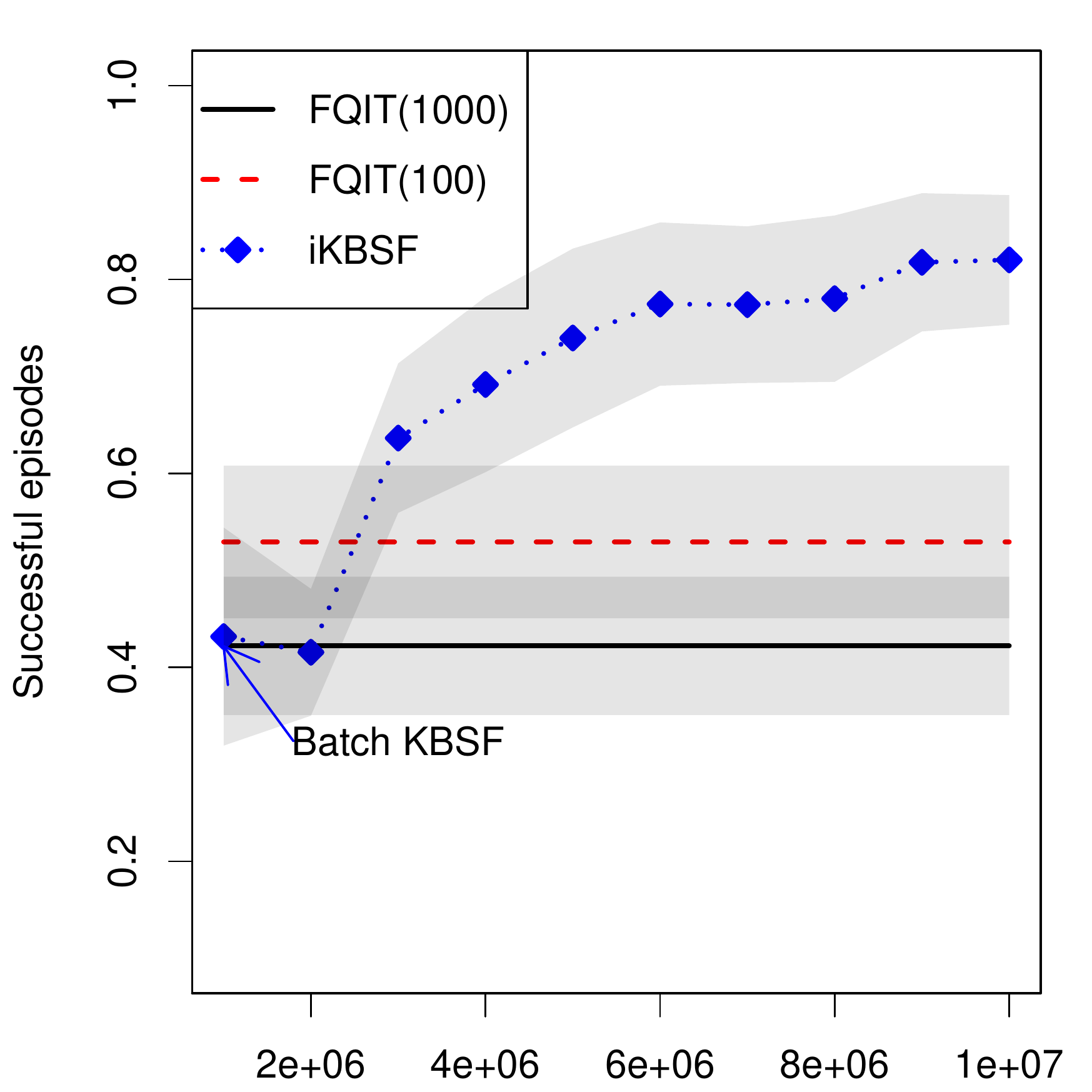} 
   }
    \subfloat[Run times] { 
   \label{fig:tp_tim_batch}
   \includegraphics[scale=\scll]{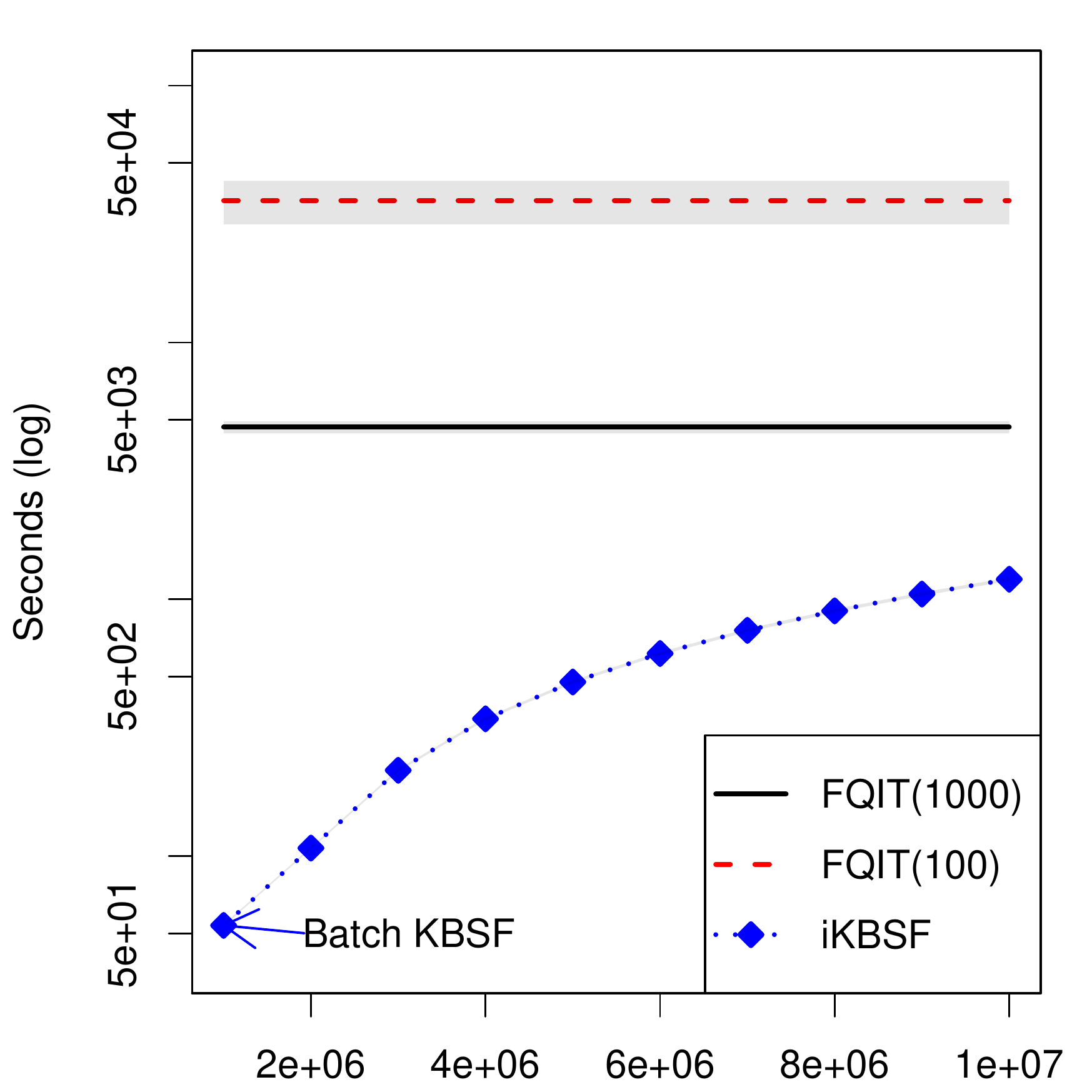}
    }

   \subfloat[Performance]{ 
   \label{fig:tp_ep_inc}
   \includegraphics[scale=\scll]{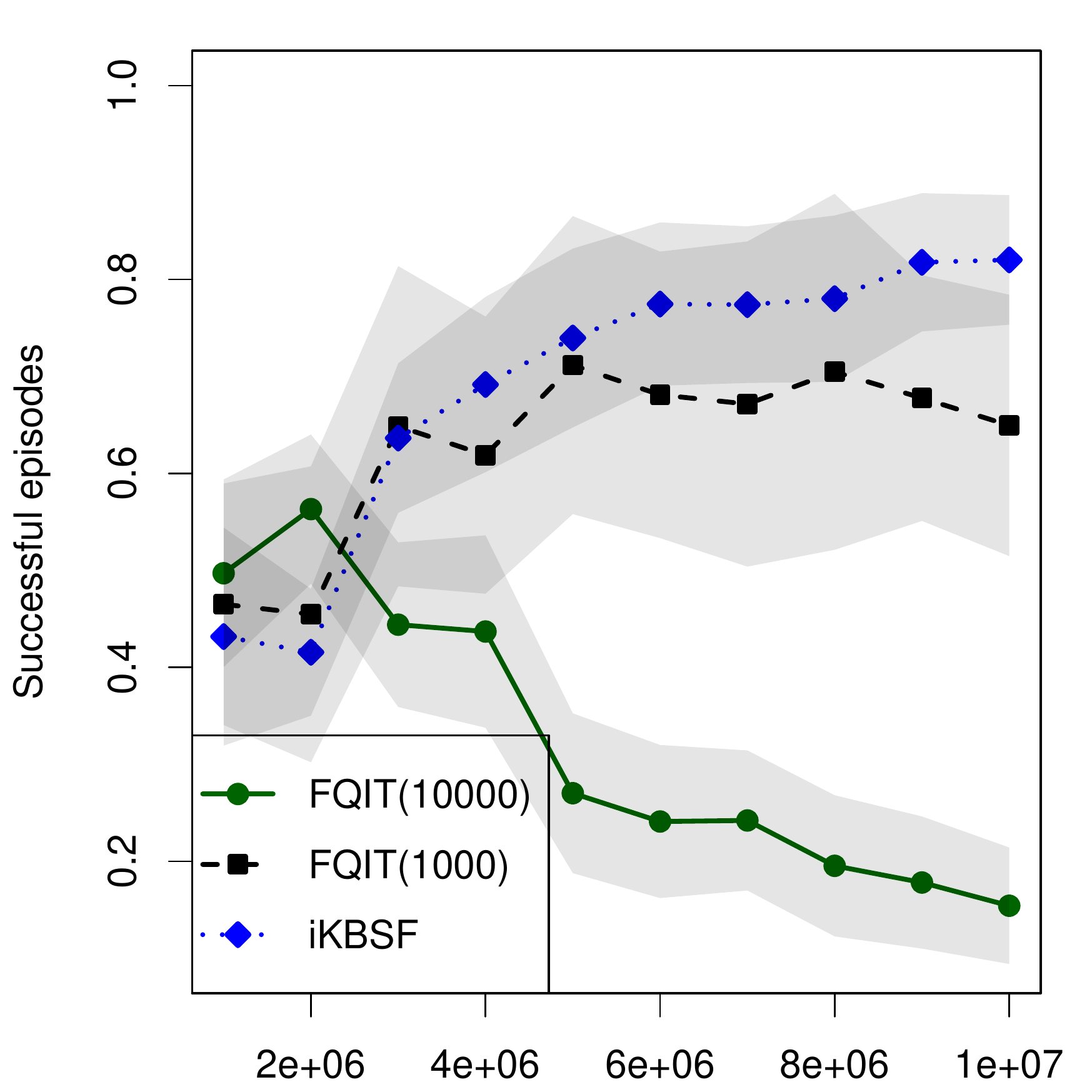} 
   }
    \subfloat[Run times] { 
   \label{fig:tp_tim_inc}
   \includegraphics[scale=\scll]{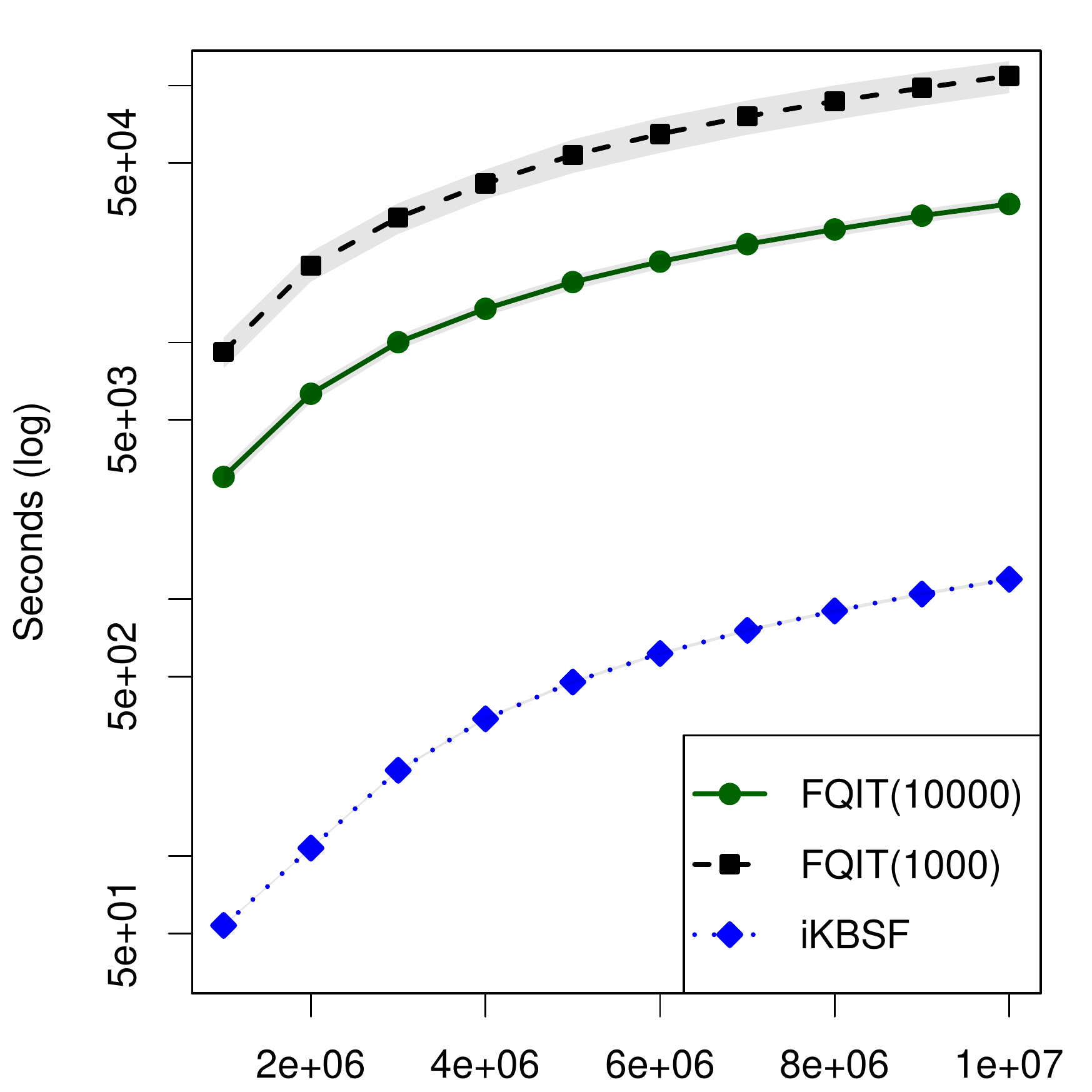}
    }
%
\caption{Results on the triple pole-balancing task,
as a function of the number of sample transitions $n$,
averaged over $50$ runs. 
The values correspond to the fraction of episodes initiated from the
test states in which the $3$ poles could be balanced for $3\ts000$ steps (one
minute of simulated time). The test sets were regular grids 
of $256$ cells defined over the hypercube centered at the origin and covering
$50\%$ of the state-space axes in each dimension (see
Appendix~\ref{seca:exp_details} for details). Shadowed regions represent
$99\%$ confidence intervals.
\label{fig:tp}}
\end{figure*}

As shown in Figure~\ref{fig:tp_ep_batch}, \ikbsf's ability to process
a large number of sample transitions allows our algorithm 
to achieve a success rate of approximately $80\%$.
This is similar to the performance of batch KBSF on the two-pole version of
the problem ({\sl cf.} Figure~\ref{fig:pole}). 
The good performance of \ikbsf\ on the triple pole-balancing task is
especially impressive when we recall that the decision policies 
were evaluated on a set of test states representing all possible
directions of inclination of the
three poles.
In order to achieve the same level of performance
with KBSF, approximately $2$ Gb of memory would be necessary, even using sparse
kernels, whereas \ikbsf\ used less than $0.03$ Gb of memory.

One may argue that the comparison between FQIT and KBSF is not fair, since the
latter used ten times the amount of data used by the former.
Thus, in Figures~\ref{fig:tp_ep_inc} and ~\ref{fig:tp_tim_inc} we show the
results of FQIT using $10$ batches of $10^{6}$ transitions---exactly the same
number of transitions processed by
\ikbsf. Here we cannot compare \ikbsf\ with FQIT($100$) because the
computational cost of the tree-based approach is prohibitively large (it would
take over $4$ days only to train a single agent, not counting the test phase). 
When we look at the other instances of the algorithm, we see two opposite
trends. Surprisingly, the extra sample transitions actually made the performance
of FQIT($10\ts000$) \emph{worse}. On the other hand,  
FQIT($1000$) performs significantly better using more data, though 
still not as well as \ikbsf\ (both in terms of performance and computing time). 

To conclude, observe in Figure~\ref{fig:tp_rs} how the number of representative
states $m$ grows as a function of  the number of sample transitions
processed by KBSF. As expected, in the beginning of the learning process $m$
grows fast, reflecting the fact that some relevant regions of the state space
have not been visited yet. As more and more data come in, the number of
representative states starts to stabilize. 

\begin{figure*}
\centering
\includegraphics[scale=0.35]{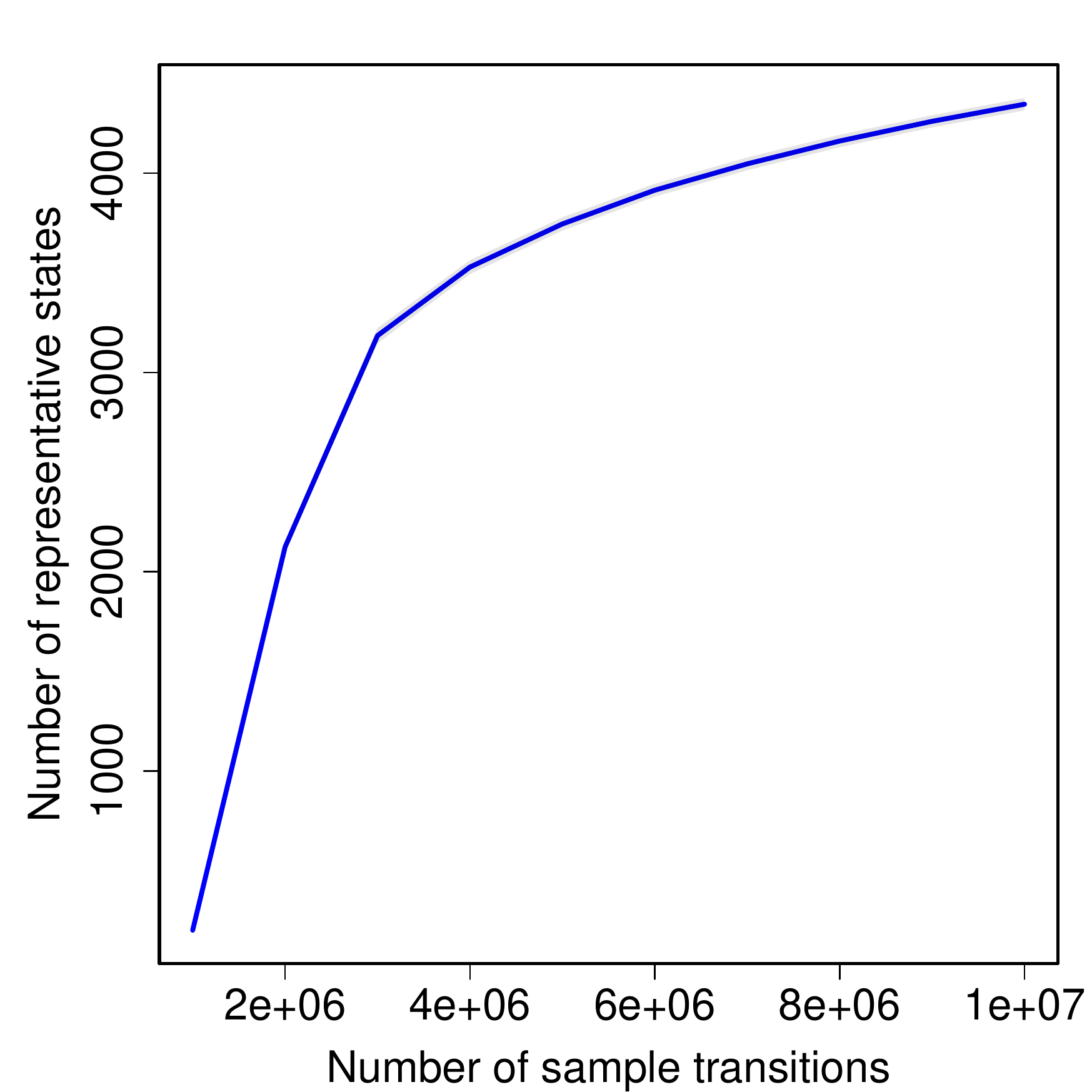}
\caption{Number of representative states used by \ikbsf\ on the triple 
pole-balancing task.
Results were averaged over $50$ runs ($99\%$ confidence intervals are almost
imperceptible in the figure).
\label{fig:tp_rs}}
\end{figure*}

\subsubsection{Helicopter hovering (comparison with SARSA)}
\label{sec:helicopter}

In the previous two sections we showed how \ikbsf\ can be used
to circumvent the inherent memory limitations of batch learning. 
We now show how our algorithm performs in a fully on-line regime.
For that, we focus on a challenging reinforcement learning task in which the
goal is to control an autonomous helicopter.

Helicopters have unique control capabilities, such as low speed
flight and in-place hovering, that make them indispensable instruments in many
contexts. Such flexibility comes at a price, though: it is 
widely recognized that a helicopter is significantly harder to control than a
fixed-wing aircraft~\cp{ng2003autonomous,abbeel2007application}.
Part of this difficulty is due to the complex dynamics of the helicopter,
which is not only non-linear, noisy, and asymmetric, but also 
counterintuitive in some aspects~\cp{ng2003autonomous}.

An additional complication of controlling an autonomous helicopter is
the fact that a wrong action can easily lead to a crash, which is both dangerous
and expensive. Thus, the usual
practice is to first develop a model of the helicopter's dynamics and then 
use the model to design a controller~\cp{ng2003autonomous}. Here
we use the model constructed by~\ct{abbeel2005learning} based on data collected
on actual flights of an XCell Tempest helicopter (see
Appendix~\ref{seca:exp_details}).
The resulting reinforcement learning problem has a $12$-dimensional state space
whose variables represent the aircraft's position, orientation, and
the corresponding velocities and angular velocities along each axis.

In the version of the task considered here the goal is to keep the helicopter
hovering as close as possible to a fixed position. All episodes start 
at the target location, and at each time step the agent receives a negative
reward proportional to the distance from the current state to the desired
position. Because the tail rotor's thrust exerts a sideways force on the
helicopter, the aircraft cannot be held stationary in the zero-cost state even
in
the absence of wind. The episode ends when the helicopter leaves the hover
regime, that is, when any of the state's variables exceeds pre-specified 
thresholds.

The helicopter is controlled via a $4$-dimensional continuous vector whose
variables represent the longitudinal cyclic pitch, the latitudinal cyclic
pitch, the tail rotor collective pitch, and the main rotor collective pitch.
By adjusting the value of these variables the pilot can rotate the helicopter
around its axes and control the thrust generated by the main rotor.
Since KBSF was designed to deal with a finite number of actions, we
discretized the set $A$ using $4$ values per dimension, resulting in
$256$ possible actions. The details of the discretization process are given
below.

Here we compare \ikbsf\ with the SARSA($\lambda$) algorithm
using tile coding for value function approximation
(\cwp{rummery94on-line}, \cwp{sutton96generalization}---see
Appendix~\ref{seca:exp_details}).
We applied SARSA with $\lambda = 0.05$, a learning rate of
$0.001$, and $24$ tilings containing $4^{12}$ tiles each. Except for $\lambda$,
all the parameters were adjusted 
in a set of preliminary experiments
in order to improve the 
performance of the SARSA agent.  
We also defined the action-space discretization 
based on SARSA's performance. In particular, instead of
partitioning each dimension in equally-sized intervals, we spread the break
points unevenly along each axis in order to maximize the agent's return. The
result of this process is described in Appendix~\ref{seca:exp_details}.
The interaction of the SARSA agent with the helicopter hovering task 
was dictated by an $\epsilon$-greedy policy. Initially we set $\epsilon=1$, and
at every $50\ts000$ transitions the value of $\epsilon$
was decreased in $30\%$.

The \ikbsf\ agent collected sample transitions using the same exploration
regime. Based on the first batch of $50\ts000$ transitions, $m=500$
representative
states were determined by the $k$-means algorithm. No representative states were
added to \ikbsf's model after that. Both the value function and the model
were updated at fixed intervals of $t_{v}= t_{m} = 50\ts000$ transitions.
We fixed $\tau = \taub = 1$ and $\nk = \nkb =4$.

Figure~\ref{fig:helicopter} shows the results obtained by SARSA and KBSF on the
helicopter hovering task. Note in Figure~\ref{fig:helicopter_steps} how the
average
episode length increases abruptly at
the points in which the value of $\epsilon$ is decreased. 
This is true for both SARSA and KBSF. Also,
since the number of steps executed per episode increases over
time, the interval in between such abrupt changes decreases in length, as
expected. Finally, observe how the performance of both
agents stabilizes after around  $70\ts000$ episodes, probably because at this
point there is almost no exploration taking place anymore.

\begin{figure*}
\centering
   \subfloat[Performance]{ 
   \label{fig:helicopter_steps}
   \includegraphics[scale=\scll]{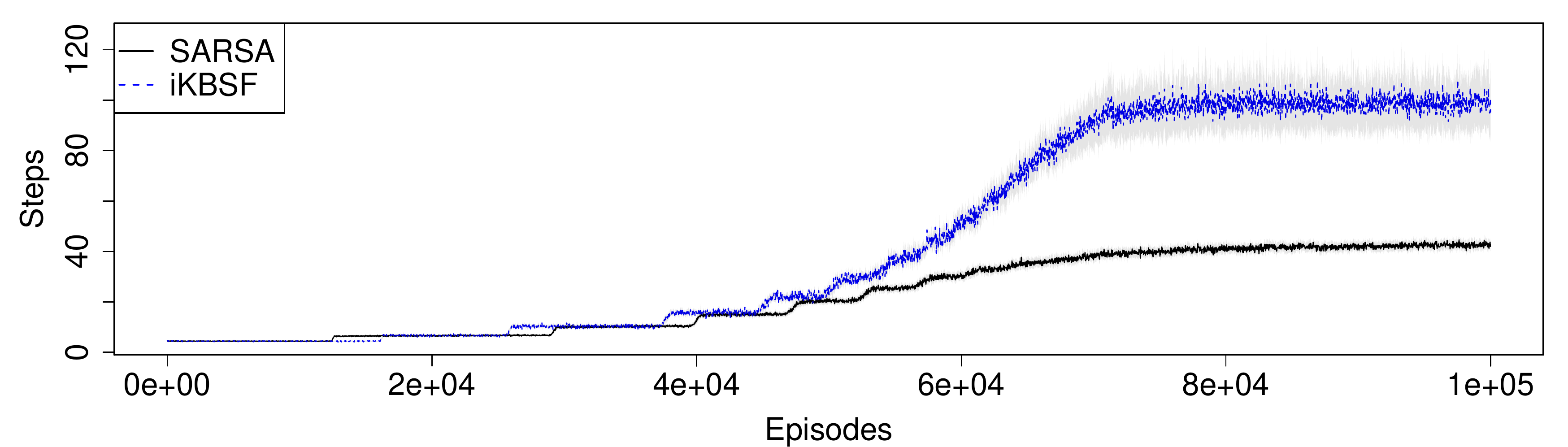} 
   }

    \subfloat[Run time] { 
   \label{fig:helicoper_time}
   \includegraphics[scale=\scll]{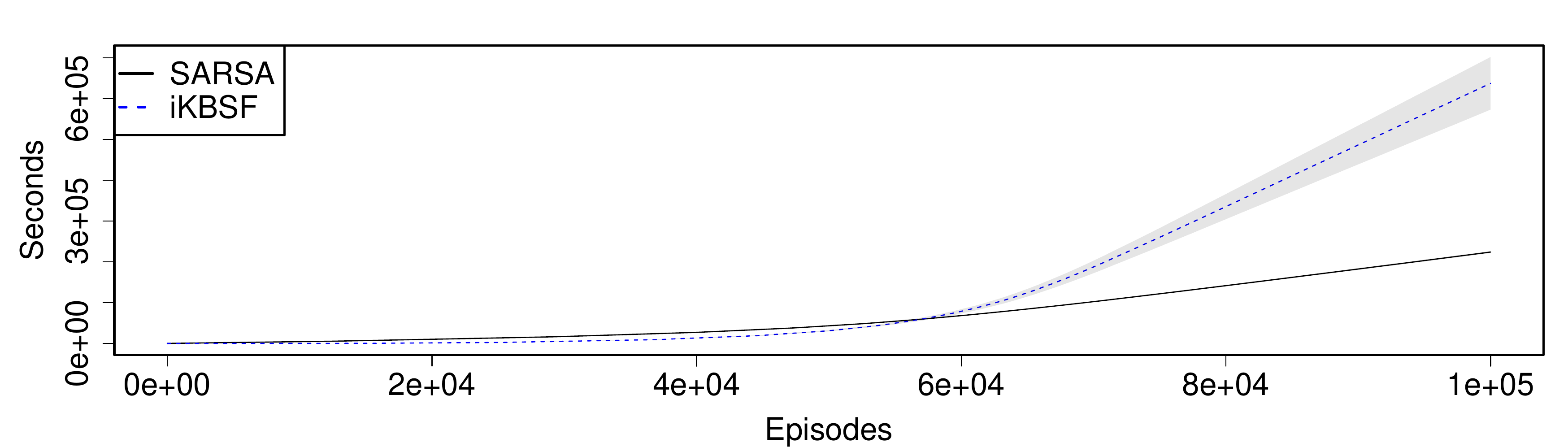}
    }

    \subfloat[Average time per step (time of an episode divided
by the number of steps)]  { 
   \label{fig:helicoper_time_step}
   \includegraphics[scale=\scll]{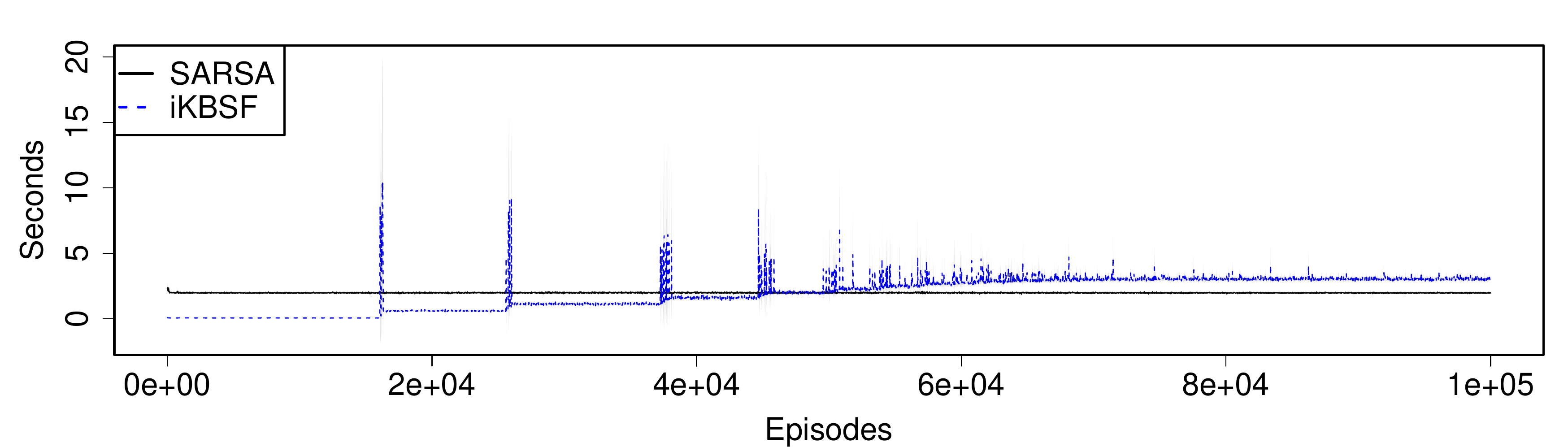}
    }

\caption{Results on the helicopter hovering task averaged over $50$ runs.
The learned controllers were tested from a fixed state (see text for details). 
The shadowed regions represent $99\%$ confidence
intervals. 
\label{fig:helicopter}}
\end{figure*}

When we compare KBSF and SARSA, it is clear
that the former significantly outperforms the latter.
Specifically, after the cut-point of $70\ts000$ episodes, the 
KBSF agent executes approximately $2.25$ times the number of 
steps performed by the SARSA agent before crashing.
Looking at Figures~\ref{fig:helicopter_steps} and~\ref{fig:helicoper_time},
one may argue at first that there is nothing surprising here: being a
model-based algorithm, KBSF is more sample efficient than SARSA, but 
it is also considerably slower~\cp{atkeson97comparison}. 
Notice though that the difference between the run times of SARSA and KBSF
shown in Figure~\ref{fig:helicoper_time} is in part a consequence of the
good performance of the latter: since KBSF is able to control the
helicopter for a larger number of steps, the corresponding episodes will
obviously take longer. A better measure of the algorithms's computational cost
can be seen
in Figure~\ref{fig:helicoper_time_step}, which shows the average time 
taken by each method to perform one transition. 
Observe how KBSF's computing time peaks at the points in which the model and
the value function are updated. In the beginning KBSF's MDP changes
considerably, and as a result the value function updates take longer. As more
and more data come in, the model starts to stabilize, accelerating the
computation of $\Qbo$ (we ``warm start'' policy iteration with the
value function computed in the previous round). At this point, KBSF's
computational cost per step is only slightly higher than SARSA's, even though
the former computes a
model of the environment while the latter directly updates the value function
approximation.

To conclude, we note that our objective in this section was 
exclusively to show that KBSF
can outperform a well-known on-line algorithm with compatible computational
cost. 
Therefore, we focused on the comparison of the algorithms 
rather than on obtaining the best possible performance on the task. 
Also, it is important to mention that more difficult versions of the
helicopter task have been addressed in the literature, usually 
using domain knowledge in the configuration of the algorithms
or to guide the collection of data~\cp{ng2003autonomous,abbeel2007application}. 
Since our focus here was on evaluating the on-line performance of KBSF,
we addressed the problem in its purest form, without
using any prior information to help the algorithms solve the task.

\section{Discussion}
\label{sec:discussion}

During the execution of our experiments we observed several
interesting facts about KBSF which are not immediate from its conceptual
definition. In this section we share some of the lessons learned
with the reader. We start by
discussing the impact of deviating from the
theoretical assumptions over the performance of our algorithm. We then 
present general guidelines on how to configure KBSF to solve reinforcement
learning problems.

\subsection{KBSF's applicability}

The theoretical guarantees regarding KBRL's solution assume that the 
initial states \xia\ in the transitions $(\xia, \ria, \yia)$ are
uniformly sampled from \Sc~\cp[see Assumption~3]{ormoneit2002kernelbased}. This
is somewhat restrictive because it precludes the collection of data through
direct interaction with the environment. 
\ca{ormoneit2002kernelbased} conjectured that sampling the states \xia\ from an
uniform distribution is not strictly necessary, and indeed
later \ct{ormoneit2002kernelbased2} relaxed this assumption for the 
case in which KBRL is applied to an average-reward MDP. In this case, it is only
required that the exploration policy used to collect data chooses all actions
with positive probability.
As described in Sections~\ref{sec:empirical_batch} and~\ref{sec:empirical_inc},
in our computational experiments we collected data through an $\epsilon$-greedy
policy (in many cases with $\epsilon=1$). 
The good performance of KBSF corroborates \ca{ormoneit2002kernelbased}'s
conjecture and suggests that \ca{ormoneit2002kernelbased2}'s
results can be generalized to the discounted reward case, but more theoretical
analysis is needed.

\ct{ormoneit2002kernelbased} also make some assumptions regarding the 
smoothness of the reward function and the transition kernel of the continuous 
MDP (Assumptions~1 and~2). Unfortunately, such assumptions are usually not
verifiable in practice. Empirically, we observed that KBSF 
indeed performs better in
problems with ``smooth dynamics''---loosely speaking, problems in which a small
perturbation in \xia\ results in a small perturbation in \yia,  such as
the pole balancing task. In problems with ``rougher'' dynamics, like
the epilepsy-suppression task, it is still possible to
get good results with KBSF, but in this case it is necessary to use more
representative states and narrower kernels (that is, smaller values for
$\taub$). As a result, in problems of this type KBSF is less
effective in reducing KBRL's computational cost.

\subsection{KBSF's configuration}

The performance of KBSF depends crucially on the definition of the
representative
states $\rs_{j}$. Looking at expression~(\ref{eq:kfuncd}), 
we see that ideally these states
would be such that the rows of the matrices \Ka\ would form a convex hull
containing the rows of the corresponding \Pca. However, it is easy to see
that when $m < n$ such a set of states may not
exist. Even when it does exist, finding this set is not a trivial
problem.

Instead of insisting on finding representative states that allow for an
exact representation of the matrices \Pca, it sounds more realistic to content
oneself with an approximate solution for this problem. 
Proposition~\ref{teo:bound_rep_states} suggests that a reasonable strategy 
to define the representative states is to control the
magnitude of $\max_{a,i}\ddb(\yia, 1)$, the maximum distance  from  a sampled
state \yia\ to the nearest representative state. 
Based on this observation, in our experiments we clustered the states
$\yia$ and used the clusters's centers as our representative states. Despite its
simplicity, this strategy usually results in good performance, as shown in
Sections~\ref{sec:empirical_batch} and~\ref{sec:empirical_inc}. 

Of course, other approaches are possible. The simplest technique is perhaps to
select representative states at random from the set of sampled states \yia.
As shown in Section~\ref{sec:hiv}, this strategy seems to work reasonably well
when adopted together with model averaging. Another alternative is to resort to 
quantization approaches other than $k$-means~\cp{kaufman90finding}.
Among them, a promising method is \ctp{beygelzimer06cover} cover tree, since it
directly tries to minimize $\max_{a,i}\ddb(\yia, 1)$ and can be 
easily updated on-line~(the idea of using 
cover trees for kernel-based reinforcement learning was first proposed by
\cwp{kveton2012kernel}). 
Yet another possibility is to fit a mixture of Gaussians to the sampled states 
\yia~\cp[Chapter~6]{hastie2002elements}. 

The definition of the representative states can also be seen as an opportunity
to incorporate prior knowledge about the domain of interest into the
approximation model. For example, if one knows that some regions of the state
space are more important than others, this information can be used to allocate
more representative states to those regions. Similar reasoning applies to tasks
in which the level of accuracy required from the decision policy varies across
the state space.
Regardless of how exactly the representative states are defined, by using
\ikbsf\ one can always add new ones on-line if necessary
(see Section~\ref{sec:triple_pole}).

Given a well-defined strategy to select representative states, 
the use of KBSF requires the definition of three parameters: the number of
representative states, $m$, and the widths of the kernels used by the algorithm,
$\tau$ and $\taub$. Both theory and practice indicate that KBSF's
performance generally improves when $m$ is increased. Thus, a 
``rule of thumb'' to define the number of representative states is to
simply set $m$ to the
largest value allowed by the available computational resources. This reduces
KBSF's configuration to the definition of the kernels's widths.

The parameters $\tau$ and $\taub$ may have a strong effect on KBSF's 
performance. To illustrate this point, we show in Figure~\ref{fig:puddle_all}
the results of this algorithm on the puddle world task when $\tau$ and
$\taub$ are varied in the set $\{0.01, 0.1, 1\}$ (these were the results used
to generate Figure~\ref{fig:puddle}). 
Of course, the best combination of values for $\tau$ and $\taub$ depends on the
specific problem at hand and on the particular choice of kernels.
Here we give some general advice as to how to set these parameters, based on
both theory in practice. Since $\tau$ is the same parameter used by KBRL, it
should decrease with the number of sample transitions 
$n$ at an ``admissible rate'' (see~\ca{ormoneit2002kernelbased}'s Lemma~2,
\cy{ormoneit2002kernelbased}). Analogously,
Proposition~\ref{teo:bound_rep_states} suggests that $\taub$ should get smaller
as $m \rightarrow n$. 
Empirically, we found out that a simple strategy that usually
facilitates the configuration of KBSF is to rescale the data so
that all the variables have approximately the same magnitude---which corresponds
to using a weighted norm in the computation of the kernels.
Using this strategy we were able to obtain good results with KBSF on all
problems by performing a coarse search in the space of parameters in which we
only varied the order of magnitude of $\tau$ and 
$\taub$ (see Table~\ref{tab:kbsf_params} on
page~\pageref{tab:kbsf_params}).
 
\newcolumntype{S}{>{\centering\arraybackslash} b{.45\linewidth} }
\begin{figure*}
\centering
\begin{tabular}{SS}
\includegraphics[scale=\scll]{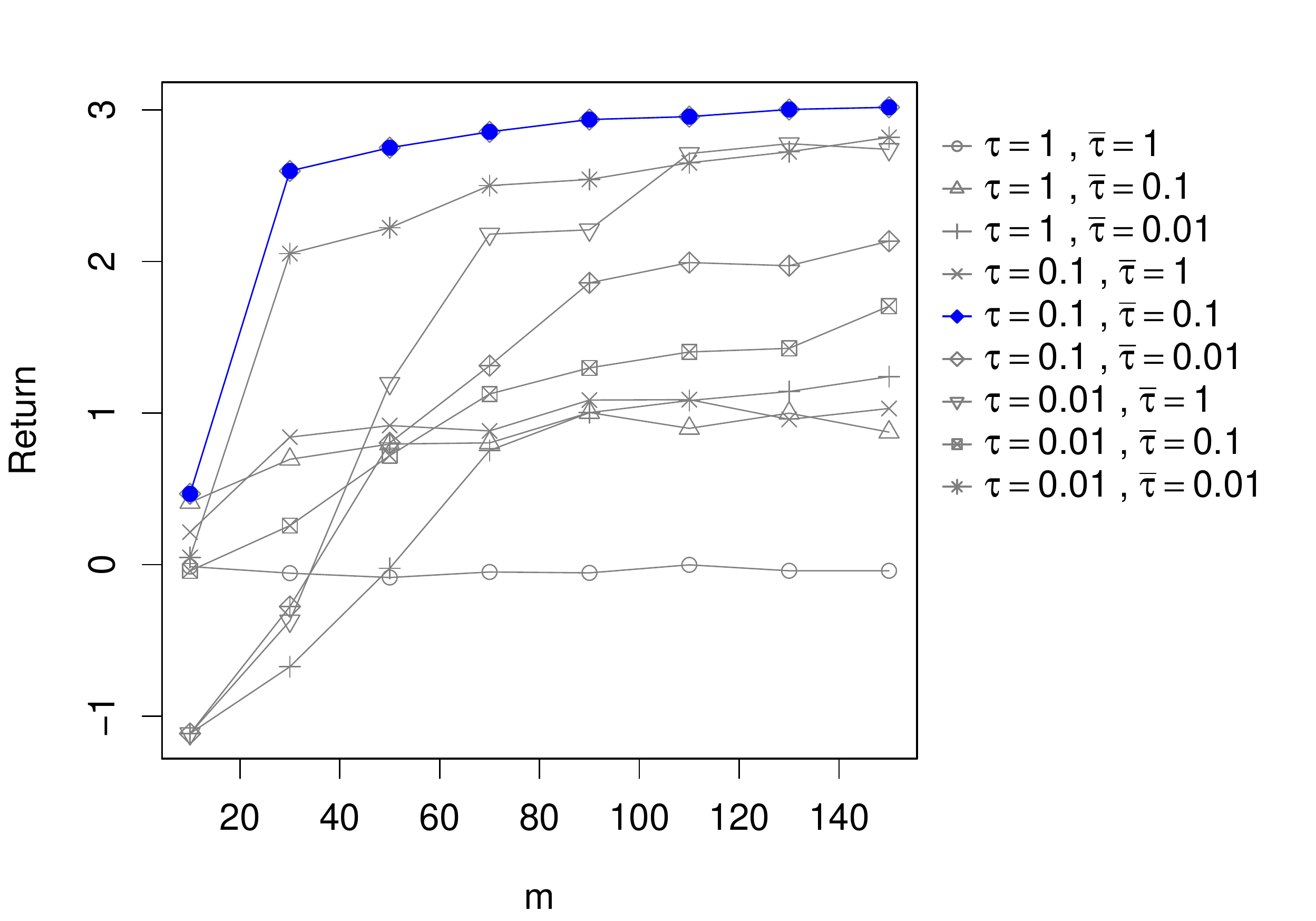} &
\begin{tabular}{cr} 
 $\tau$ & Average return \\ \hline
$1$    &  $1.47 \pm 0.42$ \\
$0.1$  & $3.01 \pm 0.08$ \\
$0.01$ & $3.00 \pm 0.08$ \\ \\ \\ \\ \\  \\
\end{tabular} 
\vspace{-15mm} \\
(a) Performance of KBSF($8\ts000$, $\cdot$)
&
(b) Performance of KBRL($8\ts000$)
\\
\end{tabular}
\caption{The impact of the kernels's widths on the performance of KBSF and
KBRL. Results on the puddle-world task averaged over $50$ runs. The
errors around the mean correspond to the $99\%$
confidence intervals. 
See Figure~\ref{fig:puddle} for details. \label{fig:puddle_all}}
\end{figure*} 

Alternatively, one can fix $\tau$ and $\taub$ and define the
neighborhood used to compute $\gaussa(\rs_{j},\cdot)$ and 
$\gaussb(\yia, \cdot)$. As explained in Appendix~\ref{seca:algorithms}, in 
some of our
experiments we only computed $\gaussa(\rs_{j},\cdot)$ for the $\nk$ closest 
sampled states \xia\ from $\rs_{j}$, and only computed $\gaussb(\yia,
\cdot)$ for the $\nkb$ closest representative states from $\yia$.
When using this approach, a possible way of configuring KBSF is to set $\tau$
and $\taub$ to sufficiently large values (so as to guarantee a minimum level of
overlap between the kernels) and then adjust \nk\ and \nkb. The advantage is
that adjusting \nk\ and \nkb\ may be more intuitive than directly configuring
$\tau$ and $\taub$ ({\sl cf.} Table~\ref{tab:kbsf_params}).

\section{Previous work}
\label{sec:previous}

In our experiments we compared KBSF with KBRL, LSPI, fitted $Q$-iteration, and
SARSA, both in terms of computational cost and in terms of the
quality of the resulting decision policies.
In this section we situate our algorithm in the broader context of
approximate reinforcement learning.
Approximation in reinforcement learning is an important topic that has
generated a huge body of literature. For a broad overview of the subject, we
refer the reader to the books by~\ct{sutton98reinforcement},
\ct{bertsekas96neuro-dynamic}, and \ct{szepesvari2010algorithms}. 
Here we will narrow our attention to \emph{kernel-based} approximation
techniques.

We start by noting that the label ``kernel based'' is used with two different
meanings in the literature. On one side we have kernel smoothing
techniques like KBRL and KBSF, which use local kernels essentially as a device
to implement smooth instance-based approximation~\cp{hastie2002elements}. On the
other side we have methods that use
reproducing kernels to implicitly represent an inner product in a
high-dimensional state space~\cp{scholkopf2002learning}.
Although these two frameworks can give rise to approximators with similar
structures, they rest on different theoretical foundations.
Since reproducing-kernels methods are less directly related to KBSF, we will
only describe them briefly. We will then discuss the kernel smoothing approaches
in more detail.

The basic idea of reproducing-kernel methods is to apply the ``kernel trick''
in the context of reinforcement learning~\cp{scholkopf2002learning}.
Roughly speaking, the approximation problem is rewritten in terms of inner
products only, which are then replaced by a properly-defined kernel. 
This modification corresponds to mapping the problem to a high-dimensional
feature space,
resulting in more expressiveness of the function approximator.
Perhaps the most natural way of applying the kernel trick in the
context of reinforcement learning is to ``kernelize'' some formulation of the
value-function approximation
problem~\cp{xu2005kernel,engel2005reinforcement,farahmand2011regularization}.
Another alternative is to approximate the \emph{dynamics} of an MDP using a
kernel-based regression method~\cp{rasmussen2004gaussian,taylor2009kernelized}.
Following a slightly different line of work,~\ct{bhat2011nonparametric} propose
to kernelize the linear programming
formulation of dynamic programming.
However, this method is not directly applicable to reinforcement learning,
since it is based on the assumption that one has full knowledge of the MDP. A
weaker assumption is to suppose that only the reward function is known and focus
on the approximation of the transition function. This is the approach taken 
by~\ct{grunewalder2012modelling}, who propose to 
embed the conditional distributions defining the transitions of an MDP into a
Hilbert space induced by a reproducing kernel.

We now turn our attention to kernel-smoothing techniques, which are more
closely related to KBRL and KBSF.  \ct{kroemer2011nonparametric} propose
to apply kernel density estimation to the problem of policy evaluation. They
call their method \emph{non-parametric dynamic programming} (NPDP). 
If we use KBRL to compute the value function of a fixed policy, we see many
similarities with NPDP, but also some important differences.
Like KBRL, NPDP is statistically consistent. Unlike KBRL, which 
assumes a finite action space $A$ and directly
approximates the conditional density functions $P^{a}(\sprime| s)$,
NPDP assumes that $A$ is continuous and models the joint density 
$P(s,a,\sprime)$. \ct{kroemer2011nonparametric} showed that the
value function of NPDP has a Nadaraya-Watson kernel
regression form. Not surprisingly, this is also the form of KBRL's solution if
we fix the policy being evaluated ({\sl cf.} equation~(\ref{eq:kbrl_q})).
In both cases, the coefficients of the kernel-based approximation are derived
from the value function of the approximate MDP.
The key difference is the way the transition matrices are computed in each
algorithm.
As shown in~(\ref{eq:mdp_kbrl_P}), the transition probabilities of KBRL's model
are given by the kernel values themselves. In contrast, the computation of each
element of NDPD's transition matrix requires an integration over 
the continuous state space \Sc. 
In practice, this is done by numerical integration techniques that may be 
very computationally demanding (see for
example the experiments performed by~\cwp{grunewalder2012modelling}).

We directly compared NPDP with KBRL because both algorithms build a model whose
number of states is dictated by the number of sample transitions $n$, and 
neither method explicitly attempts to keep $n$ small.
Since in this case each application of the Bellman operator is $O(n^{2})$,
these methods are not suitable for problems in which a large number
of transitions are required, nor are they applicable to on-line
reinforcement learning.\footnote{We note that, incidentally, all the
reproducing-kernel methods discussed in this section also have a computational
complexity super-linear in $n$.}
There are however kernel-smoothing methods that try to avoid this computational
issue by either keeping $n$ small or by executing a number of
operations that grows only linearly with $n$.
These algorithms are directly comparable with KBSF.

One of the first attempts to adapt KBRL to the on-line scenario was that 
of~\ct{jong2006kernelbased}. Instead of collecting a batch of sample
transitions before the learning process starts, the authors propose 
to grow such a set incrementally, based on an exploration policy derived from
KBRL's current model.
To avoid running a dynamic-programming algorithm to completion in between 
two transitions, which may not be computationally
feasible,~\ct{jong2006kernelbased}
resort to~\ctp{moore93prioritized} ``prioritized sweeping'' method to
propagate the changes in the value function every time the model is modified.
The idea of exploiting the interpretation of KBRL as the derivation of a finite
MDP in order to use tabular exploration methods is insightful.
However, it is not clear whether smart exploration is sufficient to overcome the
computational difficulties arising from the fact that
the size of the underlying model is inexorably linked to the number of sample
transitions. For example, even using sparse kernels in their experiments,
\ct{jong2006kernelbased} had to fix an upper limit for the size of KBRL's model.
In this case, once the number of sample transitions has reached the upper
limit, all subsequent data must be ignored. 

Following the same line of work,~\ct{jong2009compositional} later 
proposed to guide KBRL's exploration of the state space
using~\ctp{brafman2003rmax} R-MAX algorithm.
In this new paper the authors address the issue with KBRL's scalability
more aggressively.
First, they show how to combine their approach
with~\ctp{dietterich2000hierarchical} MAX-Q algorithm,
allowing the decomposition of KBRL's MDP into a hierarchy of simpler models.
While this can potentially reduce the computational burden of finding a policy,
such a strategy transfer to the user the responsibility of identifying a useful
decomposition of the task. A more practical approach is to combine 
KBRL with some stable form of value-function approximation.
For that, \ct{jong2009compositional} suggest the use of~\ctp{gordon95stable2}
averagers. As shown in Appendix~\ref{seca:averagers}, 
this setting corresponds to a particular case of KBSF in which
representative states are selected among the set of sampled states \yia.
It should be noted that, even when using temporal abstraction and
function approximation, \ctp{jong2009compositional} approach requires
recomputing KBRL's transition probabilities at each new sample, which can be
infeasible in reasonably large problems.

\ct{kveton2012kernel} propose a more practical algorithm to reduce KBRL's
computational cost. Their method closely resembles the batch version of KBSF. As
with our algorithm, \ctp{kveton2012kernel} method defines a set of
representative states $\rs_{i}$ that give rise to a reduced MDP. The main
difference in the construction of the models is that, instead of computing a
similarity measure between each sampled state $\yia$ and all representative
states $\rs_{j}$, their algorithm 
associates each \yia\ with a single $\rs_{j}$---which comes down to computing
a hard aggregation of the state space \cS. Such an aggregation corresponds to
having a matrix \D\ with a single nonzero element per
row. In fact, it is possible to rewrite \ctp{kveton2012kernel} algorithm using
KBSF's formalism. In this case, the elements of \Dda\ and \Kda\ would be defined
as:
\begin{equation}
\label{eq:kveton}
\begin{array}{ccc}
\dot{k}^{a}_{ij} = \kera(\rs_{i}, \neib(\xja,1)), 
& \text{ and } & 
\dot{d}^{a}_{ij} = \kerk(\neib(\yia,1),\rs_{j}) \\
\end{array}
\end{equation}
where $\kerk$ is the normalized kernel induced by an infinitely
``narrow'' kernel $\gaussk(s, \sprime)$ whose value 
is greater than zero if and only if $s = \sprime$ (recall from
Section~\ref{sec:theory_batch} that $\neib(s,1)$ gives the closest
representative state from $s$).
It is easy to see that we can make matrix \D\ computed by KBSF as close as
desired to a hard aggregation by setting \taub\ to a sufficiently small value
(see Lemma~\ref{teo:split}). 
More practically, we can simply plug~(\ref{eq:kveton}) in place
of~(\ref{eq:mat_kbsf}) in Algorithm~\ref{alg:kbsf} to exactly recover
\ca{kveton2012kernel}'s method. 
Note though that, by replacing 
$\kera(\rs_{i}, \xja)$ with $\kera(\rs_{i}, \neib(\xja,1))$ in the computation
of \Kda, we would be actually deviating from KBRL's framework.
To see why this is so, note that if the representative states $\rs_{i}$ are
sampled from the set of states \yia, the rows of matrix \Ka\ computed by
KBSF would coincide with a subset of the rows of the corresponding KBRL's matrix
\Pca\ ({\sl cf.}~(\ref{eq:kfunca})).  However, this property is lost if
one uses~(\ref{eq:kveton}) instead 
of~(\ref{eq:mat_kbsf}).\footnote{This observation does not imply that 
\ca{kveton2012kernel}'s algorithm is not a principled method. }

\section{Conclusion}
\label{sec:conclusion}

This paper presented KBSF, a reinforcement learning algorithm that results from
the application of the stochastic-factorization trick to KBRL.
KBSF summarizes the information contained in KBRL's MDP in a model of fixed
size. By doing so, our algorithm decouples the structure of the model from its
configuration. This makes it possible to build an approximation which accounts
for both the difficulty of the problem \emph{and} the computational
resources available.

One of the main strengths of KBSF is its simplicity. 
As shown in the paper, its uncomplicated mechanics can be unfolded into two
update rules that
allow for a fully incremental version of the algorithm.
This makes the amount of memory used by KBSF independent of the number of
sample transitions.
Therefore, with a few lines of code one has a 
reinforcement-learning algorithm that can be
applied to large-scale problems, in both off-line and on-line regimes.

KBSF is also a sound method from a theoretical point of view. As discussed,
the distance between the value function computed by this algorithm and the one
computed by KBRL is bounded by two factors: the quality 
and the level of stochasticity of the underlying stochastic
factorization. We showed that both factors can be made arbitrarily small, which 
implies that, in theory, we can make KBSF's solution as close to KBRL's solution
as desired.

But theoretical guarantees do not always translate into practical methods,
either because they are built upon unrealistic assumptions or because they do
not account for procedural difficulties that arise in practice. 
To ensure that this is not the case with our algorithm, we presented an
extensive empirical study in which KBSF was successfully applied to 
different problems, some of them quite challenging. We also presented general
guidelines on how to configure KBSF to solve a reinforcement learning
problem.

For all the reasons listed above, we believe that KBSF has the potential of 
becoming a valuable resource in the solution of reinforcement learning
problems. This is
not to say that the subject has been exhausted. 
There are several possibilities for future research, some of which we now
briefly discuss.

From an algorithmic perspective, perhaps the most pressing demand is for more
principled methods to select the
representative states. Incidentally, this also opens up the possibility of an
automated procedure to set the kernel's widths \taub\ based solely on data.
Taking the idea a bit further, one can think of having one distinct $\taub_{i}$
associated with each kernel $\kerb(\cdot, \rs_{i})$.
Another important advance 
would be to endow \ikbsf\ with more elaborate exploration strategies, maybe
following the line of research initiated
by~\ct{jong2006kernelbased,jong2009compositional}.

Regarding the integration of KBSF to its broader context,
a subject that deserves further investigation is the possibility of 
building an approximation based on multiple models. Model averaging
is not inherently linked to KBSF, and in principle it can be used with virtually
any reinforcement learning algorithm. However, KBSF's low
computational cost makes it particularly amenable to this technique. Since
our algorithm is orders of magnitude faster than any method whose
complexity per iteration is a function of the number of sample transitions, we
can afford to compute several approximations and still have a solution in
comparable time (see Section~\ref{sec:hiv}). Understanding to
what extend this can improve the quality of
the resulting decision policy is a matter of interest.

In this paper we emphasized the role of KBSF as a technique to reduce
KBRL's computational cost. However, it is equally important to ask whether 
our algorithm provides benefits from a statistical point of view.
~\ct{ormoneit2002kernelbased} showed that, in general, the number of sample
transitions needed by KBRL to achieve a certain approximation accuracy 
grows exponentially with the dimension of the state
space. As with other methods, the only way to avoid
such an exponential dependency is to explore some sort of regularity in the
problem's structure---paraphrasing the authors, one can only
``break'' the curse of dimensionality by incorporating prior knowledge into the
approximation~\cp{ormoneit2002kernelbased}. We think that KBSF may be cast as a
strategy to do so. In particular, the definition of the representative states
can be interpreted as a practical mechanism to incorporate knowledge into the
approximation. Whether or not this will have an impact on the algorithm's
sample complexity is an
interesting question for future investigation.

We conclude by noting that KBSF represents one particular way in which the
stochastic-factorization trick can be exploited in the context of reinforcement
learning. In principle, any algorithm that builds a model based on sample
transitions can resort to the same trick to leverage the use of the data. The
basic idea remains the same: instead of estimating the transition probabilities
between every pair of states, one focuses on a small set of representative
states whose values are propagated throughout the state space based on some
notion of similarity. We believe that this general framework can potentially be
materialized into a multitude of useful reinforcement learning
algorithms.

\appendix

\section{Theoretical Results}
\label{seca:theory}

\subsection{Assumptions}
\label{seca:assumptions}

We assume that KBSF's kernel $\mkb(x) : \R^{+} \mapsto \R^{+}$ has the
following properties: 
\begin{enumerate}[(i)]
 \item \label{it:dec} $\mkb(x) \ge \mkb(y)$ if $x < y$,
\item \label{it:ed} $\exists \;  A_{\mkb} > 0, \lambda_{\mkb} \ge 1, B
\ge 0 \text{ such that } A_{\mkb} \exp(-x) \le \mkb(x) \le \lambda_{\mkb}
A_{\mkb} \exp(-x) 
\text{ if } x \ge B.$
\newcounter{enumi_saved}
\setcounter{enumi_saved}{\value{enumi}}
\end{enumerate}
Given \mkb, we will denote by $B_{\mkb}$ the smallest $B$ that
satisfies~(\ref{it:ed}).
Assumption~(\ref{it:ed}) implies that the function \mkb\ 
is positive and will eventually decay exponentially.
Note that we assume that \mkb\ is greater than zero everywhere 
in order to guarantee
that \kerb\ is well defined for any value of \taub. 
It should be straightforward to generalize our results for the case 
in which \mkb\ has finite support by ensuring
that, given sets of sample transitions \Sca\ and a set of representative
states \Sb, \taub\ is such that, for any $\yia \in \Sca$, with $a \in A$,
there is a $\rs_{j} \in \Sb$ for which $\gaussb(\yia, \rs_{j}) > 0$
(note that this assumption is naturally satisfied by the
``sparse kernels'' used in some of the experiments).

\subsection{Proofs}
\label{seca:proofs}

\reptheo{Lemma~\ref{teo:dist_cont}}
{
For any $\xia \in \Sca$ and any $\epsilon > 0$, there is a $\delta > 0$ such
that $|\kera(s,\xia) - \kera(\sprime,\xia)| < \epsilon$ if 
$\norm{s - \sprime} <~\delta$.
}
\begin{proof}
Define the function
{\small
\begin{equation*}
\dista^{a,i}_{\tau, s}(\sprime) 
= \left| \dfrac{\gaussa(s, \xia)}{\sum_{j=1}^{n_a} \gaussa(s, \xja)}
- \dfrac{\gaussa(\sprime, \xia)}{\sum_{j=1}^{n_a} \gaussa(\sprime, \xja)}
\right|
=
\left| \dfrac{\mk\left({\norm{s - \xia}}/{\tau}\right)}
{\sum_{j=1}^{n_a} \mk\left({\norm{s - \xja}}/{\tau}\right)}
- \dfrac{\mk\left({\norm{\sprime - \xia}}/{\tau}\right)}
{\sum_{j=1}^{n_a} \mk\left({\norm{\sprime - \xja}}/{\tau}\right)} \right|.
\end{equation*}
}

\noindent
Since \mk\ is continuous, it is obvious that $\dista^{a,i}_{\tau, s}(\sprime)$
is also continuous in $\sprime$. The property
follows from the fact that 
$\lim_{\sprime \rightarrow s} \dista^{a,i}_{\tau, s}(\sprime) = 0$. 
\end{proof}

\long\def\symbolfootnote[#1]#2{\begingroup%
\def\thefootnote{\fnsymbol{footnote}}\footnote[#1]{#2}\endgroup} 
\reptheo{Lemma~\ref{teo:split}\symbolfootnote[1]{We restate
the lemma here showing explicitly how to define \taub. This detail was omitted
in the main body of the text to improve clarity.}}
{
Let $s \in \Sc$, let $m > 1$, and assume there is a $w \in \{1, 2, ..., m-1\}$
such that $\ddb(s, w) < \ddb(s,w+1)$. Define 
$
W \equiv \{k \;|\; \norm{s\ - \rs_{k}} \le \ddb(s,w) \}
\text{ and }
\bar{W} \equiv \{1, 2, ..., m\} - W.
$
Then, for any
$\alpha > 0$, 
we can guarantee that 
\begin{equation}
\label{eq:res_split}
\sum_{k \in \bar{W}} \kerb(s, \rs_{k}) < \alpha  \sum_{k \in W} 
\kerb(s, \rs_{k})
\end{equation}
by making
$ 
\taub < \varphi(s, w, m, \alpha),
$
where 
\begin{equation}
\label{eq:set_vt}
\varphi(s, w, m, \alpha) = \min(\varphi_{1}(s,w),\varphi_{2}(s,w,
m,\alpha))
\end{equation}
and

{\footnotesize
\begin{align*}
\varphi_{1}(s,w) = \left\{
\begin{array}{l}
  \dfrac{\ddb(s, w)}{B_{\mkb}}, 
\text{ if } B_{\mkb} > 0, \\
\infty, \text{ otherwise, } 
\end{array}
\right.
&
\varphi_{2}(s, w, m, \alpha) = \left\{
\begin{array}{l}
  \dfrac{\ddb(s, w) - \ddb(s, w+1)}{\ln({\alpha w} /
(m-w) \lambda_{\mkb})}, 
\text{ if } \dfrac{\alpha w}{(m-w) \lambda_{\mkb}} < 1, \\
\infty, \text{ otherwise. } 
\end{array}
\right.
\end{align*}
}
}
\begin{proof}
Expression~(\ref{eq:res_split}) can be rewritten as
 \begin{equation*}
\label{eq:ic}
\dfrac{\sum_{k \in \bar{W}} \gaussb(s, \rs_{k})}
{\sum_{i=1}^{m} \gaussb(s, \rs_{i})}
<
 \alpha
\dfrac{\sum_{k \in W} \gaussb(s, \rs_{k})}
{\sum_{i=1}^{m} \gaussb(s, \rs_{i})}
\iff
\sum_{k \in \bar{W}}\gaussb(s, \rs_{k})
<
 \alpha 
\sum_{k \in W}\gaussb(s, \rs_{k}),
\end{equation*}
which is equivalent to 
\begin{equation}
 \label{eq:ic2}
\sum_{k \in \bar{W}} \mkp{\dfrac{\norm{s - \rs_{k}}}{\taub}}
<
 \alpha
\sum_{k \in W}\mkp{\dfrac{\norm{s - \rs_{k}}}{\taub}}.
\end{equation}
Based on Assumption~(\ref{it:dec}), we know that
a sufficient condition for~(\ref{eq:ic2}) to hold is
\begin{equation}
\label{eq:ic3}
\mkp{\dfrac{\ddb(s, w+1)}{\taub}}
< 
\dfrac{\alpha w}{m-w} \mkp{\dfrac{\ddb(s, w)}{\taub}}.
\end{equation}
Let $\beta = {\alpha w}/{(m-w)}$. If $\beta > 1$, 
then~(\ref{eq:ic3}) is always true, regardless of the value of \taub. 
We now show that, when $\beta \le 1$, it is always possible to set \taub\ in
order to guarantee that~(\ref{eq:ic3}) holds.
Let $z = \ddb(s, w)$ and let $\delta = \ddb(s, w+1) - z$.
From Assumption~(\ref{it:ed}), we know that, if $B_{\mkb} = 0$ or
$\taub < z / B_{\mkb}$,
\begin{equation*}
\begin{array}{cl}
\dfrac{\mkb((z + \delta)/\taub)}{\mkb(z / \taub)}
& \le 
\dfrac{\lambda_{\mkb} A_{\mkb} \exp(-(z + \delta)/\taub)}{A_{\mkb} \exp(-z /
\taub)} 
= \dfrac{\lambda_{\mkb} \exp(-(z + \delta)/\taub)}{\exp(-z /\taub)}. \\
\end{array}
\end{equation*}
Thus, in order for the result to follow, it suffices to show that
\begin{equation}
\label{eq:a}
\dfrac{\exp(-(z + \delta)/\taub)}{\exp(-z / \taub)} 
< \dfrac{\beta}{\lambda_{\mkb}}.
\end{equation}
We know that, since $\delta > 0$, if $\beta / \lambda_{\mkb} = 1$
inequality~(\ref{eq:a}) is true. 
Otherwise, 
\begin{equation*}
\label{eq:prop_exp}
\begin{array}{cc}
\dfrac{\exp(-(z + \delta)/\taub)}{\exp(-z / \taub)} <
\dfrac{\beta}{\lambda_{\mkb}}
\iff 
\ln\left(\dfrac{\exp(-(z + \delta)/\taub)}{\exp(-z / \taub)}\right) <
\ln\left(\dfrac{\beta}{\lambda_{\mkb}}\right) 
\\
\iff 
- \dfrac{\delta}{\taub} < \ln\left(\dfrac{\beta}{\lambda_{\mkb}}\right)
\iff 
\taub < - \dfrac{\delta}{\ln(\beta / \lambda_{\mkb})}.
\end{array}
\end{equation*}
Thus, by taking 
$\taub < -\delta / \ln(\beta / \lambda_{\mkb})$ if $B_{\mkb} = 0$, 
or $\taub < \min(-\delta / \ln(\beta / \lambda_{\mkb}), z /
B_{\mkb})$ otherwise, the result follows.
\end{proof}

\noindent
{\bf Note:} 
We briefly provide some intuition on the functions $\varphi_{1}$ and
$\varphi_{2}$. 
Since we know from Assumption~(\ref{it:dec}) that $\mkb$ is non-increasing,
we can control the magnitude of 
$\sum_{k \in \bar{W}} \kerb(s,\rs_{k})$ $/\sum_{k \in W} \kerb(s,\rs_{k})$ by
controlling 
\begin{equation}
\label{eq:frac_control}
\frac{\kerb(s,\neib(s,w+1))}{\kerb(s,\neib(s,w))}
 = \frac{\mkb(\ddb(s,w+ 1)/\taub)}{\mkb(\ddb(s,w)/\taub)}.
\end{equation}
Function 
$\varphi_{1}$ imposes an upper bound on $\taub$ in order to ensure that
$\ddb(s, w) / \taub \ge B_{\mkb}$.
This implies that $\mkb(\ddb(s,w) / \taub)$ will be in the 
``exponential region'' of $\mkb$, which makes it possible to
control the magnitude of~(\ref{eq:frac_control}) by adjusting \taub.
In particular, because of Assumption~(\ref{it:ed}), we know that
$\mkb(\ddb(s,w+1) / \taub) / \mkb(\ddb(s,w) / \taub) \rightarrow 0$ as $\taub
\rightarrow 0$. 
Function $\varphi_{2}$ exploits this fact, decreasing 
the maximum allowed value for \taub\ according to two factors. The first one is 
the difference of magnitude of $\ddb(s,w+1)$ and $\ddb(s,w)$. This is easy to
understand. Suppose we 
want to make~(\ref{eq:frac_control}) smaller than
a given threshold. If $\neib(s,w+1)$ is much farther from $s$ than
$\neib(s,w)$, the value of $\mkb(\ddb(s,w+1)/\taub)$ will be considerably
smaller than the value of $\mkb(\ddb(s,w)/\taub)$ even if \taub\ is large. 
On the other hand, if the difference of magnitude of $\ddb(s,w+1)$ and
$\ddb(s,w)$ is small, 
we have to decrease \taub\ to ensure that~(\ref{eq:frac_control}) is
sufficiently small. Therefore, the upper bound for \taub\ set by $\varphi_{2}$
decreases with $|\ddb(s,w+1) - \ddb(s,w)|$.
The second factor that influences this upper bound is $w / (m - w)$, the
relative sizes of the sets $W$ and $\bar{W}$. Again, this
is not hard to understand: as we reduce the size of $W$, 
we also decrease the number of terms in 
the sum $\sum_{k \in W}\kerb(s,\rs_{k})$, and thus we must 
decrease the ratio~(\ref{eq:frac_control}) to make sure that $\sum_{k \in
\bar{W}} \kerb(s,\rs_{k}) /\sum_{k \in W} \kerb(s,\rs_{k})$ is sufficiently
small. Thus,
the upper bound on \taub\ defined by $\varphi_{2}$ grows with 
$w / (m - w)$.

\vspace{5mm}

\reptheo{Proposition~\ref{teo:bound_rep_states}}{
For any $\epsilon > 0$, there is a $\delta > 0$ such that, if
$\max_{a,i}\ddb(\yia, 1) < \delta$, then we can 
guarantee that $\boundV<~\epsilon$
by making $\taub$ sufficiently small.
}
\begin{proof}
From~(\ref{eq:rca}) and~(\ref{eq:rba}), we know that
\begin{equation}
\label{eq:p_only}
\infnorms{\rca - \D\rba} = \infnorms{\Pca \rr - \D\Ka \rr} = 
\infnorms{(\Pca - \D\Ka) \rr} \le \infnorms{\Pca - \D\Ka}\infnorms{\rr}.
\end{equation}
\hspace{-3mm}
Thus, plugging~(\ref{eq:p_only}) back into~(\ref{eq:bound_sf}), it is
clear that there is a $\nu > 0$ such that 
$\boundV< \epsilon$ if 
\begin{equation}
\label{eq:first_goal}
\max_{a} \infnorms{\Pca - \D\Ka} < \nu
\end{equation}
and
\begin{equation}
\label{eq:second_goal}
\max_{i}{(1 - \max_{j}{d_{ij})}} < \nu.
\end{equation}
We start by showing that there is a $\delta > 0$ and a
$\theta > 0$ such that expression~(\ref{eq:first_goal}) is true
if $\max_{a,i}\ddb(\yia, 1) < \delta$ and $\taub < \theta$. 
Let $\Pxa = \D\Ka$ and 
let $\matii{\hat{p}}{a}{i} \in \R^{1 \times n}$ and 
$\matii{\check{p}}{a}{i} \in \R^{1 \times n}$
be the \ith\ rows of \Pca\ and \Pxa, respectively.
Then,
\begin{equation}
\label{eq:dif_pij}
\begin{array}{cl}
\infnorm{\matii{\hat{p}}{a}{i} - \matii{\check{p}}{a}{i}} 
& = 
\sum_{j=1}^{n_a} 
|\hat{p}_{ij}^{a} - \sum_{k=1}^{m} \dot{d}^{a}_{ik}\dot{k}^{a}_{kj}|
\\
& = 
\sum_{j=1}^{n_a} 
|\kera(\yia,\xja) - \sum_{k=1}^{m} \kerb(\yia,\rs_{k}) \kera(\rs_{k},\xja)|
\\
& = 
\sum_{j=1}^{n_a} 
|\sum_{k=1}^{m} \kerb(\yia,\rs_{k}) \kera(\yia,\xja) - \sum_{k=1}^{m}
\kerb(\yia,\rs_{k}) \kera(\rs_{k},\xja)|
\\
& = 
\sum_{j=1}^{n_a} 
|\sum_{k=1}^{m} \kerb(\yia,\rs_{k}) [\kera(\yia,\xja) -
\kera(\rs_{k},\xja)]|
\\
& \le 
\sum_{j=1}^{n_a} 
\sum_{k=1}^{m} \kerb(\yia,\rs_{k}) \left|\kera(\yia,\xja) -
\kera(\rs_{k},\xja)\right|.
\\
\end{array}
\end{equation}
Our strategy will be to show that, for any $a$, $i$, and $j$, there is a
$\delta^{a,i,j} > 0$ and a $\theta^{a, i, j} > 0$ such that 
\begin{equation}
\label{eq:goal1}
\sum_{k=1}^{m} \kerb(\yia, \rs_{k}) | \kera(\yia,\xja) -\kera(\rs_{k},\xja) |
< \frac{\nu}{n_{a}}
\end{equation}
if $\ddb(\yia,1) < \delta^{a,i,j}$ and $\taub < \theta^{a,i,j}$.
To simplify the notation, we will use the superscript `$\aij$' meaning
`$a,i,j$'. Define $\difajik \equiv | \kera(\yia,\xja) -\kera(\rs_{k},\xja) |$.
From Lemma~\ref{teo:dist_cont} we know that there is a $\delta^{\aij} > 0$
such that $\difajik < \nu / n_{a}$ 
if $\norm{\yia - \rs_{k}} < \delta^{\aij}$.
Let $W^{\aij} \equiv \{k \;|\; \norm{\yia\ - \rs_{k}} < \delta^{\aij} \}$
and 
$\bar{W}^{\aij} \equiv \{1, 2, ..., m\} - W^{\aij}$.  
Since we are assuming that $\ddb(\yia, 1) < \delta^{\aij}$, 
we know that $W^{\aij} \ne \emptyset$.
In this case, we can write:
\begin{equation*}
\sum_{k=1}^{m}
\kerb(\yia, \rs_{k})
\difajik \\
= 
\sum_{k \in W^{\aij}}
\kerb(\yia, \rs_{k})
\difajik
+
\sum_{k \in \bar{W}^{\aij}}
\kerb(\yia, \rs_{k})
\difajik. 
\end{equation*}
Let 
\begin{equation*}
\begin{array}{ccc}
\minv = 
\left\{\begin{array}{l}
\mmin{k \in W^{\aij}} \{\difajik | \difajik > 0\} \text{
if }
\mmax{k  \in W^{\aij}} \difajik > 0, \\
0 \text{ otherwise} 
       \end{array}\right.
&
\text{and}
&
\maxv = 
\left\{\begin{array}{l}
\mmax{k \in \bar{W}^{\aij}} \difajik \text { if } |W^{\aij}| < m, \\
0 \text{ otherwise} .
       \end{array}\right.
\end{array}
\end{equation*}
If $\maxv = 0$, inequality~(\ref{eq:goal1}) is necessarily true,
since 
$
\sum_{k \in W^{\aij}} 
\kerb(\yia, \rs_{k}) \difajik \le \mmax{k \in W^{\aij}}
\difajik < {\nu}/{n_{a}}.
$
We now turn to the case in which $\maxv > 0$.
Suppose first that $\minv = 0$. In this case, we have to show that
there is a $\taub$ that yields
\begin{equation}
\label{eq:inwb}
\begin{array}{cl}
\sum_{k \in \bar{W}^{\aij}}
\kerb(\yia, \rs_{k})
\difajik
< \dfrac{\nu}{n_{a}}.\\ 
\end{array}
\end{equation}
A sufficient condition for~(\ref{eq:inwb}) to be true is
\begin{align}
\label{eq:inwb2}
\sum_{k \in \bar{W}^{\aij}}
\kerb(\yia, \rs_{k}) < \dfrac{\nu}{n_{a} \maxv}
\iff
\dfrac{1}{\sum_{j=1}^{m} \gaussb(\yia, \rs_{j})}
\sum_{k \in \bar{W}^{\aij}}
\gaussb(\yia, \rs_{k}) < \frac{\nu}{ n_{a} \maxv}.
\end{align}
Obviously, if $\maxv \le \nu / n_{a}$ inequality~(\ref{eq:inwb2}) is
always true, regardless of the value of \taub. Otherwise, we can
rewrite~(\ref{eq:inwb2}) as
\begin{align*}
\label{eq:inwb3}
\sum_{k \in \bar{W}^{\aij}}
\gaussb(\yia, \rs_{k}) < \dfrac{\nu}{n_{a}\maxv}
\left(
{\sum_{j\in W^{\aij}} \gaussb(\yia, \rs_{j}) + \sum_{k \in \bar{W}^{\aij}}
\gaussb(\yia,\rs_{k})}\right),
\end{align*}
and, after a few algebraic manipulations, we obtain
{\small
\begin{equation}
\sum_{k \in \bar{W}^{\aij}}
\gaussb(\yia, \rs_{k}) 
< \dfrac{\nu}{n_{a} \maxv  - \nu}
\sum_{k\in W^{\aij}} \gaussb(\yia, \rs_{k}),
\iff
\label{eq:second_exit}
\sum_{k \in \bar{W}^{\aij}}
\kerb(\yia, \rs_{k}) 
< \dfrac{\nu}{n_{a} \maxv - \nu}
\sum_{k\in W^{\aij}} \kerb(\yia, \rs_{k}).
\end{equation}
}
We can guarantee that~(\ref{eq:second_exit}) is true by applying
Lemma~\ref{teo:split}. Before doing so, though, lets analyze the case 
in which $\minv > 0$.
Define
\begin{equation}
\label{eq:beta}
\beta^{\aij} = \dfrac{\nu} 
{n_{a} \sum_{k \in W^{\aij}}
\kerb(\yia, \rs_{k})\difajik} - 1 
\end{equation}
(note that $\beta^{\aij} > 0$ because 
$\sum_{k \in W^{\aij}} \kerb(\yia, \rs_{k})\difajik < v/n_{a}$).
In order for~(\ref{eq:goal1}) to hold, we must show that there is a $\taub$ that
guarantees that
\begin{equation}
\label{eq:split}
\begin{array}{cl}
\sum_{k \in \bar{W}^{\aij}}
\kerb(\yia, \rs_{k})
\difajik - 
 \beta^{\aij} 
\sum_{k \in W^{\aij}}
\kerb(\yia, \rs_{k})
\difajik
< 0.\\ 
\end{array}
\end{equation}
A sufficient condition for~(\ref{eq:split}) to hold is
\begin{equation}
\label{eq:first_exit}
\begin{array}{cl}
\sum_{k \in \bar{W}^{\aij}}
\kerb(\yia, \rs_{k})
< 
 \dfrac{\beta^{\aij} \minv}{\maxv} \sum_{k \in W^{\aij}}
\kerb(\yia, \rs_{k}).
\end{array}
\end{equation}
Observe that expressions~(\ref{eq:second_exit}) and~(\ref{eq:first_exit}) only
differ in the coefficient multiplying the right-hand side of the inequalities.
Let $\alpha^{\aij} < \min(\nu/(\maxv n_{a} -\nu), {\beta^{\aij}
\minv}/{\maxv})$. Then, if we make 
$\theta^{\aij} = \varphi(\yia,|W|,m,\alpha^{\aij})$, with
$\varphi$ defined in~(\ref{eq:set_vt}),
we can apply Lemma~\ref{teo:split} to guarantee that~(\ref{eq:goal1}) holds.
Finally, if we let $\delta = \min_{\aij} \delta^{\aij}  = \min_{a,i,j}
\delta^{a,i,j}$ and 
$\theta = \min_{\aij} \theta^{\aij} = \min_{a,i,j} \theta^{a,i,j}$, we can
guarantee that~(\ref{eq:goal1}) is true for all $a$, $i$, and $j$, which
implies that~(\ref{eq:first_goal}) is also true (see~(\ref{eq:dif_pij})).

It remains to show that there is a $\omega > 0$
such that~(\ref{eq:second_goal}) is true if $\bar{\tau} < \omega$. 
Recalling that, for any $i$ and any $a$,
\begin{equation*}
\max_{j}\dot{d}^{a}_{ij} = 
\dfrac{\gaussb(\yia, \neib(\yia, 1))}{\sum_{k=1}^{m}\gaussb(\yia, \rs_{k})},
\end{equation*}
we want to show that
\begin{equation*}
\label{eq:goal2}
\gaussb(\yia, \neib(\yia, 1)) > (1 - \nu) 
\left[\gaussb(\yia, \neib(\yia, 1)) + \sum_{k=2}^{m}\gaussb(\yia, \neib(\yia,
k))\right],
\end{equation*}
which is equivalent to
\begin{equation}
\label{eq:cond_dij}
(1 - \nu)
\sum_{k=2}^{m}\gaussb(\yia, \neib(\yia, k))
< \nu \gaussb(\yia, \neib(\yia, 1)). 
\end{equation}
If $\nu \ge 1$, inequality~(\ref{eq:cond_dij}) is true regardless of 
the particular choice of \taub. Otherwise, we can
rewrite~(\ref{eq:cond_dij}) as
{\small
\begin{equation}
\label{eq:third_exit}
\sum_{k=2}^{m}\gaussb(\yia, \neib(\yia, k))
< \dfrac{\nu}{1 - \nu} \gaussb(\yia, \neib(\yia, 1))
\iff 
\sum_{k=2}^{m}\kerb(\yia, \neib(\yia, k))
< \dfrac{\nu}{1 - \nu} \kerb(\yia, \neib(\yia, 1)). 
\end{equation}
}
Let $\alpha = {\nu}/{(1 - \nu)}$. Then, if
we make $\omega^{a,i} = \varphi(\yia,1,m,\alpha)$, with
$\varphi$ defined in~(\ref{eq:set_vt}), we can resort
to Lemma~\ref{teo:split} to guarantee that~(\ref{eq:third_exit}) holds.
As before, if we let $\omega = \min_{a,i} \omega^{a,i}$, we can guarantee 
that~(\ref{eq:second_goal}) is true. Finally, by making 
$\taub = \min(\theta, \omega)$, the result follows.
\end{proof}

\long\def\symbolfootnote[#1]#2{\begingroup%
\def\thefootnote{\fnsymbol{footnote}}\footnote[#1]{#2}\endgroup} 

\reptheo{Lemma~\ref{teo:bound_q}}
{
Let $M \equiv (S, A, \Pa, \ra,\gamma)$ and 
$\tilde{M} \equiv (S, A, \Pta, \rta,\gamma)$ be two finite MDPs. Then,
for any $s \in S$ and any $a \in A$,
\begin{equation}
\label{eq:bound_q}
|{Q}^{*}(s, a) - \tilde{Q}^{*}(s, a)| \le 
\frac{1}{1-\gamma} \maxinf{\ra}{\rta} + \frac{\gamma (2 -
\gamma)}{2(1-\gamma)^{2}} 
R_{\dif} 
\maxinf{\Pa}{\Pta}, 
\end{equation}
where 
${R}_{\dif} = \max_{a,i} {r}^{a}_{i} - \min_{a,i}
{r}^{a}_{i}$.
}
\begin{proof}
Let $\matii{q}{a}{*}, \matii{\tilde{q}}{a}{*} \in \R^{|S|}$ be the
\ath\ columns of \Qo\ and \Qto, respectively.
Then, 
\begin{align}
\nonumber
\infnorm{\matii{{q}}{a}{*} - \matii{\tilde{q}}{a}{*}} 
& \nonumber = \infnorm{\ra + \gamma \Pa \vo - \rta - \gamma \Pta \vto}  \\
& \nonumber \le \infnorm{\ra - \rta} + \gamma \infnorm{\Pa \vo - \Pta \vto} \\
& \nonumber = \infnorm{\ra - \rta} + \gamma 
\infnorm{\Pa \vo - \Pta \vo + \Pta \vo - \Pta \vto} \\
& \nonumber \le \infnorm{\ra - \rta} + \gamma \infnorm{\vo (\Pa - \Pta) }
+ \gamma \infnorm{\Pta (\vo - \vto)}  \\
&  \label{eq:dif_q}
\le \infnorm{\ra - \rta} + \gamma \infnorm{\vo (\Pa - \Pta) }
+ \gamma \infnorm{\vo - \vto}, 
\end{align}
where in the last step we used the fact that $\Pta$ is stochastic, and 
thus $\infnorms{\Pta \v} \le \infnorms{\v}$ for any \v.
We now provide a bound for \infnorms{\vo (\Pa - \Pta)}.
Let $\A = \Pa - \Pta$.
Then, for any $i$, 
$
\sum_{j} a_{ij} 
 = \sum_{j} ( {p}^{a}_{ij} - \tilde{p}^{a}_{ij})
= \sum_{j}  {p}^{a}_{ij} - \sum_{j} \tilde{p}^{a}_{ij} = 0,
$
that is, the elements in each row of \A\ sum to zero.  Let $a^{+}_{i}$ be the
sum of positive elements in the \ith\ row of \A\ and 
let $a^{+}_{\max} = \max_{i} a^{+}_{i}$. 
It should be clear that $\infnorm{\A} = 2 a^{+}_{\max}$.
Then, for any $i$,
\begin{align}
\nonumber
|\sum_{j} a_{ij}  {v}^{*}_{j} | 
& \le \sum_{(j : a_{ij} > 0)} a_{ij}  {v}^{*}_{\max} 
+ \sum_{(j: a_{ij} < 0)} a_{ij}  {v}^{*}_{\min} 
 = a^{+}_{i}  {v}^{*}_{\max} - 
a^{+}_{i}  {v}^{*}_{\min} 
  \le a^{+}_{\max} ( {v}^{*}_{\max} -  {v}^{*}_{\min}) \\
& \label{eq:bound_av}
\le \frac{a^{+}_{\max}}{1-\gamma} ( {r}^{a}_{\max} -  {r}^{a}_{\min})
\le \frac{a^{+}_{\max} R_{\dif}}{1-\gamma} 
=  \frac{  {R}_{\dif}}{2(1-\gamma)} \infnorms{\Pa - \Pta},
\end{align}
where we used the convention ${v}_{\max} = \max_{i}{v}_{i}$ 
(analogously for $v_{\min}$).
As done in~(\ref{eq:whitt}), we can resort to~\ctp{whitt78approximations}
Theorem~3.1 and Corollary~(b) of his Theorem~6.1 to obtain a bound 
for \infnorm{\vo - \vto}. 
Substituting such a bound and expression~(\ref{eq:bound_av}) 
in~(\ref{eq:dif_q}), we obtain

{\footnotesize
\begin{align*}
\infnorm{\matii{q}{a}{*} - \matii{\tilde{q}}{a}{*}} 
\le \infnorm{\ra - \rta} 
+ \dfrac{\gamma  R_{\dif}}{2(1-\gamma)} \infnorms{\Pa - \Pta} 
+ \dfrac{\gamma }{1-\gamma} \left(\maxinf{\ra}{\rta} + \dfrac{ R_{\dif}}
{2(1-\gamma)} \maxinf{\Pa}{\Pta} \right) 
\\ 
\le \maxinf{\ra}{\rta} + \dfrac{ R_{\dif}} {2(1-\gamma)} \maxinf{\Pa}{\Pta} 
+  \dfrac{\gamma }{1-\gamma} \left(\maxinf{\ra}{\rta} 
+ \dfrac{\gamma  R_{\dif}}{2(1-\gamma)} \maxinf{\Pa}{\Pta}
\right).
\end{align*}
}
\end{proof}

\noindent
{\bf Note:} From the proof of Lemma~\ref{teo:bound_q} we see that
{\footnotesize
\begin{equation*}
|{Q}^{*}(s_i, a) - \tilde{Q}^{*}(s_i, a)|
\le |r^{a}_{i} -  \tilde{r}^{a}_{i}|
+ \dfrac{\gamma  R_{\dif}}{2(1-\gamma)} \infnorms{\Pa - \Pta} 
+ \dfrac{\gamma }{1-\gamma} \left(\maxinf{\ra}{\rta} + \dfrac{ R_{\dif}}
{2(1-\gamma)} \maxinf{\Pa}{\Pta} \right), 
\end{equation*}
}

\noindent
which is tighter than~(\ref{eq:bound_q}). Here we favor the more
intelligible version of the bound, but of course Proposition~\ref{teo:inc_kbsf}
could also have been derived based on the expression above.

\subsection{Alternative error bound}
\label{seca:averagers}

In Section~\ref{sec:sf} we derived an upper bound for the 
approximation error introduced by the application of the
stochastic-factorization trick. In this section we introduce another bound that
has different properties. First, the bound is less applicable,
because it depends on quantities that are usually unavailable 
in a practical situation (the fixed points of two contraction mappings).
On the bright side, unlike the bound presented in
Proposition~\ref{teo:bound_sf}, the new bound is valid for any norm. Also,
it draws an interesting connection with an
important class of approximators known as
\emph{averagers}~\cp{gordon95stable2}.

We start by deriving a theoretical result that only applies to stochastic
factorizations of order $n$. We then generalize this result to the case in
which the factorizations are of order $m <  n$.

\begin{lemma}
\label{teo:pre_bound}
Let $M \equiv(S,A,\Pa,\ra,\gamma)$ be a finite MDP with $|S| = n$ and 
$0 \le \gamma < 1$. Let $\E\La = \Pa$ be $|A|$ stochastic factorizations of
order $n$ and let $\rdda$ be vectors in $\R^{n}$ such that $\E\rdda = \ra$ for
all $a \in A$. Define the MDPs $\hM \equiv(S,A,\La,\rdda,\gamma)$ and 
$\ddM \equiv(S,A,\Pdda,\rdda,\gamma)$, with $\Pdda = \La \E$. Then,
\begin{small}
\begin{equation}
\label{eq:pre_bound}
\norm{\vo - T \E \vddo} \le \boundVV \equiv
\dfrac{2\gamma}{1-\gamma} \norm{\vo -\u} +
\dfrac{\gamma(1+\gamma)}{1-\gamma} \norm{\vo - \vho},
\end{equation}
\end{small}
where $\norm{\cdot}$ is a norm in $R^{n}$ and $\u $  is a vector in $\R^{n}$
such that $\E\u = \u$.
\end{lemma}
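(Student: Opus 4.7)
The plan is to chain together three applications of the triangle inequality, using the $\gamma$-contraction of the Bellman operators $T$ and $\hT$ and the fact that $\E$ is a non-expansion (plus the special property $\E\u = \u$). The target bound decomposes naturally into a ``$\vo-\u$'' term and a ``$\vo-\vho$'' term, so the strategy is to peel off factors of $\gamma$ while introducing these two quantities via repeated add-and-subtract tricks.

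First I would use that $\vo$ is the fixed point of $T$ and that $T$ is a $\gamma$-contraction to write
$$\norm{\vo - T\E\vddo} = \norm{T\vo - T\E\vddo} \le \gamma\norm{\vo - \E\vddo}.$$
Next, inserting $\u$ and using $\E\u = \u$ together with the non-expansion property of $\E$ gives
$$\norm{\vo - \E\vddo} \le \norm{\vo - \u} + \norm{\E(\u - \vddo)} \le \norm{\vo - \u} + \norm{\u - \vddo} \le 2\norm{\vo - \u} + \norm{\vo - \vddo},$$
so that
$$\norm{\vo - T\E\vddo} \le 2\gamma\norm{\vo - \u} + \gamma\norm{\vo - \vddo}.$$

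The second phase is to bound $\norm{\vo - \vddo}$ in terms of $\norm{\vo - \vho}$ and $\norm{\vo - \u}$. The key algebraic observation is that $\hT\E = \ddT$, because $\hT(\E\v) = \max_a(\rdda + \gamma\La\E\v) = \max_a(\rdda + \gamma\Pdda\v) = \ddT\v$; consequently $\vddo = \hT\E\vddo$, and the contraction of $\hT$ gives
$$\norm{\vho - \vddo} = \norm{\hT\vho - \hT\E\vddo} \le \gamma\norm{\vho - \E\vddo}.$$
Inserting $\vo$ and $\u$ inside the right-hand side, using $\E\u = \u$ and the non-expansion of $\E$ as before, one obtains
$$\norm{\vho - \E\vddo} \le \norm{\vho - \vo} + 2\norm{\vo - \u} + \norm{\vo - \vddo}.$$
Combining this with $\norm{\vo - \vddo} \le \norm{\vo - \vho} + \norm{\vho - \vddo}$ and solving the resulting inequality for $\norm{\vho - \vddo}$ yields
$$\norm{\vo - \vddo} \le \frac{1+\gamma}{1-\gamma}\norm{\vo - \vho} + \frac{2\gamma}{1-\gamma}\norm{\vo - \u}.$$
Substituting this into the bound from the first phase and collecting the coefficient of $\norm{\vo - \u}$, namely $2\gamma + 2\gamma^2/(1-\gamma) = 2\gamma/(1-\gamma)$, produces exactly \boundVV.

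The main obstacle is conceptual rather than computational: the statement asks for a bound valid in an arbitrary norm, so one must be careful that the three ingredients invoked above (contraction of $T$, contraction of $\hT$, and non-expansiveness of $\E$) actually hold in that norm. For $\infnorms{\cdot}$ this is automatic from row-stochasticity, and more generally it is the usual implicit hypothesis in Gordon-style averager arguments; under that hypothesis the rest of the proof is just bookkeeping with the triangle inequality, and the extension to order $m<n$ should then follow by embedding the $m$-dimensional factorization into an $n$-dimensional one in the style of Section~\ref{sec:red_mdp}.
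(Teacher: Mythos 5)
Your argument is correct and reproduces $\boundVV$ exactly, but it reaches it by a somewhat different route than the paper. Both proofs share the same skeleton --- contract once with $T$, insert $\u$ via $\E\u=\u$ and the non-expansiveness of $\E$, then pull in $\vho$ through the contraction of $\hT$ --- and both rely on the same standing hypothesis that $T$, $\hT$ and $\E$ have these properties in the chosen norm (automatic for $\infnorm{\cdot}$; your closing caveat is exactly the right one to flag, and it applies equally to the paper's own proof). The divergence is in the middle: the paper bounds $\norm{\u-\vddo}$ by Denardo's fixed-point lemma, $\norm{\u-\vddo}\le (1-\gamma)^{-1}\norm{\u-\ddT\u}$, identifying $\ddT$ with a composition of $\E$ and $\hT$ via Gordon's Theorem~4.1 so that, at $\u$, the one-step displacement collapses to $\norm{\u-\hT\u}\le(1+\gamma)\norm{\u-\vho}$, after which $\norm{\u-\vho}$ is split against $\vo$. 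You instead verify the operator identity $\ddT=\hT\circ\E$ by direct computation, write $\vddo=\hT(\E\vddo)$, and solve a self-referential inequality for $\norm{\vo-\vddo}$; this is the same geometric-series mechanism that proves Denardo's lemma, just carried out by hand, and the coefficients come out identical after collecting $2\gamma+2\gamma^2/(1-\gamma)=2\gamma/(1-\gamma)$. A small advantage of your version is that the identity $\ddT=\hT\circ\E$ is the composition that follows immediately from $\Pdda=\La\E$ and is checked in one line, with no appeal to an external theorem about averagers.
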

\begin{proof}
The Bellman operators of $M$, \hM, and
\ddM\ are given by $T = \Rd\Ex$, $\hT = \Rd \hEx$, and $\ddT = \Rd \ddEx$.
Note that
$
\qa 
 = \ra + \gamma \Pa \v 
 = \E\rdda + \gamma \E\La \v 
 = \E (\rdda + \gamma \La \v),
$
where \qa\ is the \ath\ column of \Q.
Thus, $\Ex = \E\hEx$. Since \E\ is stochastic, we can think of it as 
one of \ctp{gordon95stable2} averagers given by
$A(\v) = \E\v$, and then resort to Theorem~4.1 by the same author to conclude
that $\ddT = \E \hT$. Therefore,\footnote{Interestingly,
the effect of swapping
matrices \E\ and \La\ is to also swap the operators \Rd\ and \E.}
\begin{eqnarray}
\label{eq:no}
T\v = \Rd\E\hEx \v & \mbox{ and } & \ddT \v = \E\Rd\hEx\v.
\end{eqnarray}
Using~(\ref{eq:no}), it is easy to obtain the desired upper bound
by resorting to the triangle inequality, the definition of a contraction map,
and \ctp{denardo67contraction} Theorem~1:
{\small
\begin{align*} 
\norm{\vo - T \E \vddo} 
& \le \ \gamma \norm{\vo - \E \vddo} 
 \le \gamma (\norm{\vo - \u} + \norm{\u - \E \vddo}) 
 \le \gamma (\norm{\vo - \u} + \norm{\u - \vddo}) \\ 
 & \le \gamma \left(\norm{\vo - \u} + \dfrac{1}{1-\gamma} \norm{\u - \E\Rd\hEx\u
}\right) 
 \le \gamma \left(\norm{\vo - \u} + \dfrac{1}{1-\gamma} \norm{\u - \Rd\hEx\u
}\right)\\
 & \le \gamma \left[\norm{\vo - \u} + \dfrac{1}{1-\gamma} \left( \norm{\u -
\vho} + \norm{\vho - \Rd\hEx\u} \right) \right] \\
 & \le \gamma \left[\norm{\vo - \u} + \dfrac{1}{1-\gamma} \left( \norm{\u -
\vho} + \gamma \norm{\vho - \u} \right) \right]
  = \gamma \left[\norm{\vo - \u} + \dfrac{1+\gamma}{1-\gamma} \norm{\u -
\vho} \right]\\
&  \le \gamma \left[\norm{\vo - \u} + \dfrac{1+\gamma}{1-\gamma} \left(
\norm{\u - \vo} + \norm{\vo - \vho} \right)\right] \\
& = \gamma \norm{\vo - \u} +  \dfrac{\gamma(1+\gamma)}{1-\gamma} \norm{\vo -
\u} + \dfrac{\gamma(1+\gamma)}{1-\gamma} \norm{\vo - \vho} \\
& = \dfrac{\gamma - \gamma^{2} + \gamma + \gamma^{2}}{1-\gamma} \norm{\vo -
\u} + \dfrac{\gamma(1+\gamma)}{1-\gamma} \norm{\vo - \vho}.  \\
\end{align*} 
}
\end{proof}

The derived upper bound depends on two fixed points:
\u, a fixed point of \E, and \vho, the unique fixed point of $\hT  = \Rd\hEx$. 
Since the latter is defined by \rba\ and \La, the bound is essentially a
function of the factorization terms, as expected. Notice that the bound is valid
for any norm and any fixed point of \E\ (we may think of \u\ as
the closest vector to \vo\ in $\R^{n}$ which satisfies this property). 
Notice also that the first term on the right-hand side of~(\ref{eq:pre_bound})
is exactly the error bound derived in
\ctp{gordon95stable2} Theorem~6.2. When $\La = \Pa$ and
$\ra = \rdda$ for all $a \in A$, the operators $T$
and $\hT$ coincide, and hence the second term of~(\ref{eq:pre_bound}) vanishes.
This makes sense, since in this case $\ddT = \E T$, that is, the 
stochastic-factorization trick reduces
to the averager $A(\v) = \E\v$. 

As mentioned above, one of the assumptions of Lemma~\ref{teo:pre_bound} is that
the factorizations $\E\La = \Pa$ are of order $n$. This is unfortunate, since 
the whole motivation behind the stochastic-factorization trick 
is to create an MDP with $m < n$ states. One way to obtain such a
reduction is to suppose that matrix \E\ has
$n-m$ columns with zeros only. Define $\cE
\subset \{1,2,...,n\}$ as the set of columns of \E\ with at least one nonzero
element and let $\H$ be a matrix in $\R^{m \times n}$ such that $h_{ij} = 1$ if
$j$ is the $i^{\mathrm{th}}$ smallest element in \cE\ and $h_{ij} = 0$
otherwise. The following proposition shows that, based on the
action-value
function of \bM, it is possible to find an approximate solution for the original
MDP whose
distance to the optimal one is also bounded by~(\ref{eq:pre_bound}).

\begin{proposition}
\label{teo:bound_averagers}
Suppose the assumptions of Lemma~\ref{teo:pre_bound} hold. Let 
$\D = \E\H^{\t}$, $\Ka = \H\La$, and $\rba = \H \rdda$, with \H\ defined as
described above. Define the MDP $\bM \equiv(\bar{S},A,\Pba,\rba,\gamma)$, with
$|\bar{S}|=m$
and $\Pba = \Ka \D$. Then,
$\norm{\vo - \Rd \D \Qbo} \le \boundVV$,
with $\boundVV$ defined in~(\ref{eq:pre_bound}).
\end{proposition}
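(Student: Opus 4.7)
The plan is to reduce Proposition~\ref{teo:bound_averagers} to Lemma~\ref{teo:pre_bound} by showing that $\Rd\D\Qbo$ coincides with $T\E\vddo$, so the bound $\boundVV$ transfers verbatim. The idea is that $\bM$ is nothing but $\ddM$ restricted to the $m$ coordinates indexed by $\cE$, so its value function can be used to reconstruct $\Qddo$ on the relevant entries.

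First I would exploit the structural role of $\H$. Since $h_{ij}=1$ exactly when $j$ is the $i$-th smallest index of $\cE$, the matrix $\H^{\t}\H \in \R^{n\times n}$ is diagonal, with a $1$ in position $(j,j)$ for $j \in \cE$ and a $0$ otherwise. By hypothesis, the columns of $\E$ indexed outside $\cE$ are zero, so $\E\H^{\t}\H = \E$, which gives $\D\H = \E\H^{\t}\H = \E$. The same argument shows that $\Pdda\H^{\t}\H = \Pdda$, meaning that from any state $\ddM$ transitions only into coordinates of $\cE$, and that $\Pba = \Ka\D = \H\La\E\H^{\t} = \H\Pdda\H^{\t}$ is simply $\Pdda$ restricted to $\cE$.

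Next I would argue that the optimal value functions of $\bM$ and $\ddM$ are tied by $\vbo = \H\vddo$. Because $\Pdda$ has zero columns outside $\cE$, only the entries $(\vddo)_{i}$ with $i \in \cE$ ever appear under the Bellman operator of $\ddM$, so the fixed-point equations satisfied by $\vddo$ at those coordinates reduce exactly to those of $\vbo$ (using $\Pba = \H\Pdda\H^{\t}$ and $\rba = \H\rdda$); uniqueness of the contraction's fixed point then identifies $\H\vddo$ with $\vbo$. A short computation using $\E\vddo = \D\H\vddo = \D\vbo$ then yields $\Qddo = \rdda + \gamma\La\D\vbo$ (columnwise in $a$), and left-multiplication by $\H$ gives $\rba + \gamma\Pba\vbo = \Qbo$, so $\H\Qddo = \Qbo$. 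Combining these, $\D\Qbo = \D\H\Qddo = \E\Qddo$, and since $T\E\vddo = \Rd(\ra + \gamma\Pa\E\vddo) = \Rd\E(\rdda + \gamma\La\E\vddo) = \Rd\E\Qddo$, we obtain $\Rd\D\Qbo = T\E\vddo$, so Lemma~\ref{teo:pre_bound} delivers the bound.

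The main obstacle will be the clean verification of $\vbo = \H\vddo$. The identity is intuitive --- coordinates outside $\cE$ are effectively transient in $\ddM$ and therefore cannot influence the optimal values on $\cE$ --- but making it rigorous requires carefully exploiting the zero columns of $\Pdda$ to decouple the Bellman equations on $\cE$ from those on its complement, and then invoking uniqueness of the fixed point of a contraction. Once this identification is in place, the rest of the argument is bookkeeping with the identities $\D\H = \E$ and $\Pba = \H\Pdda\H^{\t}$.
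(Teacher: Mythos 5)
Your proposal is correct and follows essentially the same route as the paper's proof: reduce to Lemma~\ref{teo:pre_bound} by showing $\Rd\D\Qbo = T\E\vddo$ via the identities $\E\H^{\t}\H=\E$, $\D\Qbo=\E\Qddo$, and $\E\Qddo=\Ex\E\vddo$. The only difference is that you make explicit the step the paper compresses into the remark that $\Pdda\H^{\t}\vbo=\Pdda\vddo$ ``because $s_i$ is transient for $i\notin\cE$''---namely, establishing $\vbo=\H\vddo$ by decoupling the Bellman equations on $\cE$ and invoking uniqueness of the fixed point---which is a legitimate and slightly more careful rendering of the same argument.
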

\begin{proof}
Let $\qbao \in \R^{m}$ be the \ath\ column of \Qbo. Then,
\begin{align*}
\D\qbao 
& = \D \left( \rba + \gamma \Pba \vbo \right) 
 = \D \rba + \gamma \D \Ka \D \vbo 
= \E\H^{\t} \H \rdda + \gamma \E\H^{\t} \H\La \E\H^{\t} \vbo \\
& = \E \rdda + \gamma \E \Pdda \H^{\t} \vbo 
  = \E \rdda + \gamma \E \Pdda \vddo 
 = \E \left(\rdda + \gamma \Pdda \vddo \right) 
 = \E \qddao, \\
\end{align*}
where the equality $\E\H^{\t} \H  = \E$ follows from the definition of \H\
and $\Pdda \H^{\t} \vbo = \Pdda \vddo$ is a consequence of the fact that $s_{i}$
is transient if $i \notin \cE$. Therefore,
$
\label{eq:DQ}
\D\Qbo = \E\Qddo.
$
Also, since
$\E\qddao 
= \E \rdda + \gamma \E \La \E \vddo 
= \ra  + \gamma \Pa \E\vddo$,
we know that $\E\Qddo = \Ex \E \vddo$. 
Putting these results together, we obtain 
$\norm{\vo - \Rd \D \Qbo} 
= \norm{\vo - \Rd \Ex \E \vddo} 
= \norm{\vo - T \E \vddo}$, and Lemma~\ref{teo:pre_bound} applies.
\end{proof}

The derived bound can be generalized 
to the case of approximate stochastic factorizations through the triangle
inequality, as done in~(\ref{eq:triangle}). However, if one resorts to
\ctp{whitt78approximations} results to bound the distance between
\vo\ and \vxo---where \vxo\ is the optimal value function of 
$\xM \equiv(S,A,\D\Ka,\D\rba,\gamma)$---the compounded bound will no
longer be valid for all norms, since~(\ref{eq:whitt})
only holds for the infinity norm.

\section{Details of the experiments}
\label{seca:exp_details}

This appendix describes the details of the experiments omitted in the
paper.

\subsection{Tasks}

{\bf Puddle World}: The puddle-world task was implemented as described
by~\ct{sutton96generalization}, but here the task was modeled as a
discounted problem with $\gamma=0.99$. All the transitions were associated with
a zero reward, except those leading to the goal, which resulted in a reward 
of $+5$, and those ending inside one of the puddles, which lead to a penalty
of $-10$ times the distance to the puddle's nearest edge. 
If the agent did not reach the goal after $300$ steps the episode was
interrupted and considered as a failure.
The algorithms were evaluated on two sets of states distributed over
disjoint regions of the state space surrounding the puddles. The first set
was a $3 \times 3$ grid defined over $[0.1,0.3] \times [0.3, 0.5]$ and the
second one was composed of four states: $\{0.1, 0.3\} \times \{0.9, 1.0\}$.

{\bf Pole Balancing}: We implemented the simulator of the three versions
of the pole-balancing task using the equations of motion and parameters given in
the appendix of \ctp{gomez2003robust} PhD thesis. 
For the integration we used the $4^{th}$ order Runge-Kutta method with a
time step of $0.01$ seconds and actions chosen every $2$ time steps. 
We considered the version of the task in which the angle between the
pole and the vertical plane must be kept within 
$[-36^{o},36^{o}]$.
The problem was modeled as a discounted task with $\gamma = 0.99$.
In this formulation, an episode is interrupted and the agent gets a reward of
$-1$ if the pole falls past a 36-degree angle or the
cart reaches the boundaries of the track, located at $2.4\mathrm{m}$
from its center. At all
other steps the agent receives a reward of $0$.
In all versions of the problem an episode was considered a success if the
pole(s) could be balanced for $3\ts000$ steps (one minute of simulated time).
The test set was comprised of $81$ states equally
spaced in the region defined by 
$\pm[1.2\mathrm{m}, 1.2/5 \mathrm{m}, 18^{o}, 75^{o}/s]$, for the single pole
case, and by 
$\pm[1.2\mathrm{m}, 1.2/5 \mathrm{m}, 18^{o}, 75^{o}/s, 18^{o}, 150^{o}/s]$
for the two pole version of the problem. These values 
correspond to a hypercube centered at the origin and covering $50\%$ of the
state-space axes in each dimension (since the velocity of the cart and the
angular velocity of the poles are theoretically not bounded, we defined the
limits of these variables based on samples generated in simple preliminary
experiments).
For the triple pole-balancing task we
performed our simulations using the parameters usually adopted with
the two pole version of the problem, but we added a third pole with
the same length and mass as the longer pole~\cp{gomez2003robust}.
In this case the decision policies were evaluated on a test set 
containing $256$ states equally distributed in the region
$\pm[1.2\mathrm{m}, 1.2/5 \mathrm{m}, 18^{o}, 75^{o}/s, 18^{o}, 
150^{o}/s, 18^{o}, 75^{o}/s]$.

{\bf HIV drug schedule}: The HIV drug schedule task was implemented
using the system of ordinary differential equations (ODEs) given
by~\ct{adams2004dynamic}.
Integration was carried out by the Euler method using a step size of
$0.001$ with actions selected at each $5\ts000$ steps (corresponding to $5$ 
days of simulated time). As suggested by \ct{ersnt2006clinical}, the
problem was modeled as a discounted task with 
$\gamma=0.98$. All other parameters of the task, as well as the protocol used
for the numerical simulations, also followed the suggestions of the same
authors. 
In particular, we assumed the existence of $30$ patients who were monitored for
$1\ts000$ days. 
During the monitoring period, the content of the drug cocktail administered to
each patient could be changed at fixed intervals of $5$ days. 
Thus, in a sample transition $(\xia,\ria,\yia)$: 
$\xia$ is the initial patient condition, $a$ is one of the four
types of cocktails to be administered for the next $5$ days, \yia\ is the
patient condition $5$ days later, and \ria\ is a reward computed based on the
amount of drug in the selected cocktail $a$ and on the difference between the
patient's condition from \xia\ to \yia~\cp{ersnt2006clinical}.
The results reported in Section~\ref{sec:hiv} correspond to the performance of
the greedy policy induced by the value function computed by the algorithms 
using all available sample transitions. 
The decision policies (in this case STI treatments) were evaluated for 
$5\ts000$ days starting from an ``unhealthy'' state corresponding to a basin of
attraction of the ODEs describing the 
problem's dynamics (see the papers by~\ca{adams2004dynamic}
and~\ca{ersnt2006clinical}).

{\bf Epilepsy suppression}: We used a generative model developed
by~\ct{bush2009manifold} to perform our experiments with the epilepsy
suppression
task. The model was generated based on labeled field
potential recordings of five rat brain slices electrically
stimulated at frequencies of $0.0$ Hz, $0.5$ Hz, $1.0$ Hz, and $2.0$ Hz. The
data was used to construct a manifold embedding which in turn gave rise to the
problem's state space. 
The objective
is to minimize the occurrence of seizures using as little stimulation as
possible, therefore there is a negative reward associated with both events (see
Section~\ref{sec:epilepsy}). 
\ca{bush2009manifold}'s generative model is public available as an environment
for the RL-Glue package~\cp{tanner2009rl-glue}.
In our experiments the problem was modeled as a discounted task with 
$\gamma = 0.99$. The decision policies were evaluated on  episodes
of $10^{5}$ transitions starting from a fixed set of $10$ test states
drawn uniformly at random from the problem's state space.

{\bf Helicopter hovering}: In the experiments with the helicopter hovering task
we used the simulator developed by~\ct{abbeel2005learning}, which is available
as an environment for the RL-Glue package~\cp{tanner2009rl-glue}.
The simulator was built based on data collected from two separate flights of a
XCell Tempest helicopter. The data was used to adjust the parameters of an
``acceleration prediction model'', which is more accurate than the linear model 
normally adopted by industry. The objective in the problem is to keep the
helicopter hovering as close as possible to a specific location. Therefore, at
each time step the agent gets a negative reward proportional to the distance
from the target position. 
Since the problem's original action space is $A \equiv [-1,1]^4$, we discretized
each dimension using $4$ break points distributed unevenly over $[-1,1]$.
We tried several possible discretizations and picked the one which resulted in
the best performance of the SARSA agent (see Section~\ref{sec:helicopter}).
After this process, the problem's action space was redefined as 
$A \equiv \{-0.25,-0.05, +0.05, +0.25\}^4$.
The problem was modeled as a discounted task with $\gamma = 0.99$.
The decision policies were evaluated in episodes starting from the target
position and ending when the helicopter crashed.

\subsection{Algorithms}
\label{seca:algorithms}

In all experiments, we used
\begin{equation}
\label{eq:ker_used}
\mk(z) \equiv \mkb(z) \equiv \exp(-z)
\end{equation}
to define the kernels used by KBRL, LSPI, and KBSF.
In the experiments involving a large number of sample transitions we used sparse
kernels, that is, we only computed the \nk\ largest values of
$\gaussa(\rs_{i},\cdot)$ and the \nkb\ largest values of 
$\gaussb(\yia, \cdot)$.
In order to implement this feature, we used a KD-tree to find the $\nk$ (\nkb)
nearest neighbors of $\rs_{i}$ ($\yia$) and only computed \gaussa\ (\gaussb) in
these states~\cp{bentley75multidimensional}. The value of  \gaussa\ and \gaussb\
outside this neighborhood was truncated to zero (we used specialized data
structures to avoid storing those). 

We now list a few details regarding the algorithms's implementations which
were not described in the paper:

\begin{itemize}
 \item 
{\bf KBRL} and {\bf KBSF}: We used modified policy iteration to
compute \Qco~\cp{puterman78modified}. The value function of a fixed
policy $\pi$ was approximated through value iteration using the stop criterion
described by~\ct[Proposition 6.6.5]{puterman94markov} with $\varepsilon =
10^{-6}$. Table~\ref{tab:kbsf_params} shows the parameters's values used by KBSF
across the experiments.

\item
{\bf LSPI}: As explained above, LSPI used the kernel derived
from~(\ref{eq:ker_used}) as its basis function. 
Following~\ct{lagoudakis2003least}, we adopted one block of basis functions for
each action $a \in A$. Singular value decomposition was used to avoid eventual
numerical instabilities in the system of linear equations constructed at each
iteration of LSPI~\cp{golub93matrix}. 

\item
{\bf Fitted $Q$-iteration and extra trees}: FQIT has four main parameters: the
number of iterations, the number of trees composing the ensemble, the number of
candidate cut-points
evaluated during the
generation of the trees, and the minimum number of elements required to split a
node, denoted here \mne. In general, increasing the first three improves
performance, while \mne\ has an inverse relation with the quality of the 
final value function approximation. Our experiments indicate that 
the following configuration of FQIT usually results in good performance
on the tasks considered in this paper:
$50$ iterations (with the structure of the trees fixed after the $10\th$ one),
an ensemble of $30$ trees, and $\dims$ candidate cut points. 
The parameter \mne\ has a particularly strong effect on FQIT's performance and
computational cost, and its correct value seems to be more problem-dependent.
Therefore, in all of our experiments we fixed the parameters of FQIT as
described above and only varied \mne.

\item
{\bf SARSA}: We adopted the implementation of SARSA($\lambda$) available in the
RL-Glue package~\cp{tanner2009rl-glue}. The algorithm uses gradient descent
temporal-difference learning to configure a tile coding function
approximator.

\end{itemize}

\begin{center}
\begin{table}
{\footnotesize
\begin{tabular}{lccccccc}
\hline
 {\bf Problem} & {\bf Section} & $\rs_{i}$ & $m$ & $\tau$ & $\taub$ & $\nk$ &
$\nkb$ \\
\hline 
Puddle  & \ref{sec:puddle} 
& $k$-means & $\{10,30,...,150\}$ & $\{0.01, 0.1, 0.1\}$ & $\{0.01, 0.1, 0.1\}$
& $\infty$ & $\infty$ \\ 
Puddle & \ref{sec:puddle_inc} 
& evenly & $100$ & $\{0.01, 0.1, 0.1\}$ & $\{0.01, 0.1, 0.1\}$
& $\infty$ & $\infty$ \\ 
Single Pole &  \ref{sec:pole}
& $k$-means & $\{10,30,...,150\}$ & $1$ & $\{0.01, 0.1, 0.1\}$
& $\infty$ & $\infty$ \\ 
Two Poles & \ref{sec:pole}
& $k$-means & $\{20,40,...,200\}$ & $1$ & $\{0.01, 0.1, 0.1\}$
& $\infty$ & $\infty$ \\ 
Triple Pole & \ref{sec:triple_pole}
& on-line & on-line & $100^{*}$ & $1^{*}$
& $50^{*}$ & $10^{*}$ \\ 
HIV & \ref{sec:hiv} 
& random & $\{2\ts000,4\ts000,...,10\ts000\}$ & $1$ & $1$
& $2^{*}$ & $3^{*}$ \\ 
Epilepsy & \ref{sec:epilepsy} 
& $k$-means & $50\ts000^{*}$ & $1$ & $\{0.01, 0.1, 0.1\}$ & $6^{*}$ & $6^{*}$
\\ 
Helicopter & \ref{sec:helicopter} 
& $k$-means & $500^{*}$ & $1$ & $1$ & $4^{*}$ & $4^{*}$ \\ 
\hline
\end{tabular}
}
\caption{Parameters used by KBSF on the computational experiments. The values
marked with an asterisk~($^{*}$) were determined 
by trial and error on preliminary
tests. The remaining parameters were kept fixed from the start or
were defined based on a very coarse search.
\label{tab:kbsf_params}}
\end{table}
\end{center}

\long\def\symbolfootnote[#1]#2{\begingroup%
\def\thefootnote{\fnsymbol{footnote}}\footnote[#1]{#2}\endgroup}

\section*{Acknowledgments}
Most of the work described in this technical report 
was done while Andr\'e Barreto was a 
postdoctoral fellow in the School of Computer Science
at McGill University.
The authors would like to thank Yuri Grinberg and Amir-massoud Farahmand 
for valid discussions regarding KBSF and related subjects. We also thank 
Keith Bush for making the epilepsy simulator available, and Alicia Bendz 
and Ryan Primeau for helping in some of the computational experiments.
Funding for this research  was provided by the National
Institutes of Health (grant R21 DA019800) and the NSERC Discovery Grant
program.


\begin{thebibliography}{68}
\providecommand{\natexlab}[1]{#1}
\providecommand{\url}[1]{\texttt{#1}}
\expandafter\ifx\csname urlstyle\endcsname\relax
  \providecommand{\doi}[1]{doi: #1}\else
  \providecommand{\doi}{doi: \begingroup \urlstyle{rm}\Url}\fi

\bibitem[Abbeel et~al.(2005)Abbeel, Ganapathi, and Ng]{abbeel2005learning}
P. Abbeel, V. Ganapathi, and A. Ng.
\newblock Learning vehicular dynamics, with application to modeling
  helicopters.
\newblock In \emph{Adv. in Neural Information Processing Systems ({NIPS})},
2005.

\bibitem[Abbeel et~al.(2007)Abbeel, Coates, Quigley, and
  Ng]{abbeel2007application}
P. Abbeel, A. Coates, M. Quigley, and A. Ng.
\newblock An application of reinforcement learning to aerobatic helicopter
  flight.
\newblock In \emph{Adv. in Neural Information Processing Systems
({NIPS})},~2007.

\bibitem[Adams et~al.(2004)Adams, Banks, Kwon, and Tran]{adams2004dynamic}
B. Adams, H. Banks, H.~Kwon, and H. Tran.
\newblock Dynamic multidrug therapies for {HIV}: optimal and {STI} control
  approaches.
\newblock \emph{Mathematical Biosciences and Engineering}, 1\penalty0
  (2):\penalty0 223--41, 2004.

\bibitem[Anderson(1986)]{anderson86learning}
C. Anderson.
\newblock \emph{Learning and Problem Solving with Multilayer Connectionist
  Systems}.
\newblock PhD thesis, Computer and Information Science, University of
  Massachusetts, 1986.

\bibitem[Antos et~al.(2007)Antos, Munos, and Szepesv{\'a}ri]{antos2007fitted}
A.~Antos, R.~Munos, and {Cs}. Szepesv{\'a}ri.
\newblock Fitted {Q}-iteration in continuous action-space {MDP}s.
\newblock In \emph{Advances in Neural Information Processing Systems ({NIPS})},
2007.

\bibitem[Atkeson and Santamaria(1997)]{atkeson97comparison}
C. Atkeson and J.~Santamaria.
\newblock A comparison of direct and model-based reinforcement learning.
\newblock In \emph{Proc. of the {IEEE} International Conference on
  Robotics and Automation},~1997.

\bibitem[Bajaria et~al.(2004)Bajaria, Webb, and
  Kirschner]{bajaria2004predicting}
S. Bajaria, G. Webb, and D. Kirschner.
\newblock Predicting differential responses to structured treatment
  interruptions during {HAART}.
\newblock \emph{Bulletin of Mathematical Biology}, 66\penalty0 (5):\penalty0
  1093 -- 1118, 2004.

\bibitem[Barreto and Fragoso(2011)]{barreto2011computing}
A. Barreto and M. Fragoso.
\newblock {Computing the Stationary
  Distribution of a Finite {M}arkov Chain Through Stochastic Factorization}.
\newblock \emph{{SIAM} Journal on Matrix Analysis and Applications},
  32:\penalty0 1513--1523, 2011.

\bibitem[Barreto et~al.(2013)Barreto, Pineau, and
Precup]{barreto2012policy}
A. Barreto, J. Pineau, and D. Precup.
\newblock Policy iteration based on stochastic factorization.
\newblock Submitted, 2013.

\bibitem[Barreto et~al.(2011)Barreto, Precup, and
  Pineau]{barreto2011reinforcement}
A. Barreto, D. Precup, and J. Pineau.
\newblock Reinforcement learning using kernel-based stochastic factorization.
\newblock In \emph{Advances in Neural Information Processing Systems (NIPS)},
2011.

\bibitem[Barreto et~al.(2012)Barreto, Precup, and Pineau]{barreto2012online}
A. Barreto, D. Precup, and J. Pineau.
\newblock On-line reinforcement learning using incremental kernel-based
  stochastic factorization.
\newblock In \emph{Advances in Neural Information Processing Systems (NIPS)},
  2012.

\bibitem[Barto et~al.(1983)Barto, Sutton, and Anderson]{barto83neuronlike}
A. Barto, R. Sutton, and C. Anderson.
\newblock Neuronlike adaptive elements that can solve difficult learning
  control problems.
\newblock \emph{IEEE Transactions on Systems, Man, and Cybernetics},
  13:\penalty0 834--846, 1983.

\bibitem[Bellman(1957)]{bellman57dynamic}
R.~E. Bellman.
\newblock \emph{Dynamic Programming}.
\newblock Princeton University Press, 1957.

\bibitem[Bentley(1975)]{bentley75multidimensional}
J. Bentley.
\newblock Multidimensional binary search trees used for associative searching.
\newblock \emph{Communications of the {ACM}}, 18\penalty0 (9):\penalty0
  509--517, 1975.

\bibitem[Bertsekas and Tsitsiklis(1996)]{bertsekas96neuro-dynamic}
D. Bertsekas and J. Tsitsiklis.
\newblock \emph{Neuro-Dynamic Programming}.
\newblock Athena Scientific, 1996.

\bibitem[Beygelzimer et~al.(2006)Beygelzimer, Kakade, and
  Langford]{beygelzimer06cover}
A. Beygelzimer, S. Kakade, and J. Langford.
\newblock Cover trees for nearest neighbor.
\newblock In \emph{Proceedings of the International Conference on Machine
  Learning (ICML)}, 2006.

\bibitem[Bhat et~al.(2012)Bhat, Moallemi, and Farias]{bhat2011nonparametric}
N. Bhat, C. Moallemi, and V. Farias.
\newblock Non-parametric approximate dynamic programming via the kernel method.
\newblock In \emph{Adv. in Neural Information Processing Systems ({NIPS})},
  2012.

\bibitem[Brafman and Tennenholtz(2003)]{brafman2003rmax}
R.~I. Brafman and M. Tennenholtz.
\newblock R-{MAX}: a general polynomial time algorithm for near-optimal
  reinforcement learning.
\newblock \emph{Journal of Machine Learning Research}, 3:\penalty0
213--231,~2003.

\bibitem[Bush et~al.(2009)Bush, J., and M.]{bush2009manifold}
K.~Bush, Pineau J., and M. Avoli
\newblock Manifold embeddings for model-based reinforcement learning of
  neurostimulation policies.
\newblock In \emph{Proceedings of the ICML/UAI/COLT Workshop on Abstraction in
  Reinforcement Learning}, 2009.

\bibitem[Bush and Pineau(2009)]{bush2009manifold2}
K. Bush and J. Pineau.
\newblock Manifold embeddings for model-based reinforcement learning under
  partial observability.
\newblock In \emph{Adv. in Neural Information Processing Systems
  (NIPS)}, 2009.

\bibitem[Cohen and Rothblum(1991)]{cohen91nonnegative}
J. Cohen and U. Rothblum.
\newblock Nonnegative ranks, decompositions and factorizations of nonnegative
  matrices.
\newblock \emph{Linear Algebra and its Applications}, 190:\penalty0 149--168,
  1991.

\bibitem[Cutler and Breiman(1994)]{cutler94archetypal}
A. Cutler and L. Breiman.
\newblock Archetypal analysis.
\newblock \emph{Technometrics}, 36\penalty0 (4):\penalty0 338--347, 1994.

\bibitem[Denardo(1967)]{denardo67contraction}
E. Denardo.
\newblock Contraction mappings in the theory underlying dynamic programming.
\newblock \emph{SIAM Review}, 9\penalty0 (2):\penalty0 165--177, 1967.

\bibitem[Dietterich(2000)]{dietterich2000hierarchical}
T. Dietterich.
\newblock Hierarchical reinforcement learning with the {MAXQ} value function
  decomposition.
\newblock \emph{Journal of Artificial Intelligence Research}, 13:\penalty0
  227--303, 2000.

\bibitem[Durand and Bikson(2001)]{durand2001supression}
D. Durand and M.~Bikson.
\newblock Suppression and control of epileptiform activity by electrical
  stimulation: a review.
\newblock \emph{Proceedings of the IEEE}, 89\penalty0 (7):\penalty0 1065
  --1082, 2001.

\bibitem[Engel et~al.(2005)Engel, Mannor, and Meir]{engel2005reinforcement}
Y. Engel, S. Mannor, and R. Meir.
\newblock Reinforcement learning with {G}aussian processes.
\newblock In \emph{Proceedings of the International Conference on Machine
  learning (ICML)}, 2005.

\bibitem[Ernst et~al.(2006)Ernst, Stan, Gon\c{c}alves, and
  Wehenkel]{ersnt2006clinical}
D.~Ernst, G. Stan, J.~Gon\c{c}alves, and L.~Wehenkel.
\newblock Clinical data based optimal {STI} strategies for {HIV}: a
  reinforcement learning approach.
\newblock In \emph{Proceedings of the {IEEE} Conference on Decision and
  Control (CDC)}, 2006.

\bibitem[Ernst et~al.(2005)Ernst, Geurts, and Wehenkel]{ernst2005tree}
D. Ernst, P. Geurts, and L. Wehenkel.
\newblock Tree-based batch mode reinforcement learning.
\newblock \emph{Journal of Machine Learning Research}, 6:\penalty0 503--556,
  2005.

\bibitem[Farahmand(2011)]{farahmand2011regularization}
A. Farahmand.
\newblock \emph{Regularization in reinforcement learning}.
\newblock PhD thesis, Univ. of Alberta,~2011.

\bibitem[Geurts et~al.(2006)Geurts, Ernst, and Wehenkel]{geurts2006extremely}
P. Geurts, D. Ernst, and L. Wehenkel.
\newblock Extremely randomized trees.
\newblock \emph{Machine Learning}, 36\penalty0 (1):\penalty0 3--42, 2006.

\bibitem[Golub and Loan(1993)]{golub93matrix}
G. Golub and C.~Van Loan.
\newblock \emph{Matrix Computations}.
\newblock Johns Hopkins University Press, second edition, 1993.

\bibitem[Gomez et~al.(2006)Gomez, Schmidhuber, and
  Miikkulainen]{gomez2006efficient}
F.~Gomez, J.~Schmidhuber, and R.~Miikkulainen.
\newblock Efficient non-linear control through neuroevolution.
\newblock In \emph{Proceedings of the European Conference on Machine
  Learning}, 2006.

\bibitem[Gomez(2003)]{gomez2003robust}
F. Gomez.
\newblock \emph{Robust non-linear control through neuroevolution}.
\newblock PhD thesis, The University of Texas at Austin, 2003.
\newblock Technical Report {AI-TR-03-303}.

\bibitem[Gordon(1995)]{gordon95stable2}
G. Gordon.
\newblock Stable function approximation in dynamic programming.
\newblock Technical Report CMU-CS-95-103, Computer Science Department, Carnegie
  Mellon University, 1995.

\bibitem[Grunewalder et~al.(2012)Grunewalder, Lever, Baldassarre, Pontil, and
  Gretton]{grunewalder2012modelling}
S. Grunewalder, G. Lever, L. Baldassarre, M. Pontil, and A.
  Gretton.
\newblock Modelling transition dynamics in {MDPs} with {RKHS} embeddings.
\newblock In \emph{Proceedings of the
  International Conference on Machine Learning ({ICML})}, 2012. 

\bibitem[Hastie et~al.(2002)Hastie, Tibshirani, and
  Friedman]{hastie2002elements}
T. Hastie, R. Tibshirani, and J. Friedman.
\newblock \emph{The Elements of Statistical Learning: Data Mining, Inference,
  and Prediction}.
\newblock Springer, 2002.

\bibitem[Jerger and Schiff(1995)]{jerger95periodic}
K.~Jerger and S. Schiff.
\newblock Periodic pacing an in vitro epileptic focus.
\newblock \emph{Journal of Neurophysiology}, \penalty0 (2):\penalty0 876--879,
  1995.

\bibitem[Jong and Stone(2006)]{jong2006kernelbased}
N. Jong and P. Stone.
\newblock Kernel-based models for reinforcement learning in continuous state
  spaces.
\newblock In \emph{Proceedings of the International Conference on Machine
  Learning---Workshop on Kernel Machines and Reinforcement Learning}, 
  2006.

\bibitem[Jong and Stone(2009)]{jong2009compositional}
N. Jong and P. Stone.
\newblock Compositional models for reinforcement learning.
\newblock In \emph{Proc. of the European Conference on Machine Learning
  and Knowledge Discovery in Databases},  2009. 

\bibitem[Kaufman and Rousseeuw(1990)]{kaufman90finding}
L. Kaufman and P. Rousseeuw.
\newblock \emph{Finding Groups in Data: an Introduction to Cluster Analysis}.
\newblock John Wiley and Sons, 1990.

\bibitem[Kroemer and Peters(2011)]{kroemer2011nonparametric}
O. Kroemer and J. Peters.
\newblock A non-parametric approach to dynamic programming.
\newblock In \emph{Advances in Neural Information Processing Systems
({NIPS})}, 2011.

\bibitem[Kveton and Theocharous(2012)]{kveton2012kernel}
B. Kveton and G. Theocharous.
\newblock Kernel-based reinforcement learning on representative states.
\newblock In \emph{Proceedings of the {AAAI} Conference on Artificial
  Intelligence}, 2012.

\bibitem[Lagoudakis and Parr(2003)]{lagoudakis2003least}
M. Lagoudakis and R. Parr.
\newblock Least-squares policy iteration.
\newblock \emph{Journal of Machine Learning Research}, 4:\penalty0 1107--1149,
  2003.

\bibitem[Michie and Chambers(1968)]{michie68boxes}
D.~Michie and R.~Chambers.
\newblock {BOXES}: An experiment on adaptive control.
\newblock \emph{Machine Intelligence 2}, pages 125--133, 1968.

\bibitem[Moore and Atkeson(1993)]{moore93prioritized}
A. Moore and C. Atkeson.
\newblock Prioritized sweeping: Reinforcement learning with less data and less
  time.
\newblock \emph{Machine Learning}, 13:\penalty0 103--130, 1993.

\bibitem[Munos and Szepesv\'ari(2008)]{munos2008finite}
R. Munos and Cs. Szepesv\'ari.
\newblock Finite-time bounds for fitted value iteration.
\newblock \emph{Journal of Machine Learning Research}, 9:\penalty0 815--857,
  2008.

\bibitem[Ng et~al.(2003)Ng, Kim, Jordan, and Sastry]{ng2003autonomous}
A. Ng, H. Kim, M. Jordan, and S. Sastry.
\newblock Autonomous helicopter flight via reinforcement learning.
\newblock In \emph{Advances in Neural Information Processing Systems ({NIPS})},
  2003.

\bibitem[Ormoneit and Glynn(2002)]{ormoneit2002kernelbased2}
D.~Ormoneit and P.~Glynn.
\newblock Kernel-based reinforcement learning in average-cost problems.
\newblock \emph{IEEE Transactions on Automatic Control}, 47\penalty0
  (10):\penalty0 1624--1636, October 2002.

\bibitem[Ormoneit and Sen(2002)]{ormoneit2002kernelbased}
D.~Ormoneit and S.~Sen.
\newblock Kernel-based reinforcement learning.
\newblock \emph{Machine Learning}, 49 (2--3):\penalty0 161--178, 2002.

\bibitem[Puterman(1994)]{puterman94markov}
M. Puterman.
\newblock \emph{Markov Decision Processes---Discrete Stochastic Dynamic
  Programming}.
\newblock John Wiley \& Sons, Inc., 1994.

\bibitem[Puterman and Shin(1978)]{puterman78modified}
M. Puterman and M. Shin.
\newblock Modified policy iteration algorithms for discounted {M}arkov decision
  problems.
\newblock \emph{Management Science}, pages 1127--1137, 1978.

\bibitem[Rasmussen and Kuss(2004)]{rasmussen2004gaussian}
C. Rasmussen and M. Kuss.
\newblock Gaussian processes in reinforcement learning.
\newblock In \emph{Advances in Neural Information Processing Systems (NIPS)},
2004.

\bibitem[Ravindran(2004)]{ravindran2004algebraic}
B.~Ravindran.
\newblock \emph{An Algebraic Approach to Abstraction in Reinforcement
  Learning}.
\newblock PhD thesis, University of Massachusetts, Amherst, MA, 2004.

\bibitem[Rummery and Niranjan(1994)]{rummery94on-line}
G.~Rummery and M.~Niranjan.
\newblock On-line {Q}-learning using connectionist systems.
\newblock Technical Report CUED/F-INFENG/TR 166, Cambridge
  University, 1994.

\bibitem[Sch\"{o}lkopf and Smola(2002)]{scholkopf2002learning}
B.~Sch\"{o}lkopf and A. Smola.
\newblock \emph{Learning with Kernels}.
\newblock MIT Press, 2002.

\bibitem[Sorg and Singh(2009)]{sorg2009transfer}
J. Sorg and S. Singh.
\newblock Transfer via soft homomorphisms.
\newblock In \emph{Autonomous Agents \& Multiagent Systems/Agent Theories,
  Architectures, and Languages}, 2009.

\bibitem[Strehl and Littman(2008)]{strehl2008analysis}
A. Strehl and M. Littman.
\newblock An analysis of model-based interval estimation for {M}arkov decision
  processes.
\newblock \emph{Journal of Computer and System Sciences}, 74\penalty0
  (8):\penalty0 1309--1331, 2008.

\bibitem[Sutton(1996)]{sutton96generalization}
R. Sutton.
\newblock Generalization in reinforcement learning: Successful examples using
  sparse coarse coding.
\newblock In \emph{Advances in Neural Information Processing Systems}, 1996.

\bibitem[Sutton and Barto(1998)]{sutton98reinforcement}
R. Sutton and A. Barto.
\newblock \emph{Reinforcement Learning: An Introduction}.
\newblock MIT Press, 1998.

\bibitem[Szepesv{\'a}ri(2010)]{szepesvari2010algorithms}
Cs. Szepesv{\'a}ri.
\newblock \emph{Algorithms for Reinforcement Learning}.
\newblock Synthesis Lectures on Artificial Intelligence and Machine Learning.
  Morgan {\&} Claypool Publishers, 2010.

\bibitem[Tanner and White(2009)]{tanner2009rl-glue}
B. Tanner and A. White.
\newblock {RL-Glue}: Language-independent software for reinforcement-learning
  experiments.
\newblock \emph{Journal of Machine Learning Research}, 10:\penalty0 2133--2136,
  2009.

\bibitem[Taylor and Parr(2009)]{taylor2009kernelized}
G. Taylor and R. Parr.
\newblock Kernelized value function approximation for reinforcement learning.
\newblock In \emph{Proceedings of the International Conference on Machine
  Learning ({ICML})}, 2009. 

\bibitem[Vavasis(2009)]{vavasis2009complexity}
S. Vavasis.
\newblock On the complexity of nonnegative matrix factorization.
\newblock \emph{{SIAM} Journal on Optimization}, 20:\penalty0 1364--1377, 2009.

\bibitem[Whitt(1978)]{whitt78approximations}
W. Whitt.
\newblock Approximations of dynamic programs, {I}.
\newblock \emph{Mathematics of Operations Research}, 3\penalty0 (3):\penalty0
  231--243, 1978.

\bibitem[Wieland(1991)]{wieland91evolving}
A. Wieland.
\newblock Evolving neural network controllers for unstable systems.
\newblock In \emph{Proceedings of the International Joint Conference on Neural
  Networks}, 1991.

\bibitem[Xu et~al.(2005)Xu, Xie, Hu, and Lu]{xu2005kernel}
X. Xu, T. Xie, D. Hu, and X. Lu.
\newblock {Kernel Least-Squares Temporal Difference Learning}.
\newblock \emph{Information Technology}, pages 54--63, 2005.

\bibitem[Ye(2011)]{ye2011simplex}
Y. Ye.
\newblock The simplex and policy-iteration methods are strongly polynomial for
  the {M}arkov decision problem with a fixed discount rate.
\newblock \emph{Mathematics of Operations Research}, 36\penalty0 (4):\penalty0
593--603, 2011.

\end{thebibliography}

\end{document}